\newcommand{\rot}{\intercal}
\newtheorem{thm}{Theorem} 
\newtheorem{lem}[thm]{Lemma}
\newtheorem{pro}[thm]{Proposition}
\newtheorem{cor}[thm]{Corollary}
\newcommand{\R}{\mathbb{R}}
\renewcommand\Im{\operatorname{Im}}
\newcommand{\thmref}[1]{Theorem~\ref{thm:#1}}
\newcommand{\lemref}[1]{Lemma~\ref{lem:#1}}
\newcommand{\figref}[1]{Figure~\ref{fig:#1}}
\newcommand{\eq}[1]{equation~\eqref{eq:#1}}
\DeclareMathOperator{\tr}{tr}
\newcommand{\eps}{\epsilon}
\newenvironment{fminipage}%
  {\begin{Sbox}\begin{minipage}}%
  {\end{minipage}\end{Sbox}\fbox{\TheSbox}}
\newcommand{\noteMC}[1]{\textbf{\textcolor{cyan}{{\normalsize{[Mihai:}}#1]}}}
\newcommand{\cip}{\text{CI}}
\newcommand{\cis}{\text{CI}^{\text{size} } }
\newcommand{\civ}{\text{CI}^{\text{vol} } }
\newcommand{\cistop}{\text{Top}\cis}
\newcommand{\civtop}{\text{Top}\civ}
\title{Hermitian matrices for clustering directed graphs: \\
insights and applications}
\author{%
  Mihai Cucuringu 
  \thanks{University of Oxford and The Alan Turing Institute, \texttt{mihai.cucuringu@stats.ox.ac.uk}}
  \and
  Huan Li
  \thanks{Fudan University, \texttt{huanli.me@gmail.com}}
  \and
  He Sun
  \thanks{University of Edinburgh, \texttt{h.sun@ed.ac.uk}}
  \and
  Luca Zanetti
  \thanks{University of Cambridge, \texttt{luca.zanetti@cl.cam.ac.uk}}
}
\begin{document}

\maketitle

\begin{abstract}
Graph clustering is a basic technique in machine learning, and has widespread applications in different domains. While spectral techniques have been successfully applied for clustering undirected graphs, the performance of spectral clustering algorithms for directed graphs~(digraphs) is not in general satisfactory: these algorithms usually require  symmetrising the matrix representing a digraph, and typical objective functions for undirected graph clustering   do not capture  cluster-structures in which the information given by the direction of the edges is crucial. 
To overcome these downsides,  we propose  a spectral clustering algorithm based on a complex-valued matrix representation of digraphs. We analyse its theoretical performance on a Stochastic Block Model for digraphs in which the cluster-structure is given not only by variations in edge densities, but also by the direction of the edges.
 The significance of our work is   highlighted  on a data set pertaining to internal migration in the United States: while previous spectral clustering algorithms for digraphs can only reveal that people are more likely to move between counties that are geographically close, our approach is able to cluster together counties with  a similar socio-economical profile even when they are geographically distant, and illustrates how people tend to move from rural to more urbanised areas.

\end{abstract}

\section{Introduction}

% \begin{figure*}[t]
% \captionsetup[subfigure]{skip=0pt}
% \begin{centering}
% \subcaptionbox{  \textsc{Naive} \label{fig:intronaive} }[0.65\columnwidth]{\includegraphics[width=0.66\columnwidth]{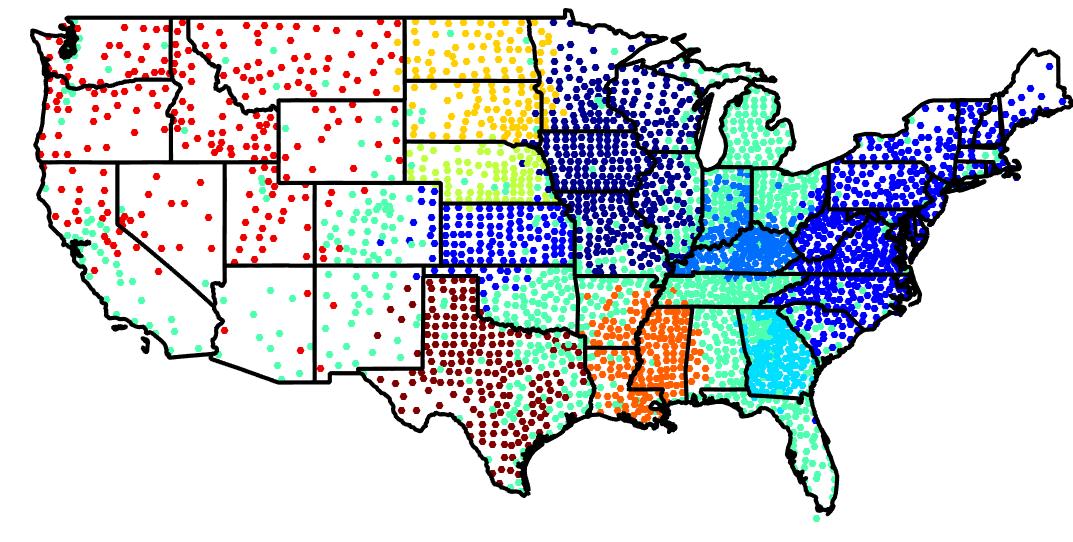}}
% \hspace{0.3cm}
% \subcaptionbox{  \textsc{Our method} \label{fig:introourcl} }[0.65\columnwidth]{\includegraphics[width=0.66\columnwidth]{Figures/instances_mig1_ClustObj/mig1_k10_HermRW__clust.jpg}}
% \hspace{0.3cm}
% \subcaptionbox{  \textsc{Our method: top pair} \label{fig:introourtop} }[0.65\columnwidth]{\includegraphics[width=0.66\columnwidth]{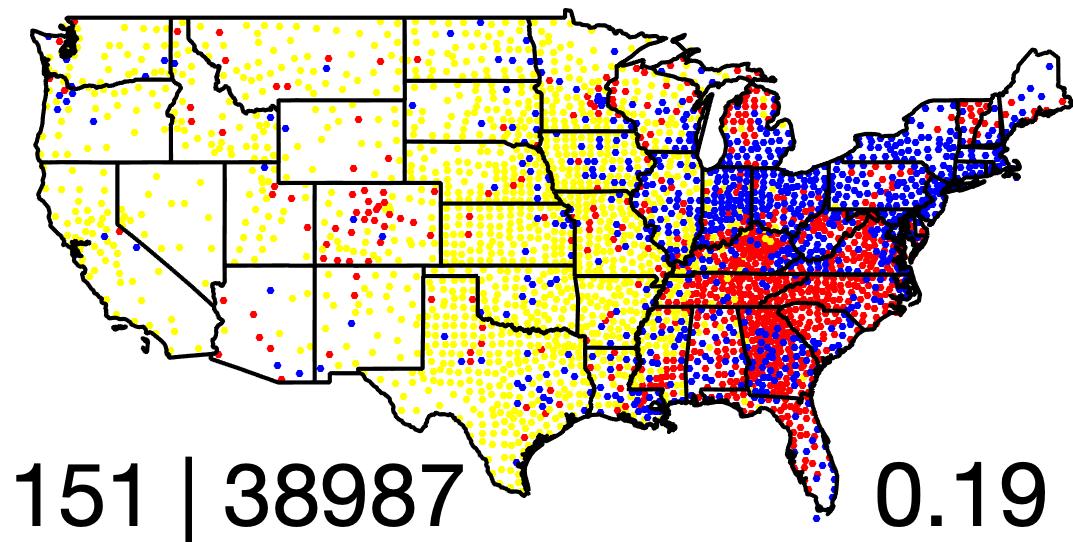}}
% \end{centering}
% \captionsetup{width=0.99\linewidth}
% \caption{Visualisation of the clustering obtained on a US migration data set}
% \label{fig:intro}
% \end{figure*}

% 0.32 

Clustering is one of the most important techniques in analysing massive data sets, and has numerous applications ranging from machine learning to computer vision, from network analysis to social sciences. When the underlying  graph to cluster is undirected, the objective is  to partition the vertices of the graph into  clusters such that vertices within the same cluster are on average better connected to one another than vertices belonging to different clusters. This notion can be formalised by introducing an objective function to minimise, such as the conductance or the normalised cut value~\cite{hoc,ShiM00}. For example, the widely used spectral clustering algorithm~\cite{Ng,luxburg07}, which uses the top eigenvectors of the adjacency   matrix of a graph as input features for $k$-means, essentially exploits a convex relaxation of the normalised cut to obtain a good partitioning of the graph. %\todo{This is a comment that will appear in the margin}
%Understanding the quality of such partitioning has been one of the main goals in spectral graph theory~\cite{sgt,hoc,PSZ17,ST07} and has many applications in designing other combinatorial optimisation problems \textcolor{blue}{(refs)}.

However, when the underlying graph  is directed, the normalised cut value and other clustering metrics based on edge-density often fail to uncover many of the significant patterns in a graph. For instance, 
let us consider a graph  representing the number of people moving between different counties in the (mainland) United States during 1995-2000~\cite{census,census_rep}. % This graph is inherently directed. In fact, i
If one tries to symmetrise its (asymmetric) adjacency matrix $M$ in a naive way by considering the symmetric matrix $M + M^{\rot}$,  % and then apply spectral clustering, 
migration flows between counties in different states will be lost in the process.
% it becomes invisible to see the  migration blows between different counties, which is crucial in analysing such kind of graphs.
% 
Indeed, when considering the outcome of spectral clustering on   $M + M^{\rot}$ of this  migration data set as input, the visualisation in Figure~\ref{fig:intronaive} shows that clusters 
% \noteMC{do  you mean strongly correlate between their} strong correlate tween their state lines 
% \noteMC{or 
align particularly well with the political and administrative boundaries of the US states,    
% }, 
as observed in \cite{belgium2010}. This is,  somehow counterintuitively, an unsatisfactory outcome: it is quite obvious that people are more likely to move to neighbouring counties than to far away ones, and it does not provide us with much information about higher-order migration patterns across the country. 

\begin{figure}
\captionsetup[subfigure]{skip=0pt}
\begin{centering}
\subcaptionbox{  \textsc{Naive} \label{fig:intronaive} }[0.33\columnwidth]{\includegraphics[width=0.33\columnwidth]{Figures/instances_mig1_ClustObj/mig1_k10_Naive__clust.jpg}}
%\hspace{0.3cm}
\subcaptionbox{  \textsc{Our method} \label{fig:introourcl} }[0.33\columnwidth]{\includegraphics[width=0.33\columnwidth]{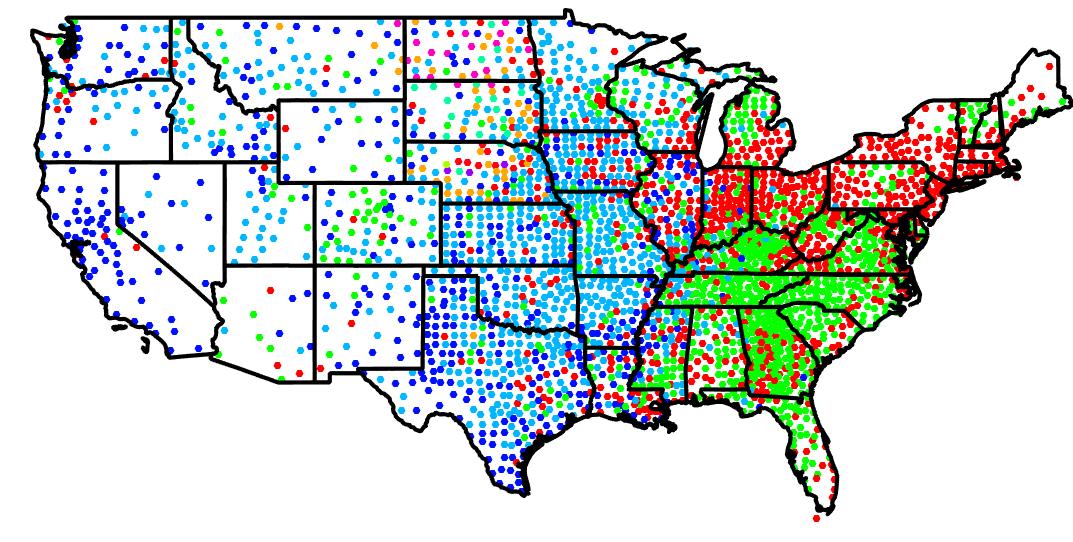}}
%\hspace{0.3cm}
\subcaptionbox{  \textsc{Our method: top pair} \label{fig:introourtop} }[0.33\columnwidth]{\includegraphics[width=0.33\columnwidth]{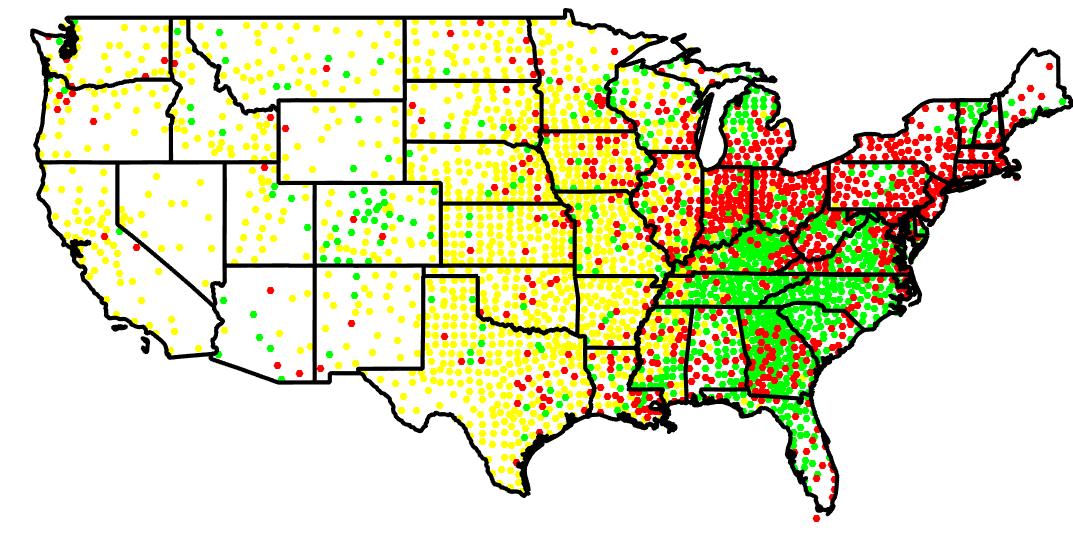}}
\end{centering}
\vspace{0mm}
\captionsetup{width=0.99\linewidth}
\caption{ \small Visualisation of the clustering obtained on a US migration data set: (a) spectral clustering on the symmetrised matrix $M+M^{\rot}$, and (b) our procedure. The red and green clusters highlighted in (c) are such that 68\% of the total weight of the edges between the two clusters is oriented from the green to the red one. %\textcolor{red}{The explanation here is a bit confusing.}
} 
\label{fig:intro}
\end{figure}

Motivated by this example, we study spectral clustering algorithms for digraphs based on a \emph{complex-valued} Hermitian adjacency matrix representations considered in \cite{GM17,SingerVDM} and defined as follows:
 for any $N$-vertex digraph $G$, the Hermitian adjacency matrix $A\in\mathbb{C}^{N\times N}$ of $G$ is the matrix where $A_{u,v}=\overline{A_{v,u}}=i$ if there is a directed edge $u\leadsto v$, and $A_{u,v}=0$ otherwise, where $i$ is the imaginary unity. Because of the use of $i$ and its conjugate $\overline{i}$  in expressing a directed edge,
all the eigenvalues of 
  $A$  are real-valued. We  show that, when the 
  edge directions  impart a cluster-structure  on $G$, this structure is approximately encoded in the eigenvectors associated with the top eigenvalues of $A$. 
    % \noteMC{New paragraph here maybe?} 
To  demonstrate the significance  of our Hermitian adjacency matrix,  Figure~\ref{fig:introourcl} visualises the outcome of spectral clustering when $A$ is used to encode the migration data set.  It is clear such clustering is much less correlated with state boundaries than the one from Figure~\ref{fig:intronaive}. Furthermore, in Figure~\ref{fig:introourcl}  we can observe  several interesting migration patterns emerging, especially  when considering pairs of clusters exhibiting a large ``imbalance'' in the  direction of the edges between them. The pair with the largest such imbalance (which we formalise in a later section) is shown in Figure \ref{fig:introourtop}, showcasing that people tend to move from counties in green towards counties in red. In particular, Figure \ref{fig:introourtop} highlights a migration pattern around the East Coast, where people tend to move from, for example, Virginia and North and South Carolina to geographically distant areas such as the New York metropolitan area, Chicago, and the East side of Florida. From this perspective, while previous algorithms identify different clusters based on the relations between vertices in a cluster and vertices outside  a cluster, our algorithm uncovers ``higher-order'' structures between  clusters. 
%a migration pattern where people tend to move from, for example, Virginia and North and South Carolina to geographically distant areas such as the New York metropolitan area, Chicago, but also Texas and California. From this perspective, while previous algorithms identify different clusters based on the relations between vertices in a cluster and vertices outside of a cluster, our algorithm uncovers the ``higher-order'' structure between the clusters.
We highlight that, in contrast to all the previous spectral algorithms for digraphs we experimented with, only our approach is able to uncover such patterns in this data set.

Our contributions and the organisation of this paper are as follows.    In Section~\ref{sec:model} we generalise the classical stochastic block model~(SBM) to the setting of digraphs, and  propose a  directed stochastic block model~(DSBM) with a latent structure defined with respect to imbalanced cuts between the clusters. In contrast to the classical SBM, the additional parameters of our model  are used to assign different probabilities to the directions of the edges across different clusters. As graphs from the DSBM possess a ground truth clustering, this model will be used to analyse the theoretical and experimental performances of our algorithm. 
In Section~\ref{sec:algo}  we present a spectral clustering algorithm for digraphs, and compare our algorithm with previous approaches.   To convince the reader of the  effectiveness of our algorithm, in Section~\ref{sec:analysis} we provide theoretical guarantees for our algorithm when applied to a broad class of DSBMs. Complementing the theoretical analysis of our proposed algorithm, in Section~\ref{sec:Experiments} we empirically demonstrate its practicality, and compare its performance against several competing approaches on synthetic and real-world data sets. Proofs of the theoretical results of Section~\ref{sec:analysis} and additional experimental results can be found in the appendix.

\vspace{-1mm}
\paragraph{Notation.}
For any unweighted and  directed graph $G$ with $N$ vertices, the Hermitian adjacency matrix of $G$ is the  matrix $A\in\mathbb{C}^{N\times N}$, where $A_{u,v} =\overline{A_{v,u}}=i$ if there is a directed edge from $u$ to $v$, expressed by $u\leadsto v$, and $A_{u,v}=0$ otherwise. When $G$ is a weighted digraph with weight $w_{u,v}$ on any edge $u \leadsto v$, we define  $A_{u,v} =(w_{u,v} - w_{v,u})i$. Notice that  $A$ is a Hermitian matrix, and therefore has $N$ real-valued eigenvalues $\{\lambda_j\}_{j=1}^N$. We order these eigenvalues 
% such that 
$|\lambda_1|\geq\ldots\geq|\lambda_N|$, and the eigenvector associated with $\lambda_j$ is denoted by $g_j\in\mathbb{C}^N$ with  $\| g_j\|=1$, for $1\leq j\leq N$. 
For any $y\in\mathbb{C}^N$, the complex conjugate of $y$ is expressed by $y^*$.   
For any Hermitian matrix $A$, the image of $A$ is denoted by $\mathrm{Im}(A)$ and the spectral norm of $A$ is denoted by $\|A\|$. We   use $\mathbf{1}_{k\times k}$ to express the $k\times k$ matrix where all the entries are $1$. For ease of discussion, we always label the clusters, as well as the rows and columns of the matrix $F\in\R^{k\times k}$ introduced later, from $0$ to $k-1$. %Finally, we write $a \gg b$ to indicate that it holds $a > C \cdot b$ for a large absolute constant $C > 0$.

%For the pattern matrix $F\in\mathbb{R}^{k\times k}$ that will be introduced for the DSBM model, we express the eigenvalues of $F$ as $\rho_1\geq\ldots\geq\rho_k$. \textcolor{red}{Actually we use $\rho_i$ to express the eigenvalues of $\wildetilde{F}$, which is difficult to define here.}

%\subsection{Preliminaries and notation}

%For any directed graph $G$ with $N$ vertices and $j$-root of unit $\omega$ for some $j$, the Hermitian adjacency matrix  of $G$ is the matrix $A\in\mathbb{C}^{N\times N}$, where   
%\begin{align}
 %   (A)_{uv} \defeq
  %  \begin{cases}
        %\deg(u) & u = v \\
 %       \omega & u\leadsto v, \\
  %      \conj{\omega}\,& v\leadsto u, \\
  %      0 & \mathrm{otherwise}.
  %  \end{cases}
%\end{align}
%The eigenvalues of $A$ is expressed by $\lambda_1(A)\geq\ldots \lambda_N(A)$ with the corresponding eigenvectors $f_1,\ldots, f_N$, where $f_i\in\mathbb{C}^N$ for any $1\leq i\leq N$.

\definecolor{burlywood}{rgb}{0.87, 0.72, 0.53}
\definecolor{cadetblue}{rgb}{0.37, 0.62, 0.63}
		\definecolor{coolblack}{rgb}{0.0, 0.18, 0.39}
	\definecolor{darkcerulean}{rgb}{0.03, 0.27, 0.49}
	
\section{Directed stochastic block model\label{sec:model}}

%\textcolor{red}{He: I would suggest to smiply the model: instead of introducing $\{n_j\}$, we assume that the size of each cluster is $n$.} \textcolor{blue}{Luca: I agree. I would be fine even removing the parameter q.}

We study graphs generated from the directed stochastic block model~(DSBM) defined by  $k, n ,p,q$, and matrix $F\in[0,1]^{k\times k}$, where $k\geq 2$ represents the number of clusters, $n$ the number of vertices in each cluster,  $p\in [0,1]$ the probability there is an edge between two vertices within the same cluster, $q\in [0,1]$ the probability there is an edge between two vertices belonging to two different clusters, while $F\in [0,1]^{k\times k}$  controls the edge orientations 
 among  clusters and satisfies  
$F_{\ell, j }+ F_{j, \ell}=1$ for any $0\leq \ell, j\leq k-1$. This  implies that $F_{\ell,\ell}=1/2$ for any $0\leq \ell\leq k-1$.
The set $\mathcal{G}\left(k,n,p,q,F\right)$ consists of graphs $G$ generated as follows: every $G\in \mathcal{G}$ is a directed graph defined on vertex set $V=\{1,\ldots, N\}$, where $N=k\cdot n$. These vertices belong to $k$ clusters $C_0,\ldots, C_{k-1}$, where $|C_j|=n$ for $0\leq j\leq k-1$. For any pair of vertices $\{u,v\}$, if they belong to the same cluster,  they are connected by an edge with probability $p$; otherwise,   they are connected with probability $q$. Moreover, if $u\in C_{\ell}$ and $v\in C_j$ are connected, the direction of this edge is determined  by $F$: the direction is set to be $u\leadsto v$ with probability $F_{\ell, j}$, and $v\leadsto u$ with probability $F_{j,\ell} = 1 - F_{\ell, j}$. By   definition,  the direction of an edge inside a cluster is chosen uniformly at random.
The matrix $F$ can be viewed as the adjacency matrix of a weighted directed graph which represents the \emph{meta-graph} describing the relations between the clusters.
The   example  below explains the roles of these parameters.

\emph{Example.} Let $k=3$,   $p=q$, and 
\[
F= \left(\begin{array}{ccc} 1/2 & 2/3 &  1/3 \\1/3 & 1/2 & 2/3 \\
2/3 & 1/3 & 1/2 \end{array}\right)
\]
 \begin{wrapfigure}[2]{r}{0.16\textwidth}
 \vspace{-3cm}
\begin{tikzpicture}[scale=0.27] 
\draw [cadetblue,  thick] (0,0) circle [radius=1.8];
\draw [red,  thick] (3,5.2) circle [radius=1.8];
\draw [darkcerulean,  thick] (6,0) circle [radius=1.8];

%% Cluster 1
%\draw [thick, ->]  (-0.6,0) -- (0.6,0);

\draw [thick, ->]  (-0.6,0.4) -- (0.6,0.4);

\draw [thick, <-]  (-0.6,-0.4) -- (0.6,-0.4);

\draw [thick, <-]  (-0.6,1.2) -- (0.6,1.2);

\draw [thick, ->]  (-0.6,-1.2) -- (0.6,-1.2);

%% Cluster 2
%\draw [thick, ->]  (-0.6,0) -- (0.6,0);

\draw [thick, <-]  (5.4,0.4) -- (6.6,0.4);

\draw [thick, <-]  (5.4,-0.4) -- (6.6,-0.4);

\draw [thick, ->]  (5.4,1.2) -- (6.6,1.2);

\draw [thick, ->]  (5.4,-1.2) -- (6.6,-1.2);

 3
%\draw [thick, ->]  (-0.6,0) -- (0.6,0);

\draw [thick, <-]  (2.4,5.6) -- (3.6,5.6);

\draw [thick, <-]  (2.4,4.8) -- (3.6,4.8);

\draw [thick, ->]  (2.4,6.4) -- (3.6,6.4);

\draw [thick, ->]  (2.4,4.2) -- (3.6,4.2);

%% edges from C1 to C2

\draw [thick, ->]  (2.1,0.2) -- (3.9,0.2);
\draw [thick, <-]  (2.1,-0.4) -- (3.9,-0.4);
\draw [thick, <-]  (2.1,-1) -- (3.9,-1);

%% edges from C2 to C3

\draw [thick,  rotate around={60:(0,0)}, ->]  (2.1,0.8) -- (3.9,0.8);
\draw [thick, rotate around={60:(0,0)}, ->]  (2.1,0.2) -- (3.9,0.2);
\draw [thick, rotate around={60:(0,0)}, <-]  (2.1,-0.4) -- (3.9,-0.4);

\begin{scope}[shift={(-3,5.2)}]
%% edges from C3 to C1

\draw [thick,  rotate around={300:(0,0)},->]  (5.1,6) -- (6.9,6);
\draw [thick, rotate around={300:(0,0)}, ->]  (5.1,5.4) -- (6.9,5.4);
\draw [thick, rotate around={300:(0,0)}, <-]  (5.1,4.8) -- (6.9,4.8); 
\end{scope}

\end{tikzpicture} 
\caption{\label{pattern}}   
\end{wrapfigure}
In this case, $G$ consists of $3$ clusters $C_0, C_1$ and $C_2$ of equal size, and any pair of vertices is connected by an edge with the same probability $p$. The directions of the edges inside a cluster are chosen uniformly at random, but directions of the edges crossing different clusters are chosen non-uniformly according to $F$.
In particular, in expectation two thirds of the edges  between  $u\in C_{j}$ and $v\in C_{j+1 \bmod 3}$ are set to be $u\leadsto v$, and the remaining one third is set to be $v\leadsto u$, as shown in Figure~\ref{pattern}. We notice that this ``cyclic flow structure'' of the edges across different clusters is particularly interesting, since in expectation all the vertices in $G$ have the same in- and out-degrees, and the cluster-structure of $G$ cannot be easily identified by the vertices' degree distribution.

% Note that 
Our model can be viewed as a  generalisation of the classical SBM~\cite{sbm} into the setting of directed graphs. As a special case of our model, when  $F_{\ell, j}=1/2$  for $0\leq \ell, j\leq k-1$, the edge directions play no role in defining a cluster-structure, and the  clusters 
are completely determined by $p$ and $q$, which is exactly the case for the SBM. On the other hand, the DSBM   captures the setting where $p=q$ and the cluster structure is determined exclusively by the directions of the edges. 
%While the cluster structure is indistinguishable by ignoring the edge directions, we will show that 
%Hermitian adjacency matrices  can be used to recover the cluster structure of graphs generated from the DSBM.
We remark that   our proposed  DSBM  is a special case of the  co-SBM~\cite{co-clustering}, which also includes bipartite structures. We think, however, that  what is lost by our model in generality is gained in clarity and simplicity.

% in the sense that, when $\Delta_{\ell, j}=1/2$ for any $0\leq j, \ell\leq k-1$, the cluster structure is essentially determined by the values of $p$ and $q$. However,  comparing with the stochastic block model where a gap between $p$ and $q$ is needed to approximately recover the clusters~\textcolor{red}{[add reference]},  we will prove that  the clusters of a graph generated from $\mathcal{G}$  can be approximated recovered as long as the clusters are well-defined based on the directions of the edges from $\Delta$, even if $p=q$.

%We remark that, comparing with the classical stochastic block model~\textcolor{red}{[add reference]}, we did not introduce the parameter $q$ defining the probability of connecting vertices from different clusters for the case of analysis. Notice that, while the studies for stochastic block model focuses on the possibility of approximately/perfectly recovering the clusters based on the relations of $p$ versus $q$, our  primary focus is to study the cluster structure of the graphs with the same edge densities, and approximation algorithms for finding these clusters based on the directed edges' information. We'll leave the work on the impact of such parameter $q$ for our directed stochastic block model for further studies. 

\section{Algorithm \label{sec:algo}}

Now we describe a spectral clustering algorithm 
for graphs generated from the DSBM.  
Given a graph $G= (V,E)$ generated from the DSBM  $\mathcal{G}\left(k,n,p,q,F\right)$, our algorithm first computes the eigenvectors $g_1,\dots,g_{\ell}$ corresponding to the eigenvalues $\lambda_j$ satisfying $|\lambda_j |\geq \eps$ for some parameter $\eps$.
Secondly, the algorithm  constructs a matrix $P$ which is the projection matrix on the subspace spanned by $g_1,\dots,g_{\ell}$, and applies  $k$-means  with the rows of $P$ as input features.\footnote{We remark that using the $nk$-dimensional embedding given by the rows of $P$ is analogous to using the $\ell$-dimensional embedding given by the rows of $U$, where $U$ is the  eigendecomposition of $P=U U^{\rot}$.}
Finally, the algorithm partitions the vertex set of $G$ based on the output of  $k$-means. See Algorithm~\ref{alg:spectral}.% for a formal description. 

      \begin{algorithm}
       \caption{Spectral clustering for digraphs\label{alg:spectral}}
       \begin{algorithmic}[1]
         \Require   directed graph $G = (V,E)$ with Hermitian adjacency matrix $A$; $k \ge 2$; $\eps > 0$
         
         \State    Compute  the eigenpairs $\{(\lambda_i,g_i) \}_{i=1}^{\ell}$
				 of $A$ with $|\lambda_i| > \eps$. 
				 
		 \State   $P \gets \sum_{j=1}^{\ell} g_jg_j^*$
		 
		 \State  Apply a $k$-means algorithm with input the rows of $P$.
		 
		 \State Return a partition of $V$ based on the output of $k$-means.
          \end{algorithmic}
      \end{algorithm}

% \begin{wrapfigure}{L}{0.7\textwidth}
% \vspace{-0.2cm}
%     \begin{minipage}{0.7\textwidth}
%       \begin{algorithm}[H]
%       \caption{Spectral clustering for digraphs\label{alg:spectral}}
%       \begin{algorithmic}[1]
%          \Require   directed graph $G = (V,E)$ with Hermitian adjacency matrix $A$; $k \ge 2$; $\eps > 0$
         
%          \State    Compute  the eigenpairs $\{(\lambda_i,g_i) \}_{i=1}^{\ell}$
% 				 of $A$ with $|\lambda_i| > \eps$. 
				 
% 		 \State   $P \gets \sum_{j=1}^{\ell} g_jg_j^*$
		 
% 		 \State  Apply a $k$-means algorithm with input the rows of $P$.
		 
% 		 \State Return a partition of $V$ based on the output of $k$-means.
%           \end{algorithmic}
%       \end{algorithm}
%     \end{minipage}
%   \end{wrapfigure}
We remark  that the number $\ell$ of eigenvectors used by the algorithm   depends on the parameters of the model, and in particular on the rank of  $F$ which defines the direction of the edges among different clusters.  In general, $\ell \le k$,
but for practical purposes one can simply set  $\ell = k$.\footnote{More precisely, we recommend setting $\ell = k-1$ when $k$ is odd, since in this case $F$ is always rank-deficient.} However, to obtain the optimal theoretical guarantees, at least for the case of $p=q$, we  set $\eps = 10 \sqrt{pn \log(pn)}$, whose value can be easily estimated with high probability since the average degree in the graph concentrates around $pkn$ when $p\gg 1/n$. As it will become clear from our following analysis, in this way $\ell$ is   set as the rank of $F$, without the need to actually know $F$. We also notice that including all the eigenvectors corresponding to the same eigenvalue in absolute value ensures that $P$ is a \emph{real} matrix. This follows from the fact that $A$ is not only Hermitian, but also skew-symmetric.

\vspace{-1mm}
\paragraph{Comparison with other spectral methods.}
We compare our algorithm with other spectral methods for digraph clustering that are based on  the classical real-valued adjacency matrix $M$ of an unweighted digraph $G=(V,E)$, defined as follows:
for any pair of vertices $u,v$, $M_{u,v}=1$ if $u \leadsto v$ and $M_{u,v}=0$ otherwise. While 
Algorithm~\ref{alg:spectral}
  exploits the top eigenvectors of the Hermitian adjacency matrix $A = (M - M^{\rot}) \cdot i$, previous spectral clustering algorithms for directed graphs~\cite{MV13,co-clustering,SP11} typically use eigenvectors of $M^{\rot} M$, $M M^{\rot}$, or $M^{\rot} M + M M^{\rot}$ (or a  regularised version of these matrices). To compare our algorithm with previous ones,  notice that  for any $u,v\in V$ these matrices' corresponding entries can be written as  
\begin{align}
    (M^{\rot} M)_{uv} &= |\{ w \colon w \leadsto u \text{ and } w \leadsto v \}|, \label{eq:mleft} \\
    (M M^{\rot})_{uv} &= |\{ w \colon u \leadsto w \text{ and } v \leadsto w \}|, \label{eq:mright} \\
    (M^{\rot} M + M M^{\rot})_{uv} &=|\{ w \colon w \leadsto u \text{ and } w \leadsto v \}|  
         + |\{ w \colon u \leadsto w \text{ and } v \leadsto w \}|. \label{eq:mlr}
\end{align}
By definition, $M^{\rot} M$ keeps track of the common ``parents'' between two vertices, $M M^{\rot}$ of the common ``offspring'', while their sum of both. To draw a direct comparison, we study the matrix $A^2$, since 
$A$ and $A^2$  share the same eigenvectors and  $A^2$ is easier to analyse.
By definition, we have that 
\begin{align*}
    A^2_{uv} &= |\{ w \colon (w \leadsto u  \text{ and } w \leadsto v) \text{ or } (u \leadsto w \text{ and } v \leadsto w) \}| \\
        &\qquad\qquad - |\{ w \colon (u \leadsto w \text{ and } w \leadsto v) \text{ or } (w \leadsto u \text{ and } v \leadsto w) \}|,
\end{align*}
which implies that  $A$ keeps track of both common parents and offspring of two vertices $u,v$, while assigning a penalty for every node $w$ that is simultaneously a parent of $u$ and an offspring of $v$, or vice versa. Hence, $A$ implicitly assigns a positive weight between a pair of vertices who have more common parents and offspring than ``mismatched'' relations with  a third vertex, and a negative weight otherwise. This peculiar behaviour is  at the heart of the better performances of our algorithm on some real-world data sets compared to the state-of-the-art.
Moreover, it is worth mentioning that $A$ can implicitly keep track of both common parents and offspring without the need to perform expensive matrix multiplications as in the case of   $M^{\rot} M + M M^{\rot}$.

\vspace{-1mm}
\paragraph{Normalisation of $A$.}
When dealing with real-world data sets, a proper normalisation of the graph adjacency matrix is usually required. 
% Hence, we introduce 
For a diagonal matrix $D$, with $D_{jj} = \sum_{\ell=1}^{N} | A_{j\ell} |$, we define  
\begin{equation}
    A_{\mathrm{rw}} = D^{-1} A,
\label{def:Lrw}
\end{equation} 
which is similar to the Hermitian matrix 
$
    A_{\mathrm{sym}} = D^{-1/2} A  D^{-1/2}
$ and has   $N$ real eigenvalues. The operator \eqref{def:Lrw} was studied  in the context of  angular synchronisation and the graph realisation problem~\cite{asap2d}, and  in  
\cite{SingerVDM}, which introduced Vector Diffusion Maps for nonlinear dimensionality reduction. %and explored the interplay with the connection-Laplacian operator for vector fields over manifolds.  
We also notice that these Hermitian operators have 
  been successfully used in the ranking literature. %, where the net outcome of pairwise matches  between players can be encoded in a digraph with a skew-symmetric adjacency matrix. 
  In particular, \cite{syncRank} formulated the ranking problem as an instance of the group synchronisation problem, considered an angular embedding of $M-M^{\rot}$ and relied on the top eigenvector of $A_{\mathrm{rw}}$ to recover a   one-dimensional
  ordering of the players. 

%Recently, \cite{FANUEL2017JACHA} proposed  a certain deformation of the combinatorial Laplacian,  % in particular, the \textit{Dilation Laplacian}, which is shown to perform well for ranking in directed networks of pairwise comparisons. 

% \textcolor{red}{I'm not sure if it's very related. I sugest to drop the last paragraph completely}

\section{Analysis}
\label{sec:analysis}
We now analyse the performance of
Algorithm~\ref{alg:spectral} on the DSBM. Let $G \sim \mathcal{G}\left(k,n,p,q,F\right)$ with Hermitian adjacency matrix $A$. For simplicity, we assume that  $p=q$. We remark that this condition does not simplify the problem, since in this case edge densities do not give us any information on the cluster-structure of the graph, which is entirely determined by the edge orientations.
We first study the expected adjacency matrix $\mathbb{E}A$. For any $u\in C_{j}$ and $v\in C_{\ell}$, we have  that $(\mathbb{E}A)_{u,v} = p\left( F_{j,\ell} - F_{\ell,j} \right)\cdot i = p \left( 2 F_{j,\ell}-1 \right)\cdot i$.
This implies that $\mathbb{E}A$ is a Hermitian matrix and can be decomposed into $k\times k$ blocks. Moreover, the rank of $\mathbb{E}A$ is at most $k$. To analyse the spectral property of $\mathbb{E}A$, we define the matrix 
$ \widetilde{F}= \left(2F - \mathbf{1}_{k\times k} \right)\cdot i.$
Observe that, if $\widetilde{\lambda}\in\mathbb{R}$ is an eigenvalue of $\widetilde{F}$ with the corresponding eigenvector $\widetilde{f}\in\mathbb{C}^k$, then $\widetilde{\lambda} pn$ is an eigenvalue of $\mathbb{E}A$ with eigenvector $f\in\mathbb{C}^{kn}$ where $f(u)=\widetilde{f}(j)$ for any $u\in C_{j}$. %\textcolor{red}{To Luca: I changed $\lambda ipn$ to $\lambda pn$} \textcolor{red}{To Luca: The definition of $\widetilde{F}$ is different from the one below from proposition 2. They differ by a factor of $i$.} \textcolor{blue}{We can reconcile them: Prop. 2 still holds if you multiply $\widetilde{F}$ by $\i$.}

%We first study the expected adjacency matrix $\E{A}$. Let $u \in C_i, v \in C_j$. Then, $\E{A}_{u,v} = p(\Delta_{i,j} - \Delta_{j,i})\i = p(2\Delta_{i,j} - 1)\i$. $\E{A}$ is clearly Hermitian  and can be decomposed into $k \times k$ blocks. Therefore, it has rank at most $k$. To understand the spectral property of $\E{A}$ we define a new $k \times k$ matrix $\widetilde{\Delta} = \i\left(2\Delta - \mathbf{1}_{k \times k}\right)$. We can see that if $\lambda \in \mathbb{R}$ is an eigenvalue of $\Delta$ with corresponding eigenvector $f \in \mathbb{C}^k$, then $\lambda \i p n$ is an eigenvalue for $\E{A}$ with corresponding eigenvector $\widetilde{f} \in  \mathbb{C}^{kn}$ such that $\widetilde{f}(u) = f(i)$ for any $u \in C_i$.

Now we explain why Algorithm~\ref{alg:spectral} works for graphs generated from the DSBM. Note that, if $A$ is close to  $\mathbb{E}A$, which is the case for most instances, then the projection on the top eigenspaces of $A$ will be close to   $P_{\Im(\widetilde{F})}\otimes \mathbf{1}_{n\times n}$, where $P_{\Im(\widetilde{F})}$ is the projection on $\mathrm{Im}(\widetilde{F})$. %Because of this and the fact that vectors in $\mathrm{Im}(\widetilde{F}\otimes \mathbf{1}_{n\times 1})$ always assign the same value to vertices within the same cluster, 
  Therefore, it suffices to ensure that $P_{\Im(\widetilde{F})}$ is actually able to distinguish different clusters. 
  %Note that this is the case when $\widetilde{F}$ is nonsingular. However, being a skew-symmetric matrix, $\widetilde{F}$ is singular whenever $k$ is odd.
Because of this,
%\textcolor{orange}{$\theta$-distinguishing is to measure how close we are to the case where two clusters are indistingishable right? It's not immediate to me its relation to singualrity} 
we introduce the notion of $\theta$-distinguishing image to ensure that the rows of   $P_{\Im(\widetilde{F})}$ are not similar to each other. Formally, for any $\theta\in[0,1]$, we say that  $\widetilde{F}$ has a \emph{$\theta$-distinguishing image},
if it holds for any $0 \le j \neq \ell \le k-1$ that $\big\|P_{\Im(\widetilde{F})}(j,\cdot) - P_{\Im(\widetilde{F})}(\ell,\cdot)\big\| \ge \theta$. Moreover, we say that $\widetilde{F}$ has a \emph{nondistinguishing} image if the previous equation holds only for $\theta = 0$. Proposition~\ref{prop2} below   shows that $ \widetilde{F} $ has a nondistinguishing image if and only if $F$ has two identical rows. When $p=q$, this condition implies every graph generated from the DSBM  has two statistically indistinguishable clusters.

%To clarify the importance of this definition, consider the following lemma which shows that the image of $\widetilde{\Delta}$ is $0$-distinguishing if and only if $\Delta$ has two identical rows. If $p=q$, this means the model has two statistically undistinguishable communities.
 
\begin{pro}\label{prop2}
Let $G \sim \mathcal{G}\left(k,n,p,q,F\right)$. Then, the matrix $\widetilde{F}$ defined by $\widetilde{F} = \left( 2F - \mathbf{1}_{k \times k}\right)\cdot i$ has a nondistinguishing image if and only if there exist $0 \le j \ne \ell \le k-1$ such that  $F(j,\cdot) = F(\ell,\cdot)$.
\end{pro}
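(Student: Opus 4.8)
The plan is to reduce the claim to an elementary statement about the kernel of the real matrix $B \defeq 2F - \mathbf{1}_{k\times k}$. First I would record the structural facts about $\widetilde{F}$. Since $F_{j,\ell}+F_{\ell,j}=1$ for all $j,\ell$, the matrix $B$ is real and skew-symmetric ($B^{\rot}=-B$), so $\widetilde{F}=iB$ is Hermitian, as already observed in the text. Because $\{iB x : x\in\mathbb{C}^k\}=\{B(ix):x\in\mathbb{C}^k\}=\{By:y\in\mathbb{C}^k\}$, we get $\Im(\widetilde{F})=\Im(B)$, the complex column span of $B$. Consequently the orthogonal projection $P \defeq P_{\Im(\widetilde{F})}$ coincides with the orthogonal projection onto the column span of the real matrix $B$; in particular $P$ is real and symmetric, and $\ker P = \Im(B)^{\perp} = \ker(B^{\rot}) = \ker B$, where the last equality uses skew-symmetry.

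The heart of the argument is then a short chain of equivalences, carried out for a fixed pair $0\le j\neq \ell\le k-1$. Since $P$ is real symmetric, its $m$-th row is $(Pe_m)^{\rot}$, so $P(j,\cdot)=P(\ell,\cdot)$ holds iff $P(e_j-e_\ell)=0$, i.e.\ iff $e_j-e_\ell\in\ker P=\ker B$, i.e.\ iff $Be_j=Be_\ell$. The latter says exactly that columns $j$ and $\ell$ of $B$ agree, which by the definition $B=2F-\mathbf{1}_{k\times k}$ is equivalent to $F_{m,j}=F_{m,\ell}$ for every $m$. Finally, invoking $F_{m,j}=1-F_{j,m}$ and $F_{m,\ell}=1-F_{\ell,m}$, this is equivalent to $F_{j,m}=F_{\ell,m}$ for every $m$, that is, to $F(j,\cdot)=F(\ell,\cdot)$.

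To finish, I would unfold the definitions: $\widetilde{F}$ has a nondistinguishing image precisely when $\min_{j\neq\ell}\big\|P(j,\cdot)-P(\ell,\cdot)\big\|=0$, i.e.\ when some pair of distinct rows of $P$ coincide; by the equivalence above this happens iff some pair of distinct rows of $F$ coincide, which is the stated condition, and both implications follow at once. The only step demanding care---and the one I expect to be the main obstacle---is the identification $\ker P=\ker B$ together with the reality of $P$: one must verify that passing from the Hermitian $\widetilde{F}=iB$ to the real skew-symmetric $B$ preserves the relevant kernel and image, and that the orthogonal projection onto the complexification of a real subspace is itself a real matrix. Everything after that is routine bookkeeping with the constraint $F+F^{\rot}=\mathbf{1}_{k\times k}$.
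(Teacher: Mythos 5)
Your proof is correct and rests on the same mechanism as the paper's: testing the vector $e_j - e_\ell$ against $\ker P_{\Im(\widetilde{F})} = \ker \widetilde{F}$ and transferring column equality of $\widetilde{F}$ (equivalently of $B = 2F - \mathbf{1}_{k\times k}$) to row equality of $F$ via the constraint $F + F^{\rot} = \mathbf{1}_{k\times k}$. The only difference is cosmetic: you run this single chain of equivalences in both directions, whereas the paper handles the ``identical rows of $F$'' direction separately by a symmetry/automorphism argument; your unified version is, if anything, slightly cleaner.
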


%\textcolor{orange}{Should $\theta$ be the largest value satisfying that conditon? Otherwise every $\widetilde{F}$ has a 0-distinguishing image}

Our   analysis  is based on matrix perturbation theory, and requires that the nonzero eigenvalues of $\widetilde{F}$ are far from $0$ in order to ensure that projection on the the top eigenspaces of $A$ is close to $P_{\mathrm{Im}(\widetilde{F})} \otimes \mathbf{1}_{n\times n}$. Hence, we define   the \emph{spectral gap} of   $\widetilde{F}$  by 
$ \widetilde{\rho} \triangleq \min_{1\le j \le k} \{ |\rho_j| \colon \rho_j \neq 0 \}$,
where $\rho_1,\ldots, \rho_k$ are the eigenvalues of  $\widetilde{F}$.  
Note that in the standard SBM a similar definition of spectral gap governs the performance of spectral clustering algorithms~(see, e.g., \cite[Corollary 3.2]{leiRinaldo}).  Theorem~\ref{thm:main} bounds the number of misclassified vertices by  Algorithm~\ref{alg:spectral} for graphs generated from the DSBM.

%\textcolor{red}{From the NIPS submission checklist, we need to include time and space analysis of the presented algorithm. We should restate Theorem 2.}\noteLZ{I don't think we should add this to the main theorem. After all it depends on the k-means algorithm used. If you want we can add a line explaining that we assume we are using a polynomial time algorithm for k-means with constant-approximation factor. BTW, we don't need to tick all the boxes in the reproducibility checklist. }
 
\begin{thm}[Main Theorem]
\label{thm:main}
Let $G \sim \mathcal{G}\left(k,n,p,q, F \right)$, where $p=q$. Assume that
\begin{equation}\label{eq:assumption}
\widetilde{\rho} \ge C \ (k/  \theta) \sqrt{ (1/pn)\ \log{n}}
\end{equation}
  holds for a large absolute constant $C$ and  $\widetilde{F}$ has a $\theta$-distinguishing image with $\theta>0$. Then,  with high probability, the number of misclassified vertices by Algorithm~\ref{alg:spectral} is $O\left( {k^2}/(\tilde{\rho}^{2}\ \theta^{2}\ p) \ \log{n}\right)$.
\end{thm}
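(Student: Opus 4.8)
The plan is to follow the familiar three-step template for spectral clustering on block models (concentration of the adjacency matrix, perturbation of the top eigenspace, and $k$-means rounding), but with the structural role usually played by $F$ taken over by $\widetilde{F}$. The first step is to identify the ``ideal'' embedding. Since $(\mathbb{E}A)_{u,v}=p\,\widetilde{F}_{j,\ell}$ for $u\in C_j,\,v\in C_\ell$, we have $\mathbb{E}A = p\,\bigl(\widetilde{F}\otimes \mathbf{1}_{n\times n}\bigr)$, whose nonzero eigenvalues are exactly $\{\rho_j\,pn:\rho_j\neq 0\}$ and whose orthogonal projection onto its top eigenspace is $\Pi = P_{\Im(\widetilde{F})}\otimes \tfrac{1}{n}\mathbf{1}_{n\times n}$. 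This $\Pi$ is real (the eigenvalues of the skew-Hermitian structure come in $\pm$ pairs), its rows are constant on each cluster, and for $u\in C_j$, $u'\in C_\ell$ with $j\neq\ell$ a direct computation gives $\|\Pi(u,\cdot)-\Pi(u',\cdot)\| = n^{-1/2}\,\|P_{\Im(\widetilde{F})}(j,\cdot)-P_{\Im(\widetilde{F})}(\ell,\cdot)\|\ge \theta/\sqrt{n}$ by the $\theta$-distinguishing hypothesis. Hence $\Pi$ already realises the ground-truth clustering, with inter-cluster centre separation $\Delta:=\theta/\sqrt{n}$; the rest of the argument controls how far the algorithm's embedding $P$ strays from $\Pi$.

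Second, I would establish concentration. Writing $E=A-\mathbb{E}A$, its off-diagonal entries are independent across unordered pairs (each edge and its orientation are sampled independently), mean-zero, bounded by $1$ in modulus and of variance $O(p)$, and $E$ is Hermitian. A matrix-Bernstein (or random-Hermitian-matrix) bound then yields $\|E\|=O(\sqrt{pn\log n})$ with high probability, which is exactly what motivates the threshold $\eps=10\sqrt{pn\log(pn)}$ and where the $\log n$ in the final bound originates. Combining this with Weyl's inequality and assumption~\eqref{eq:assumption} (which forces $\widetilde{\rho}\,pn \ge C(k/\theta)\sqrt{pn\log n}\gg \|E\|$), every nonzero eigenvalue of $\mathbb{E}A$ perturbs to an eigenvalue of $A$ of modulus $>\eps$, while every zero eigenvalue stays within $\|E\|<\eps$ of $0$. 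Thus Algorithm~\ref{alg:spectral} retains precisely the $\rank(\widetilde{F})$ eigenvectors spanning a perturbation of $\Im(\Pi)$, separated from the discarded part by a spectral gap $\gamma \gtrsim \widetilde{\rho}\,pn$.

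Third, I would apply the Frobenius Davis--Kahan theorem. Using $\rank(\Pi)\le k$ it gives $\|P-\Pi\|_F \le 2\sqrt{2}\,\sqrt{k}\,\|E\|/\gamma = O\bigl(\sqrt{k}\,\|E\|/(\widetilde{\rho}\,pn)\bigr)$, so $\|P-\Pi\|_F^2 = O\bigl(k\,\|E\|^2/(\widetilde{\rho}^2 p^2 n^2)\bigr) = O\bigl(k\log n/(\widetilde{\rho}^2 p n)\bigr)$. I would then invoke a standard misclassification lemma for $(1+\alpha)$-approximate $k$-means: when the ideal row-embedding $\Pi$ has $k$ centres separated by $\Delta$ and $\|P-\Pi\|_F$ is small, the number of misclassified vertices is $O\bigl(k\,\|P-\Pi\|_F^2/\Delta^2\bigr)$, the extra factor $k$ accounting for summing per-cluster errors and ruling out collapsed or spurious centres. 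Substituting $\Delta^2=\theta^2/n$ and the bound above yields $O\bigl(k^2\log n/(\widetilde{\rho}^2\theta^2 p)\bigr)$, as claimed.

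The main obstacle is this last step. The centre separation $\Delta=\theta/\sqrt{n}$ is vanishingly small, so converting the global Frobenius closeness $\|P-\Pi\|_F$ into a hard per-vertex misclassification count is delicate: one must control clusters whose estimated centres are corrupted and forbid two planted clusters from merging under the approximate solver. It is precisely the spectral-gap hypothesis~\eqref{eq:assumption} that guarantees $\|P-\Pi\|_F \ll \Delta$, making the counting argument go through; carefully lining up the constants, the high-probability events, and the Hermitian concentration bound for $E$ in the relevant sparsity regime constitutes the bulk of the technical work.
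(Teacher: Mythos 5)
Your outline follows the paper's proof almost step for step: a matrix-Bernstein bound on $\|A-\mathbb{E}A\|$ (\lemref{concentration}), Davis--Kahan to show the algorithm's projection $P$ is close to the population projection $Q=P_{\Im(\mathbb{E}A)}$ (\lemref{pqclose}), and a $k$-means counting argument, so the architecture is the right one. Two remarks on the details. First, your concentration bound $\|E\|=O(\sqrt{pn\log n})$ drops a $\sqrt{k}$: the matrix is $kn\times kn$ and the variance proxy is $\sigma^2\approx pkn$, so the correct bound (and the one the paper proves) is $O(\sqrt{pkn\log n})$; your final exponent of $k^2$ only comes out right because this undercount happens to cancel the extra factor of $k$ you insert into the rounding lemma (the standard Lei--Rinaldo-type statement bounds the number of misclassified vertices by $O\bigl(\|P-\Pi\|_F^2/\Delta^2\bigr)$ with no additional $k$). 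Second, where you invoke that rounding lemma as a black box --- and correctly flag it as the delicate step --- the paper proves it directly: it bounds the optimal $k$-means cost by $\tr(P-Q)^2$, shows the optimal centroids remain pairwise $\Omega(\theta)$-separated (else the cost would exceed that bound), and concludes that each misclassified vertex contributes $\Omega(\theta^2)$ to the cost under a constant-factor approximate solver. Your normalisation of the ideal embedding, with row separation $\theta/\sqrt{n}$ for the genuine projection $P_{\Im(\widetilde{F})}\otimes\tfrac{1}{n}\mathbf{1}_{n\times n}$, is in fact the more careful one; the paper works with separation $\theta$ and compensates by using the dimension $kn$ rather than the rank in $\tr(P-Q)^2\le kn\,\|P-Q\|^2$, and the two accountings land on the same final bound.
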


% Note that 
For a family of graphs with $k$ fixed and $n$  growing, as long as $p$ is not too small, assumption~(\ref{eq:assumption}) is always met. 
% satisfied. 
It also implies that, for most cluster-structure matrices $F$, $p$ needs to be greater than $k^2 \log{n}/n$, which is comparable to the connectivity threshold $p\geq\log(kn)/(kn)$.  % that is 
%required  to ensure $G$ is connected.

Next we  evaluate the theoretical guarantee by 
\thmref{main} when $G \sim \mathcal{G}\left(k,n,p,q,F\right)$,    $p=q$, and there exists a noise parameter $\eta \in [0,1/2)$ such that  $F_{j,\ell} = 1-\eta$ if $j \equiv \ell - 1 \mod k$, $F_{j,\ell} = \eta$ if $j \equiv \ell + 1 \mod k$, and $F_{j,\ell} = 1/2$ otherwise. By definition,   the connections among the $k$  clusters can be represented by a directed cycle where each edge has weight $1-2\eta$, and hence we call this particular DSBM \emph{the cyclic block model}. We believe this cyclic block model is particularly suitable to evaluate the performance of a clustering algorithm for digraphs due to the following reasons: (1) since every vertex of the graph has the same in-degree and out-degree in expectation, the vertices' degrees provide no information for clustering; (2) even for the case of $\eta=1$, i.e., all the  edges between two clusters $C_j$ and $C_{j+1 \mod k}$ are oriented in the same direction, the clustering task could be still  very challenging because the directions of most edges are  randomly oriented. We summarise the performance of Algorithm~\ref{alg:spectral} on the cyclic block model as follows.

\begin{cor}
\label{cor:cyclic}
Let $G$ be a graph sampled from a cyclic block model with parameters $k,n,p=q=\omega\left(k^3 / ((1-2\eta)^{2}\ n)\ \log{n}\right)$, and  $\eta \in [0,1/2)$.  Then, with high probability, the number of misclassified vertices by Algorithm~\ref{alg:spectral} is   $O\left(k^4/ ((1-2\eta)^{2} p) \  \log{n} \right)$.
\end{cor}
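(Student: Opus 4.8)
The plan is to obtain Corollary~\ref{cor:cyclic} as a direct specialisation of Theorem~\ref{thm:main}: the only model-dependent quantities appearing in that theorem are the spectral gap $\widetilde{\rho}$ and the distinguishing parameter $\theta$, so it suffices to compute both for the matrix $\widetilde{F}$ of the cyclic block model and then substitute. First I would write $\widetilde{F}$ out explicitly. For the cyclic model, $2F-\mathbf{1}_{k\times k}$ is the real skew-symmetric \emph{circulant} matrix whose first row has $(1-2\eta)$ in position $1$ and $-(1-2\eta)$ in position $k-1$ (cyclically on the super- and sub-diagonal) and zeros elsewhere; hence $\widetilde{F}=(2F-\mathbf{1}_{k\times k})\cdot i$ is $i$ times this circulant.

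Since every circulant matrix is diagonalised by the discrete Fourier basis $\{(\zeta^{jm})_{m=0}^{k-1}:0\le j\le k-1\}$ with $\zeta=e^{2\pi i/k}$, I would read off the eigenvalues of $\widetilde{F}$ directly as $\rho_j=-2(1-2\eta)\sin(2\pi j/k)$ for $0\le j\le k-1$; the factor $i$ cancels the $i$ in $\zeta^{j}-\zeta^{-j}=2i\sin(2\pi j/k)$, leaving the real spectrum guaranteed by Hermiticity. The zero eigenvalues are exactly those with $\sin(2\pi j/k)=0$, i.e.\ $j=0$ and, when $k$ is even, $j=k/2$; the smallest nonzero one in absolute value is attained at $j=1$, giving $\widetilde{\rho}=2(1-2\eta)\sin(2\pi/k)$. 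Using $\sin(2\pi/k)=\Theta(1/k)$ (the lower bound for large $k$ coming from Jordan's inequality $\sin x\ge (2/\pi)x$ on $[0,\pi/2]$) this yields $\widetilde{\rho}=\Theta\big((1-2\eta)/k\big)$.

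Next I would compute $\theta$ from the projection $P_{\Im(\widetilde{F})}$. Because $\widetilde{F}$ is circulant, $P_{\Im(\widetilde{F})}=I-P_{\ker(\widetilde{F})}$ is again a real circulant projection: its kernel is spanned by the constant Fourier mode (always) together with the alternating $(\pm1)$ mode when $k$ is even, so $P_{\Im(\widetilde{F})}=I-\tfrac1k\mathbf{1}_{k\times k}$ for odd $k$ and $P_{\Im(\widetilde{F})}=I-\tfrac1k\mathbf{1}_{k\times k}-\tfrac1k vv^{\rot}$ with $v=(1,-1,\dots,-1)^{\rot}$ for even $k$. A short computation of the difference of two rows then gives $\|P_{\Im(\widetilde{F})}(j,\cdot)-P_{\Im(\widetilde{F})}(\ell,\cdot)\|=\sqrt{2}$ (equal parity, or $k$ odd) or $\sqrt{2-4/k}$ (opposite parity, $k$ even); both are at least $1$ for every $k\ge 3$, so $\widetilde{F}$ has a $\theta$-distinguishing image with $\theta=\Theta(1)$, and one may simply take $\theta=1$.

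Finally I would substitute $\widetilde{\rho}=\Theta((1-2\eta)/k)$ and $\theta=\Theta(1)$ into Theorem~\ref{thm:main}. Plugging these into the misclassification bound $O(k^2/(\widetilde{\rho}^{2}\theta^{2}p)\log n)$ reproduces the claimed $O(k^4/((1-2\eta)^2 p)\log n)$, while rearranging assumption~\eqref{eq:assumption} for $p$ yields a density condition of the form $p=\omega(\mathrm{poly}(k)\,\log n/((1-2\eta)^2 n))$ under which the hypothesis of the theorem holds with high probability. The only genuinely nontrivial step is the spectral analysis of the circulant $\widetilde{F}$ together with the bound $\sin(2\pi/k)=\Omega(1/k)$, which is exactly what fixes the $k$-dependence of $\widetilde{\rho}$; everything else is bookkeeping. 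A minor point to flag is the degenerate case $k=2$, where the two off-diagonal conditions defining $F$ coincide and $\theta$ collapses, so the corollary should be read for $k\ge 3$.
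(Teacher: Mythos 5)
Your proposal is correct and follows essentially the same route as the paper's proof: diagonalise the circulant $\widetilde{F}$ in the Fourier basis to get $\rho_j=-2(1-2\eta)\sin(2\pi j/k)$, deduce $\widetilde{\rho}=\Theta((1-2\eta)/k)$ and an $\Omega(1)$-distinguishing image, and substitute into Theorem~\ref{thm:main}. Your explicit computation of $P_{\Im(\widetilde{F})}$ and the row distances (the paper only asserts $\theta=\Omega(1)$), together with the remark on the degenerate case $k=2$, is welcome extra detail but does not change the argument.
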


\section{Experiments}
\label{sec:Experiments}
\vspace{-1mm}

We compare the performance of our algorithm with other spectral clustering algorithms for digraphs on  synthetic and real-world data sets. Since ground truth clustering is available for graphs generated from the DSBM, we measure the recovery accuracy by the Adjusted Rand Index (ARI)~\cite{ARI_JMLR_Gates_Ahn}, which is closely related to and alleviates some of the issues of the popular  Rand Index~\cite{rand1971}. Both measures indicate how well a recovered clustering matches the ground truth, with a value close to 1~(resp. 0) indicating an almost perfect recovery~(resp. an almost random assignment of the vertices into clusters). For real-world data sets, due to the lack of a ground truth clustering, we will introduce   appropriately defined new objective functions to measure the quality of a clustering, while taking the edge directions into account and aiming to uncover imbalanced cuts in the partition. 
% A few new objective functions will be introduced and their significance will be discussed when we present the experimental results on the real-world data sets. 

% \emph{a Linux machine which has a 4.2GHz Intel i7-7700 CPU and 32G memory using Matlab R2018b.}
 
% \vspace{-2mm} 
\textbf{Experimental setup.} 
We compare against the three variants of the \textsc{DI-SIM} algorithm~\cite{co-clustering}, and spectral clustering  for digraphs when bibliometric and degree-discounted symmetrisations are applied~\cite{SP11}. Note that all these algorithms follow the standard framework of spectral clustering, but employ different eigenvectors to construct the feature vectors for $k$-means++. More specifically, \textsc{DI-SIM (left)}~(denoted by \textsc{DISG-L}) and \textsc{DI-SIM (right)} (\textsc{DISG-R})  use, respectively, the top $k$ eigenvectors of a regularised and normalised version of the matrix defined in  (\ref{eq:mleft}) and (\ref{eq:mright}) as input features for $k$-means; \textsc{DI-SIM (left+right)} (\textsc{DISG-LR}) uses the top $k$ eigenvectors of a regularised and normalised version of both matrices (\ref{eq:mleft}) and (\ref{eq:mright});  \textsc{Bi-Sym} and \textsc{DD-Sym} use the top $k$ eigenvectors of the matrix  in (\ref{eq:mlr}), with  an additional normalisation for \textsc{DD-Sym}. 

We also consider an additional variant of our Algorithm~\ref{alg:spectral} based on a different normalisation of our Hermitian adjacency matrix. Specifically, we use  \textsc{Herm} and \textsc{Herm-RW} to represent  Algorithm~\ref{alg:spectral} when the top eigenvectors of $A$  and  $A_{\mathrm{rw}}$ defined in \eqref{def:Lrw}  are applied as the  input matrix, respectively. We remark that Algorithm~\ref{alg:spectral} is described with respect to the non-normalised Hermitian adjacency matrix, since all the vertices of a graph generated from the DSBM have the same expected degree and normalising   $A$ with respect to degrees is not needed. On the other hand, in real-world data sets, the degree distribution is typically very skewed with large outlier degrees and,
as our experiments suggest,  \textsc{Herm-RW} usually performs the best among the  tested algorithms.

\textbf{Experimental results for the DSBM.}
We perform experiments on graphs randomly generated from the DSBM with   different values of $n,p=q$, and matrix $F$. Since spectral techniques perform better in the SBM for large $p$, our focus is to compare the performance of different algorithms when $p$  is close to the connectivity threshold $\log(N)/N$ of a random $\mathcal{G}(N,p)$ graph. 
Our reported results  are averaged over 10 independently generated graphs for every fixed parameter set. For  % the purpose of better 
ease of visualisation, we assume that the entries of   $F$  have only three different values: $1/2$ (which corresponds to uniformly random edge-directions), $\eta$, and $1-\eta$, and the experimental results are reported with respect to $\eta$.

\vspace{-2mm} 
\begin{figure*}[!htp]
\captionsetup[subfigure]{skip=0pt}
\begin{centering}
\hspace{-2mm}
\subcaptionbox{$p=0.45\%$}[0.245\columnwidth]{\includegraphics[width=0.270\columnwidth]{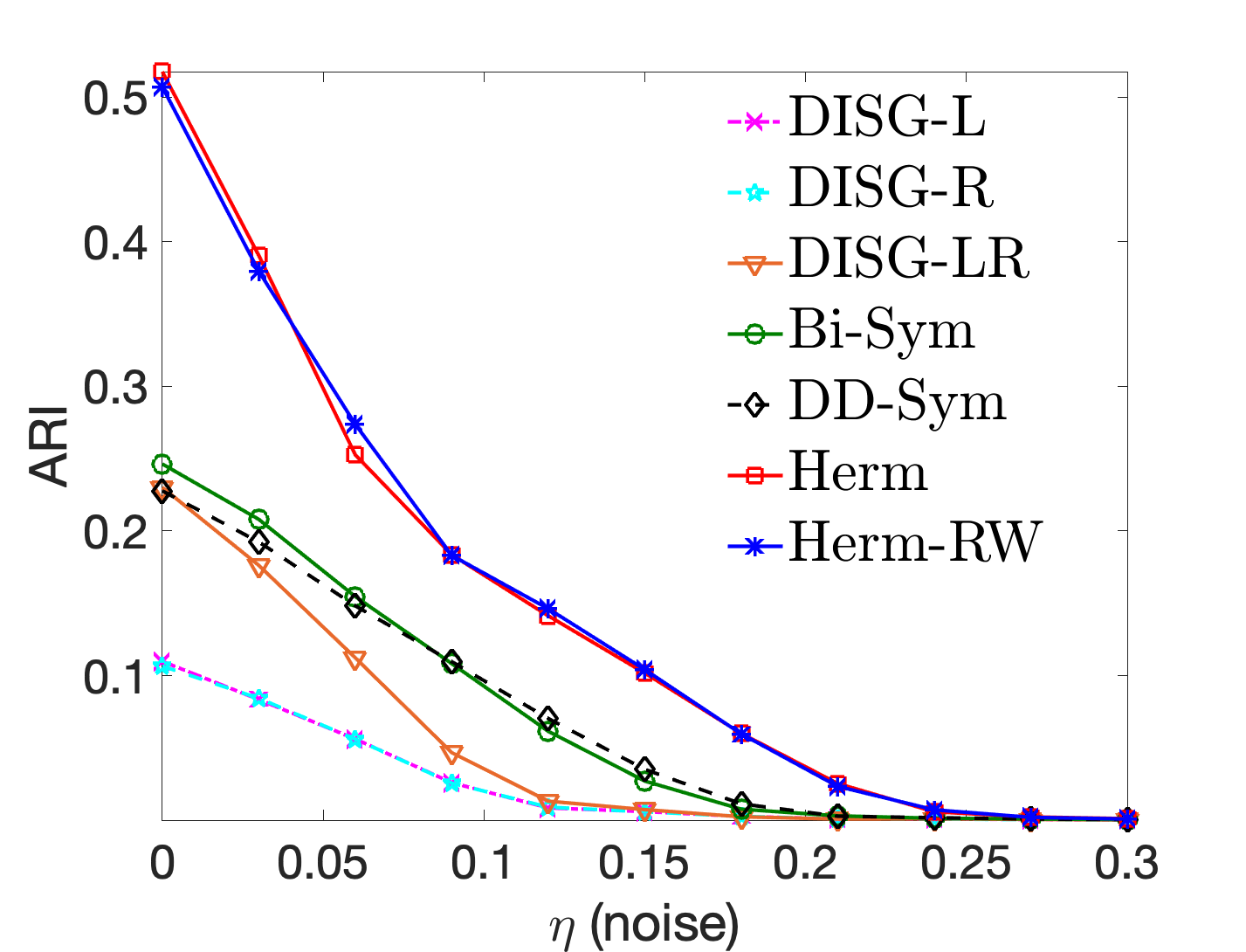}}
%\hspace{-4mm} 
\subcaptionbox{$p=0.5\%$}[0.245\columnwidth]{\includegraphics[width=0.270\columnwidth]{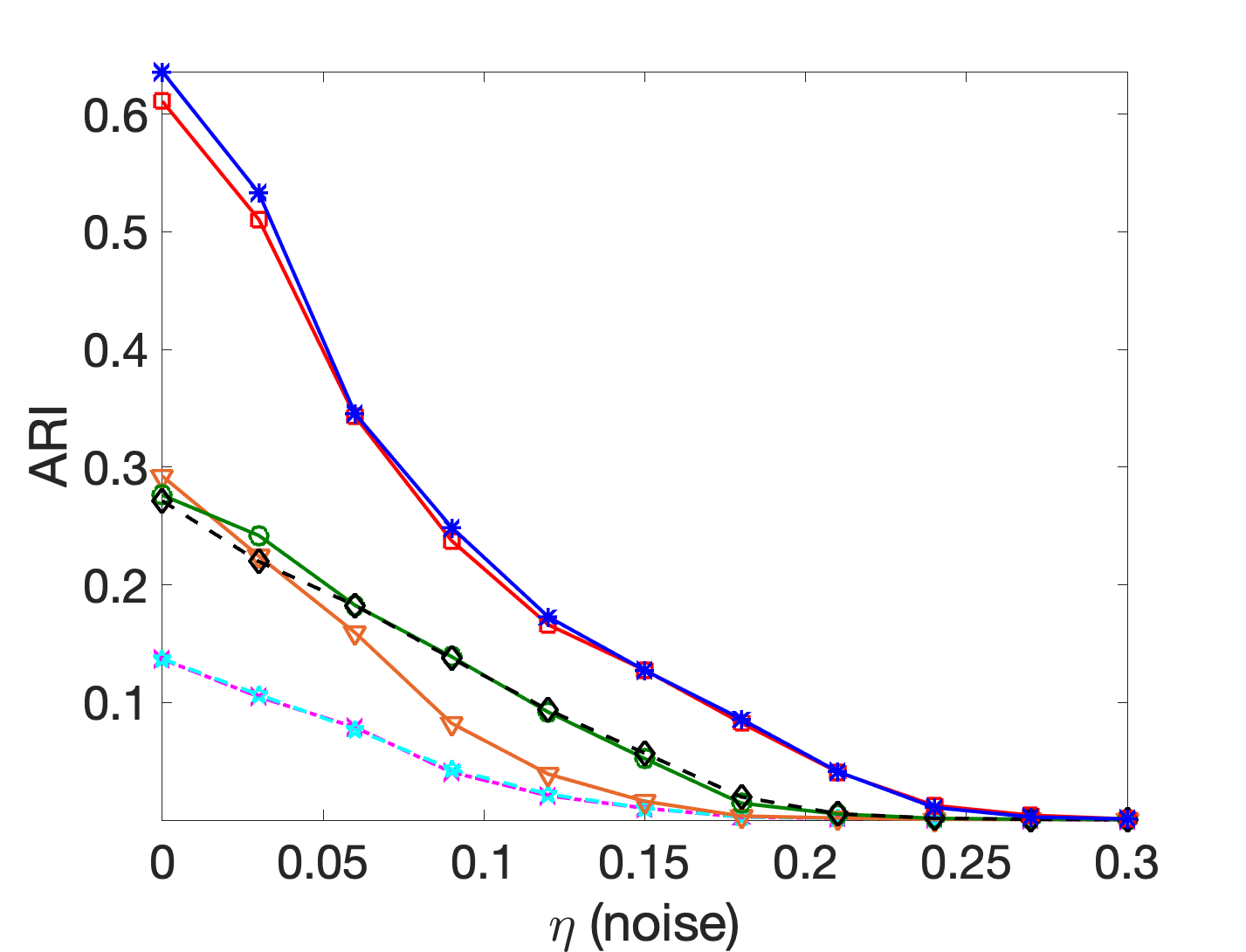}}
%\hspace{-4mm} 
\subcaptionbox{$p=0.6\%$}[0.2450\columnwidth]{\includegraphics[width=0.270\columnwidth]{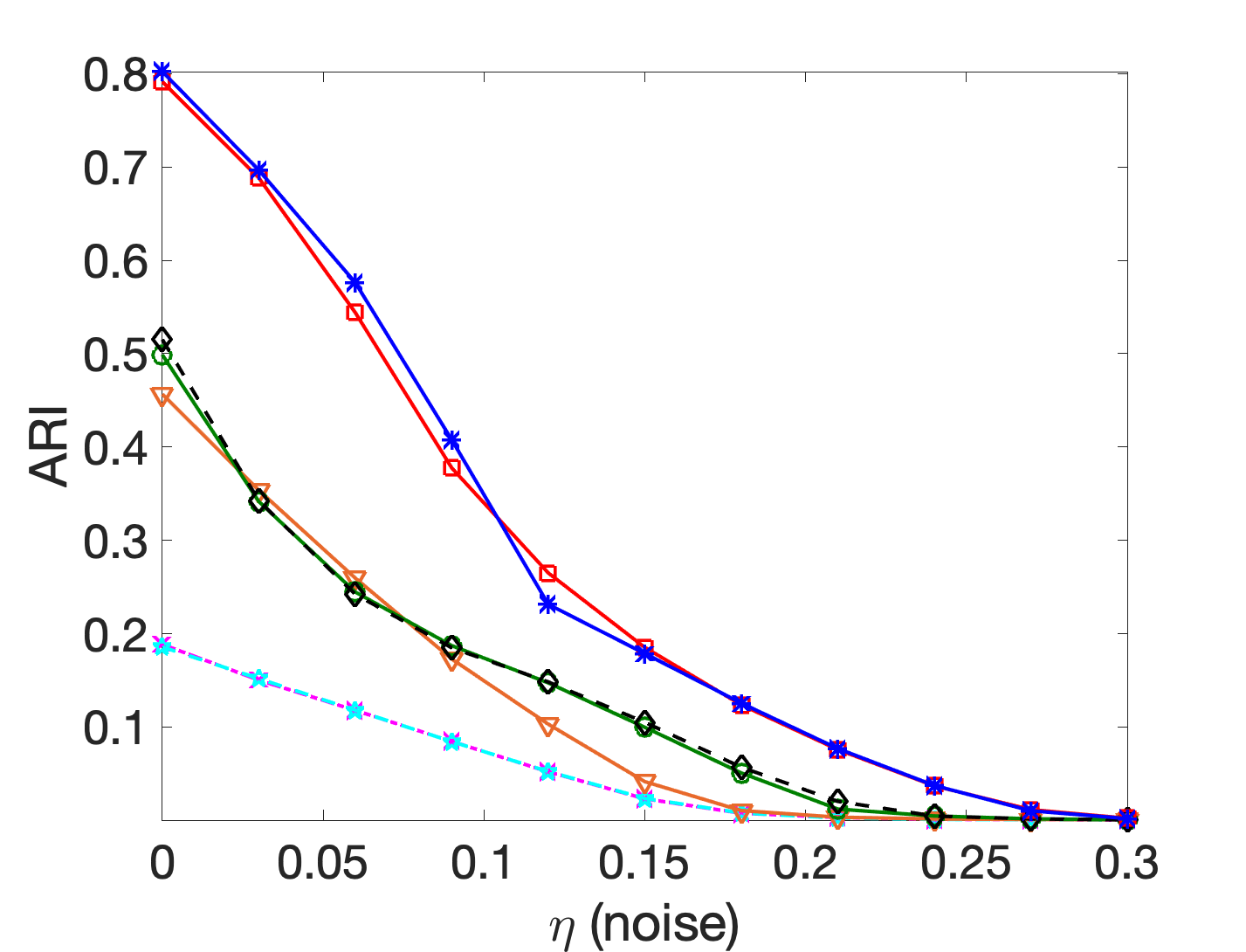}}
%\hspace{-4mm} 
\subcaptionbox{$p=0.8\%$}[0.2450\columnwidth]{\includegraphics[width=0.270\columnwidth]{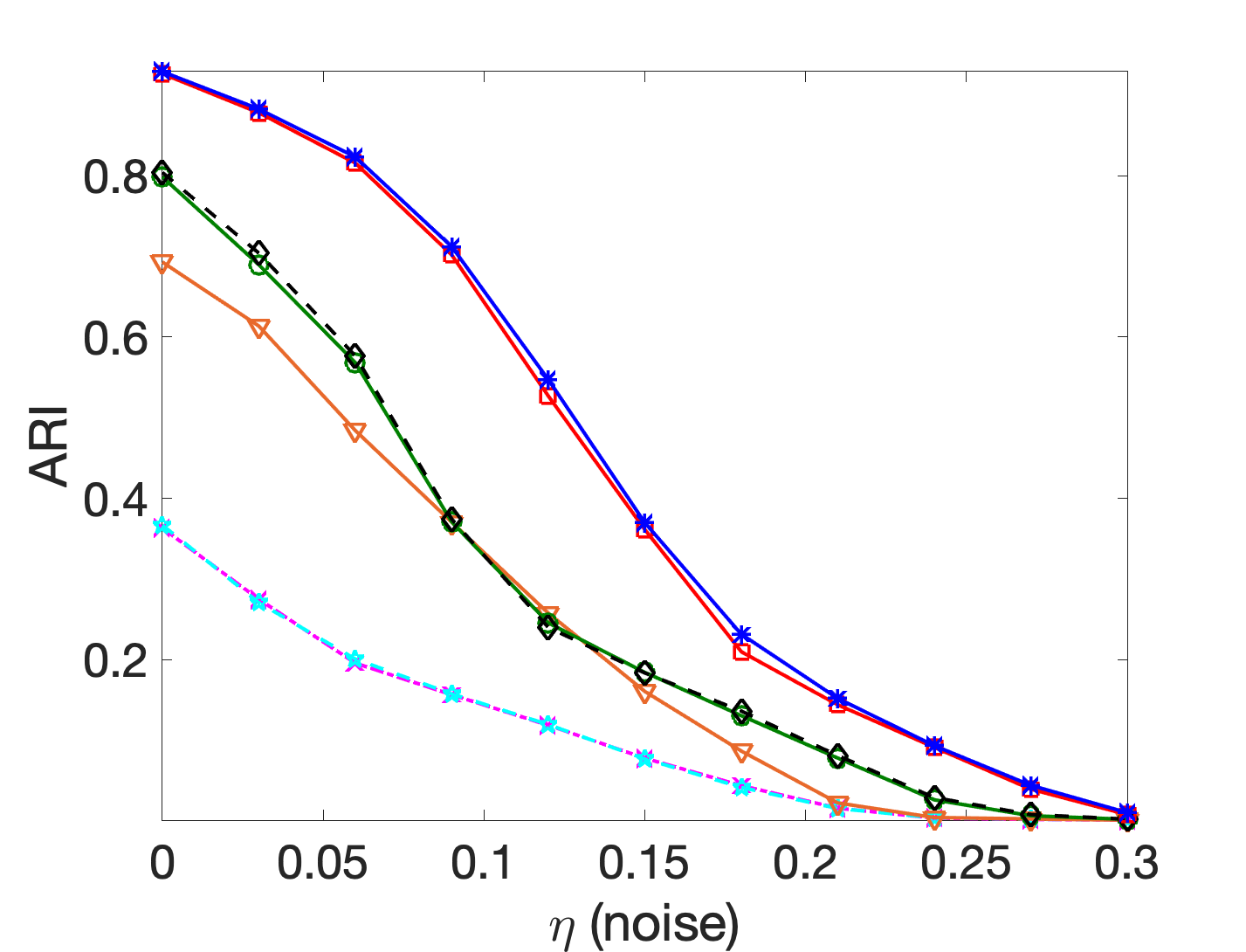}}
% 
\iffalse 
\end{centering}
\captionsetup{width=0.99\linewidth}
%\vspace{-1mm}
\caption{\small  Recovery rates for the circular pattern  with $k=5$, $N=5000$ with various levels of sparsity.
}
\label{fig:scanID_4b_circular}	
\end{figure*}
\vspace{-1mm}

\begin{figure*}[!htp]
\captionsetup[subfigure]{skip=0pt}
\begin{centering}
\fi 
\subcaptionbox{$p=0.45\%$}[0.2450\columnwidth]{\includegraphics[width=0.270\columnwidth]{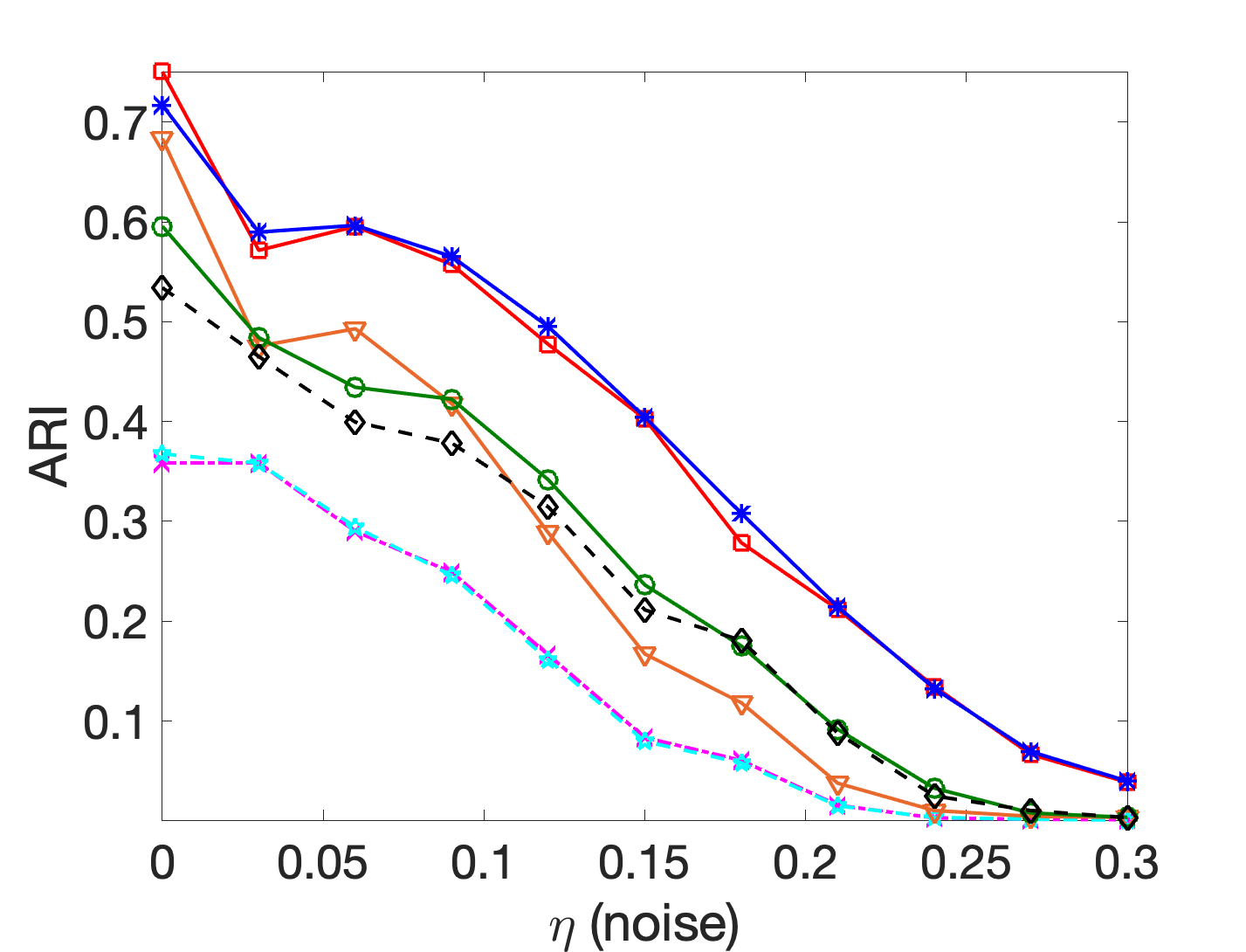}}
\hspace{-1mm} 
\subcaptionbox{$p=0.5\%$}[0.2450\columnwidth]{\includegraphics[width=0.270\columnwidth]{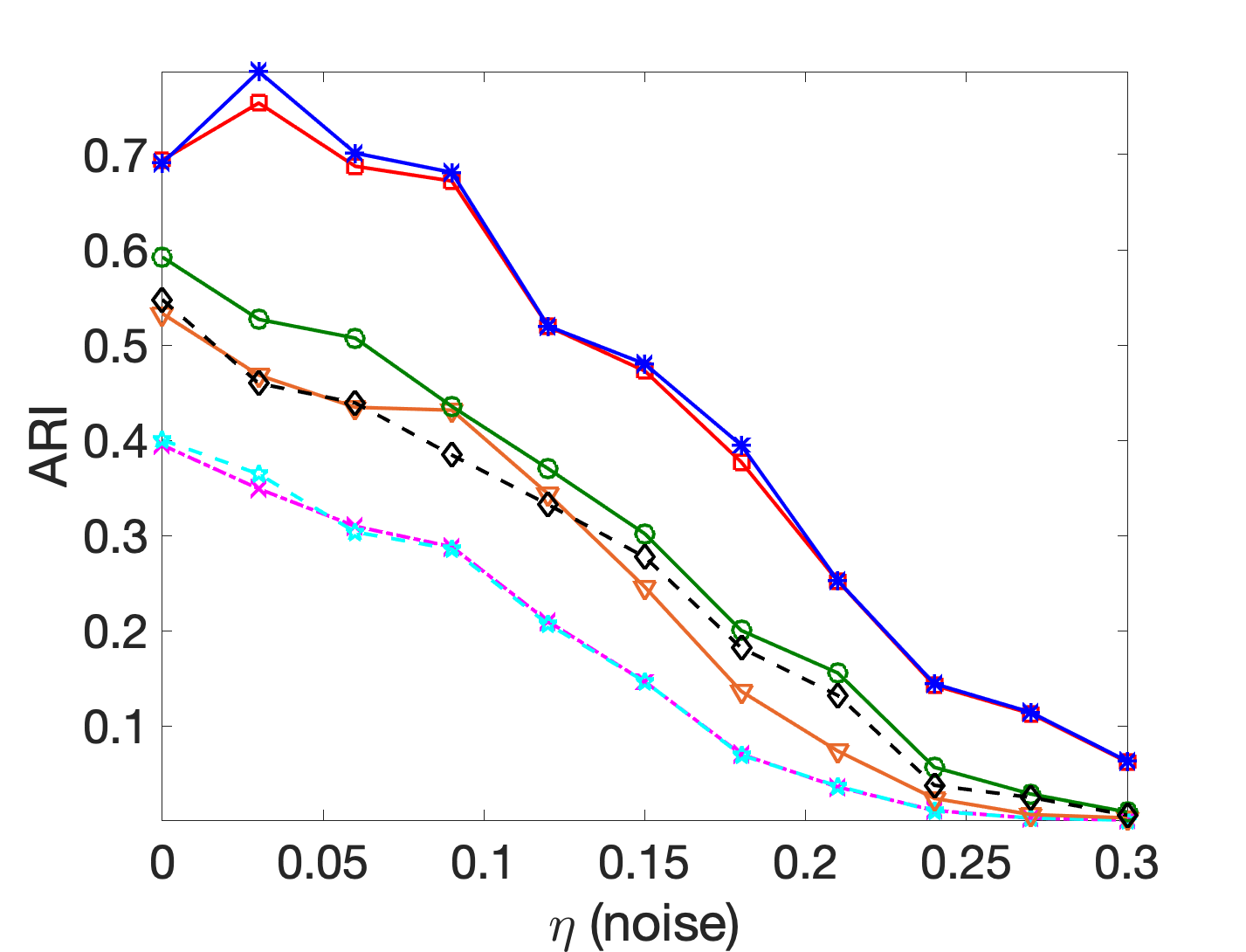}}
\hspace{-1mm} 
\subcaptionbox{$p=0.6\%$}[0.2450\columnwidth]{\includegraphics[width=0.270\columnwidth]{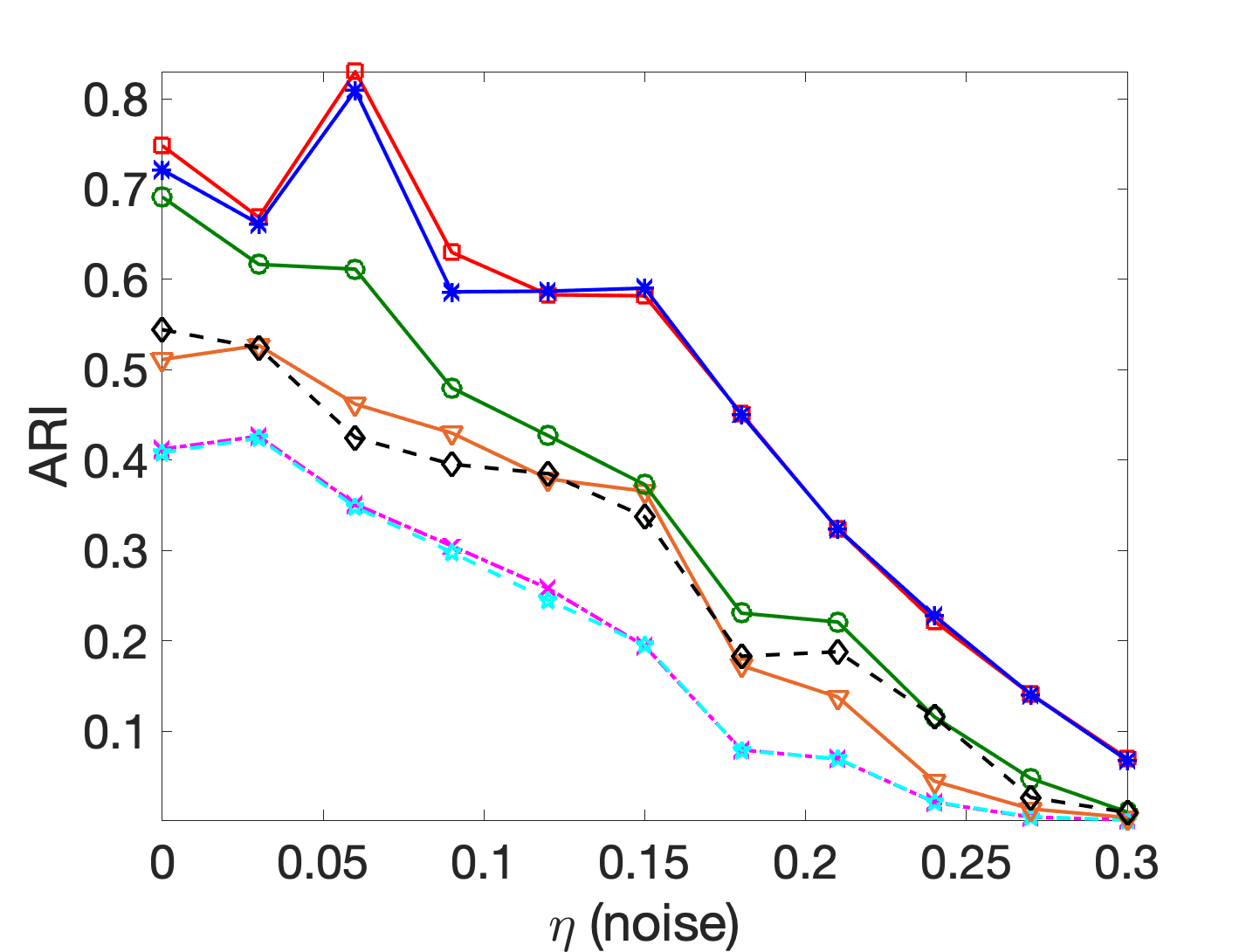}}
\hspace{-1mm} 
\subcaptionbox{$p=0.8\%$}[0.2450\columnwidth]{\includegraphics[width=0.270\columnwidth]{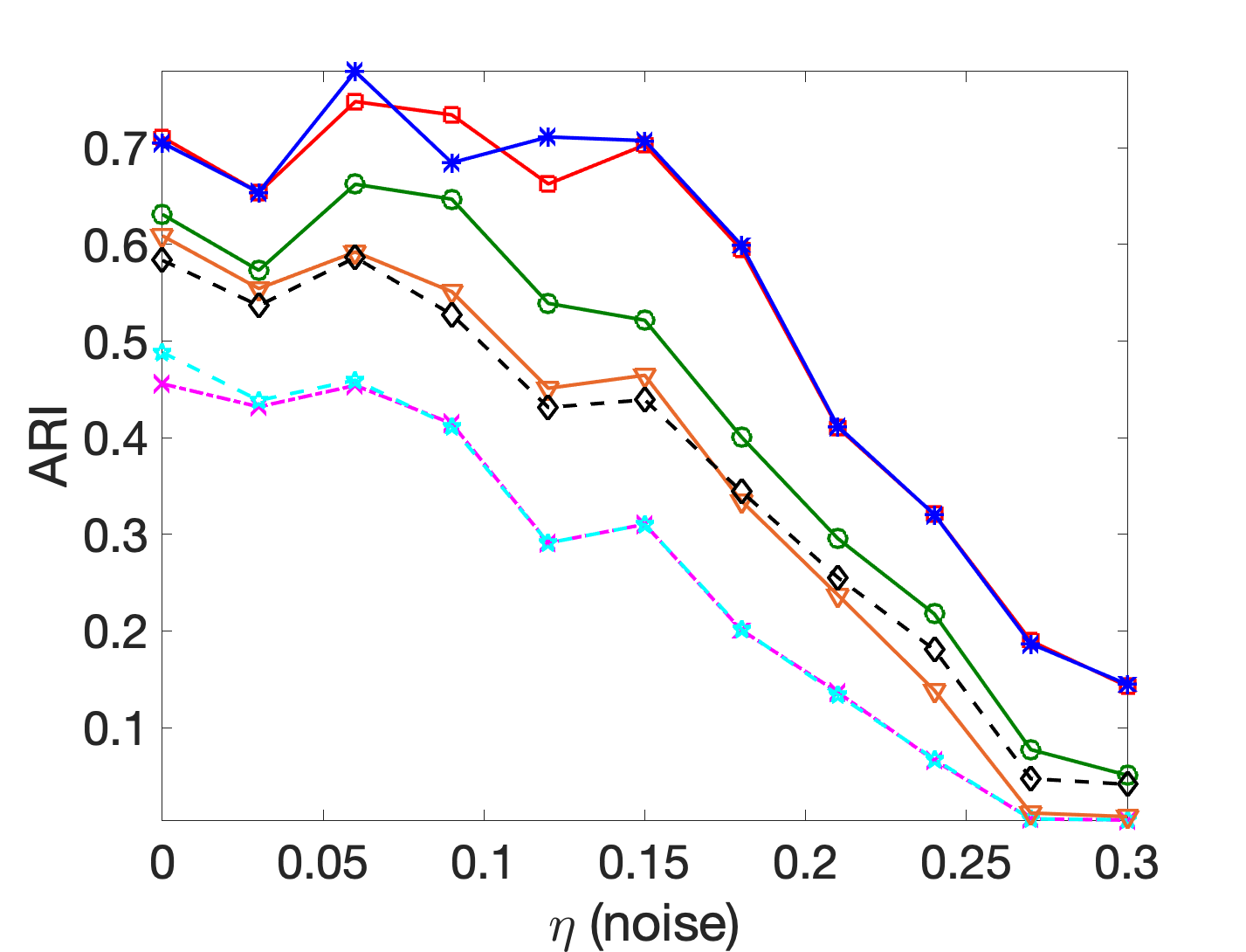}}
\end{centering}
\captionsetup{width=0.99\linewidth}
\vspace{-1mm}
\caption{\small Recovery rates for the circular pattern (top) and complete meta-graph (bottom) ($N=5,000, k=5$).  % , and various levels of sparsity.
}
% \label{fig:scanID_4a_Kk}
\label{fig:scanID_4ab}	
\end{figure*}
\vspace{-1mm}

% \figref{scanID_4b_circular} and \figref{scanID_4a_Kk} 
\figref{scanID_4ab} reports the performance of all the tested algorithms for input graphs from the DSBM with $N=5,000$, $k=5$, and the meta-graph is    a directed cycle, %(as in Section \ref{sec:cyclic})
or a complete graph with random orientations of the edges.
The two variants of our algorithm give similar results due to the fact that all the vertices have the same expected degree, and they % We also remark that, 
perform significantly better than all other algorithms. While all methods are unable to find a meaningful cluster structure when $\eta$ is close to $0.3$, our algorithm performs significantly better, especially for smaller values of $\eta$.

\iffalse 
 \vspace{-1mm}
\begin{figure}[h]  % [t]
\captionsetup[subfigure]{skip=0pt}
\begin{centering}
\subcaptionbox{$p = 0.01$ }[0.50\columnwidth]{\includegraphics[width=0.25\columnwidth]{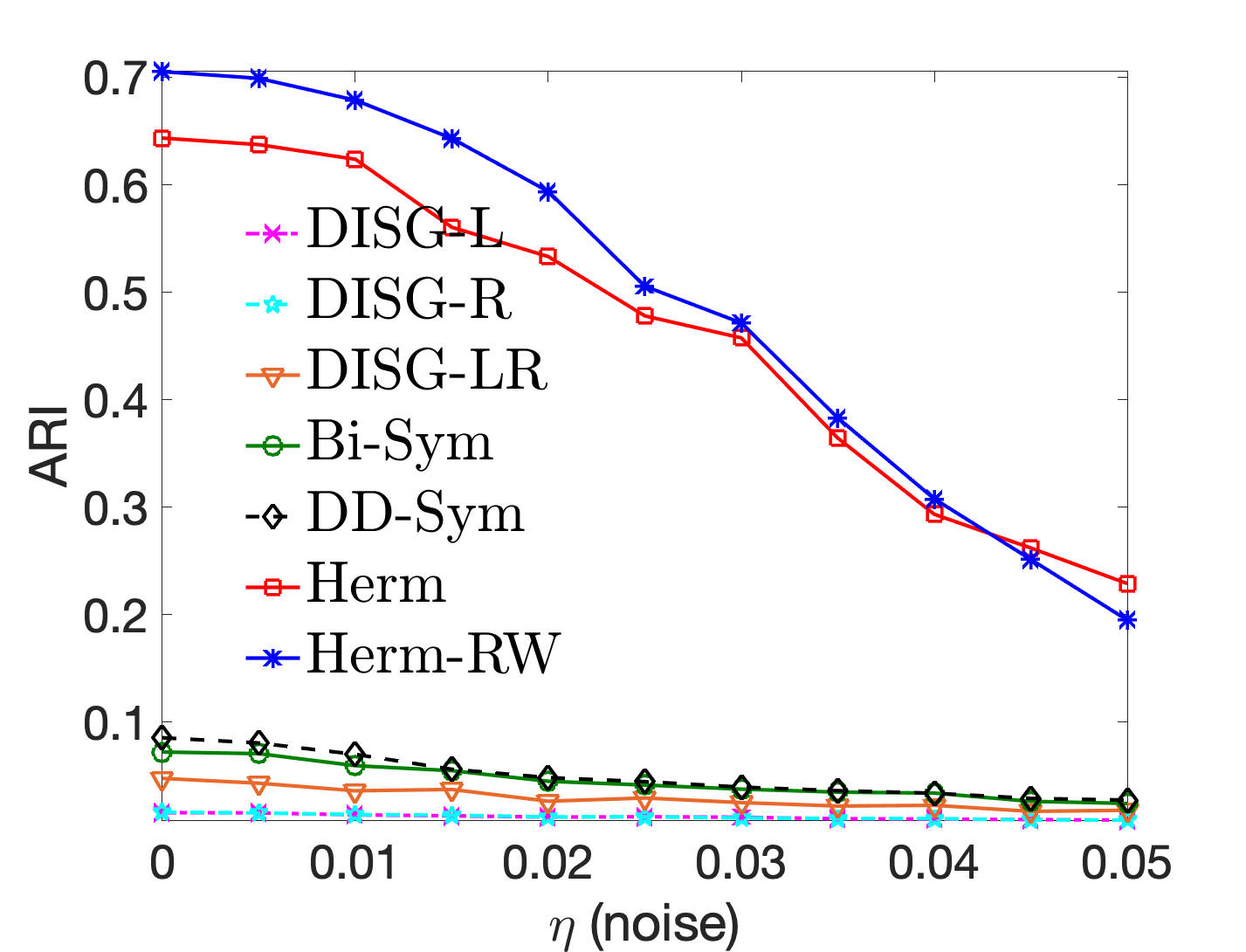}}
\hspace{-4mm} 
\subcaptionbox{  $p = 0.02$ }[0.50\columnwidth]{\includegraphics[width=0.250\columnwidth]{Figures/scanID_1b_Kk/scanID_1b_n100_k50_p0p02_Kk_nrReps10_ARI.png}}
\end{centering}
\captionsetup{width=0.99\linewidth}
\vspace{-1mm}
\caption{\small Recovery rates for the complete meta-graph in the \textsc{DSBM} with  $k=50$, % clusters,
$N=5000$, two sparsity values $p$.
Averaged over 10 runs. }
\label{fig:largeclusters}
\end{figure}
\fi 

% \vspace{-2mm}

We further investigate the performance of all algorithms for a large value of $k$. \figref{largeclusters_main} reports the ARI values of a randomly generated graph with respect to different values of  $\eta$, with  $N=5,000$, $k=50$, $p=1\%$, and the underlying meta-graph is a complete graph. This regime of parameters, i.e., large $k$ and relatively small $p$, is of particular interest due to its prevalence in most real-world data sets,
%\todo{Luca: I don't like this sentence. Difficult to argue it's true.}, 
and clearly illustrates that our algorithm has overwhelmingly superior performance compared to other algorithms in the literature.

\begin{wrapfigure}{r}{.26\textwidth}
    \begin{minipage}{\linewidth}
    \vspace{-0.1cm}
    \centering 
    \includegraphics[width=1.07\columnwidth]{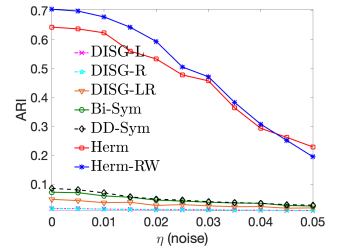}  % \subcaption{$p = 0.01$}  % \label{fig:5a}  % \par\vfill 
    %  \hspace{-10mm} 
   % \includegraphics[width=0.49\columnwidth]{Figures/scanID_1b_Kk/scanID_1b_n100_k50_p0p02_Kk_nrReps10_ARI.png}  % \subcaption{$p = 0.02$} % \label{fig:5b}
% Note: if we turn on the \subcaption{} - the two figures end up one under the other...
\end{minipage}
\caption{\small  % Recovery rates 
% Comparison on the 
Complete meta-graph (\textsc{DSBM}, $k=50$).}
\vspace{-3mm}
\label{fig:largeclusters_main}
\end{wrapfigure}

\vspace{-2mm} 
\paragraph{Experimental results for real-world data.}
We also detail results on real-world data sets, showcasing the efficiency and robustness of our algorithm for identifying structures in % directed graphs. 
digraphs.  
Since no ground truth clustering is available, we compare % the 
performance as measured by three related objective functions (also referred to as scores), 
% also 
showing   % at the same time 
that our approach favours balanced cluster sizes. We consider a  \textsc{US-Migration} network, and a \textsc{BLOG} network during the 2004 US presidential election; additional experimental results on a  \textsc{UK-Migration} network and % the
\textsc{$c$-Elegans} neural network  
% can be found
are shown in the appendix.

% \noteMC{Define objective function: IF: Imbalance Flow matrix, or IC: Imbalance Cut, or CI: "Cut Imbalance" I think this is best, as the other way around kind of already implies it is an Imbalanced Cut, but if the entry is close to 0.5, then it is not. So I think CUT imbalance makes more sense.. what do you think?
%  define  so we can late change globally if we want }
%CI
%CI_s
%CI_v

\iffalse 
\begin{minipage}{0.5\linewidth}  
\begin{equation}
\cis(X,Y) = \cip(X,Y) \cdot \min \{ |X|, |Y| \} 
\label{def:cis}
\end{equation}
\end{minipage}  \hspace{0.5cm}  
\begin{minipage}{0.49\linewidth}  
  \begin{equation}
\civ(X,Y) = \cis(X,Y) \cdot \min \{ \text{vol}(A), \text{vol}(B) \},
\label{def:civ}
\end{equation}
\end{minipage}
\fi

For any two disjoint vertex sets $X$ and $Y$, we define the Cut Imbalance ratio between $X$ and $Y$ by  
\vspace{-1mm}
\begin{equation}       \label{eq:def_cip}
\cip(X,Y) =  \frac{1}{2}\cdot \left|\frac{ w(X,Y) -w(Y,X) }{ w(X,Y) + w(Y,X) }\right|
= \left|  \frac{w(X,Y)}{w(X,Y) + w(Y,X)} - \frac{1}{2} \right|,
\end{equation}
where $w(X,Y) = \sum_{u\in X, v\in Y} w(u,v)$,
and  define  the size and volume normalised versions by 
% Cut Imbalance ratio with respect to their volumes  by 
\vspace{-2mm}
\begin{equation}      \label{eq:def_cis}
\cis(X,Y) =\cip(X,Y)  \cdot \min \{ |X|, |Y| \},
\end{equation}
\vspace{-4mm}
\begin{equation}          \label{eq:def_civ}
\civ(X,Y) =\cip(X,Y)  \cdot \min \{ \text{vol}(X), \text{vol}(Y) \},
\end{equation}
where $\mathrm{vol}(X)$ is the sum of in-degrees and out-degrees of the vertices in $X$. To explain Equations~(\ref{eq:def_cis}) and (\ref{eq:def_civ}), notice that 
$\cip(X,Y)\in[0,1/2]$ quantifies the imbalance of the edge directions between $X$ and $Y$, with $\cip(X,Y)=0$~(resp. $\cip(X,Y)=1/2$) indicating that the directions of the edges between $X$ and $Y$ are completely balanced~(resp. imbalanced). Furthermore, since our objective is to identify pairs of clusters with a large $\cip$-value, we scale $\cip(X,Y)$  by the minimum of their sizes or volumes   to penalise small clusters, in the same spirit as the normalised cut value~\cite{ShiM00}.

% Since our goal is to capture intrinsic structures in terms of flow imbalance between pairs of subsets of vertices, we quantify this measure for the entire graph by considering all ${k \choose 2} $ pairs of clusters. For instance, given a clustering $C_1, \ldots, C_k$ of our graph $G$, the total volume-normalised cluster imbalance of $G$ is given by

% \vspace{-3mm}
% \begin{equation}
%     \civ = \sum_{0\leq j < \ell \leq k-1} \civ(C_j,C_{\ell})
% \end{equation}
% \vspace{-2mm}

% \noindent and similarly for $\cip$ and $\cip^{\mathrm{size}}$. Furthermore, since in most practical applications one does not expect for cut imbalances to manifest between all pairs of clusters (in other words, the meta-graph $F$ is typically also sparse) we only consider the top $2k$ largest pairs

% \vspace{-3mm}
% \begin{equation}
%     \text{Top}\civ = \sum_{t=1}^{2k} \civ(\widetilde{C}_{j_t},C_{{\ell}_t})
% \end{equation}
% \vspace{-2mm}

% \noindent  where $ \widetilde{C}_{j_t},C_{\ell_t} $ denotes the $t$-th largest $ \civ$ cut imbalance pair. In other words, we only consider the top $2k$ largest pairs  in terms of the values of their cut imbalance normalised by volume for $\civtop$, and their cut imbalance normalised by vertex size for $\cistop$ respectively.

\underline{\emph{\textsc{US-Migration} Network.}} We consider the 2000 US Census data, which reports the number of people that migrated between pairs of counties in the US during 1995-2000 \cite{census,census_rep}. This data can be expressed as a matrix $M\in\mathbb{Z}_{\geq 0}^{N\times N}$,  where $N=3107$ denotes the number of counties in mainland US, and 
$ M_{j\ell} $ denotes the total number of people that migrated from county $j$ to county $\ell$. %  during the five-year period. 
% Data is also available on the population of each county, but we have not taken that into account in our study. 
% mig1:
We consider the % following 
transformation  % of the data 
$ \widetilde{M}_{j\ell} = M_{j\ell} /( M_{j\ell} + M_{\ell j} )$, which leads to a matrix often encountered in various  applications. For example, in ranking, this could capture the fraction of games won by player $j$ in the match against $\ell$ \cite{RankCentrality}. The input matrix to our pipeline is given by the skew symmetric matrix $G =  \widetilde{M} -  \widetilde{M}^{\rot}$. 
Figure \ref{fig:scanID_8a_mig1} shows the $\civ$ values  for the top pairs 
% the $\cistop$ and $\civtop$ objective function values, 
for varying number of clusters. % for $k = \{ 3,10,20,40\}$. 
With respect to both scores, \textsc{Herm} and \textsc{Herm-RW} are consistently better across all top pairs, and outperform all other methods by a large margin especially for $k=10,20$. Additional experiments for a variant of this data set are deferred to the appendix.

%\afterpage{
\begin{figure}[h]
\centering 
\captionsetup[subfigure]{skip=2pt}
\subcaptionbox{$k=2$}[0.24\columnwidth]{\includegraphics[width=0.26\columnwidth]{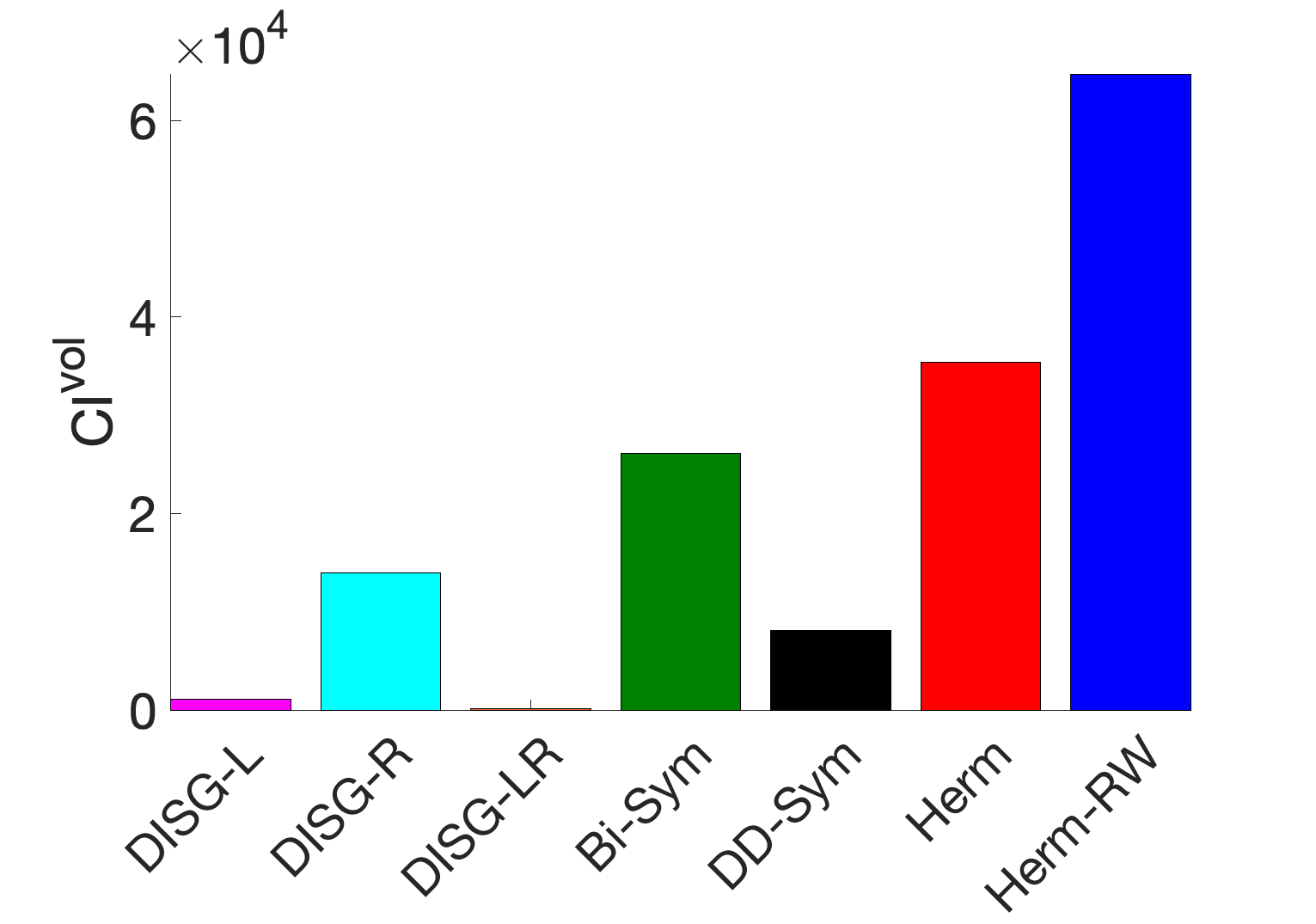} } %  \subcaption{$p = 0.01$}    \label{fig:5a}  % \par\vfill 
%  \hspace{-10mm}
\subcaptionbox{$k=3$}[0.24\columnwidth]{\includegraphics[width=0.26\columnwidth]{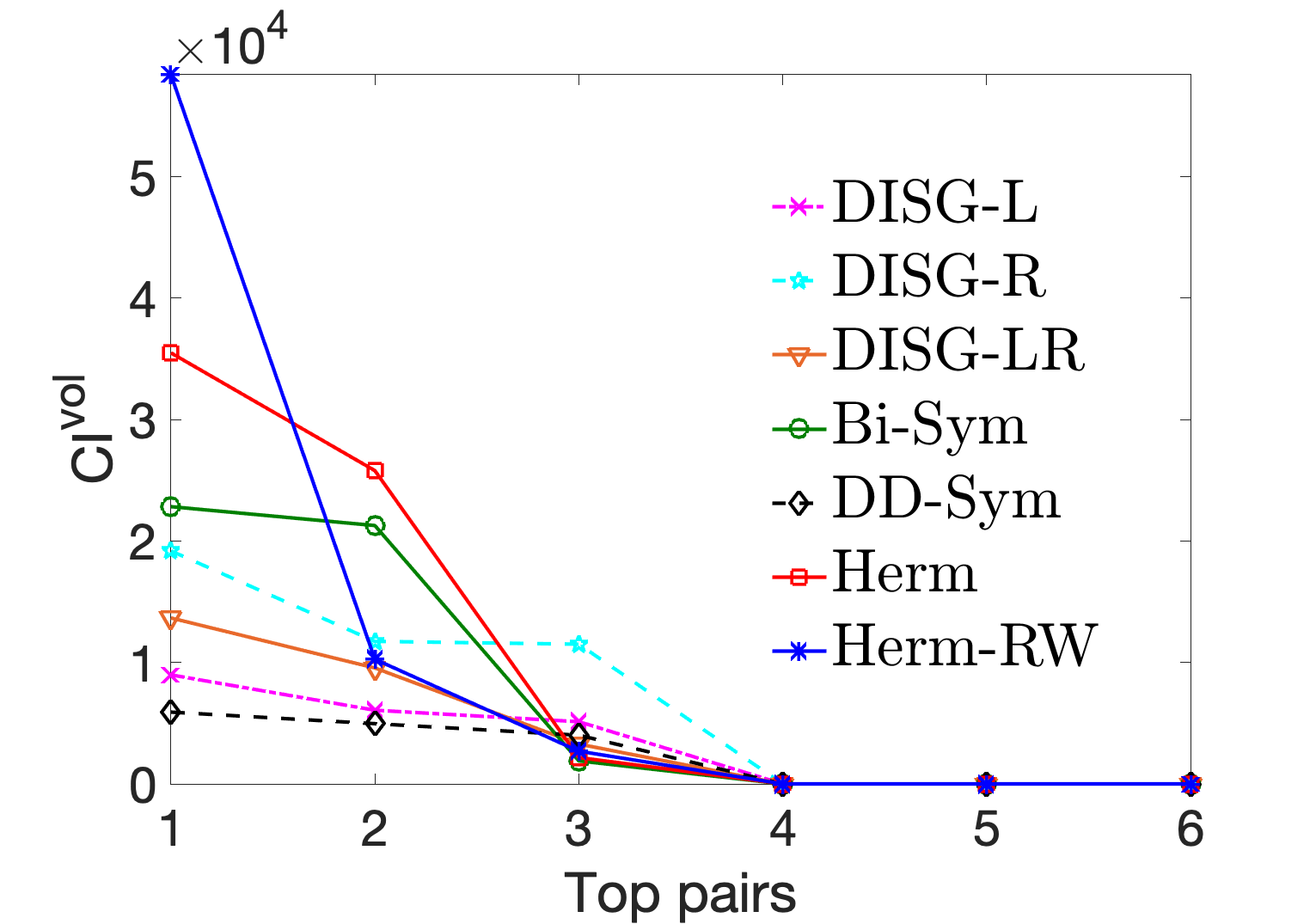} }
\subcaptionbox{$k=10$}[0.24\columnwidth]{\includegraphics[width=0.26\columnwidth]{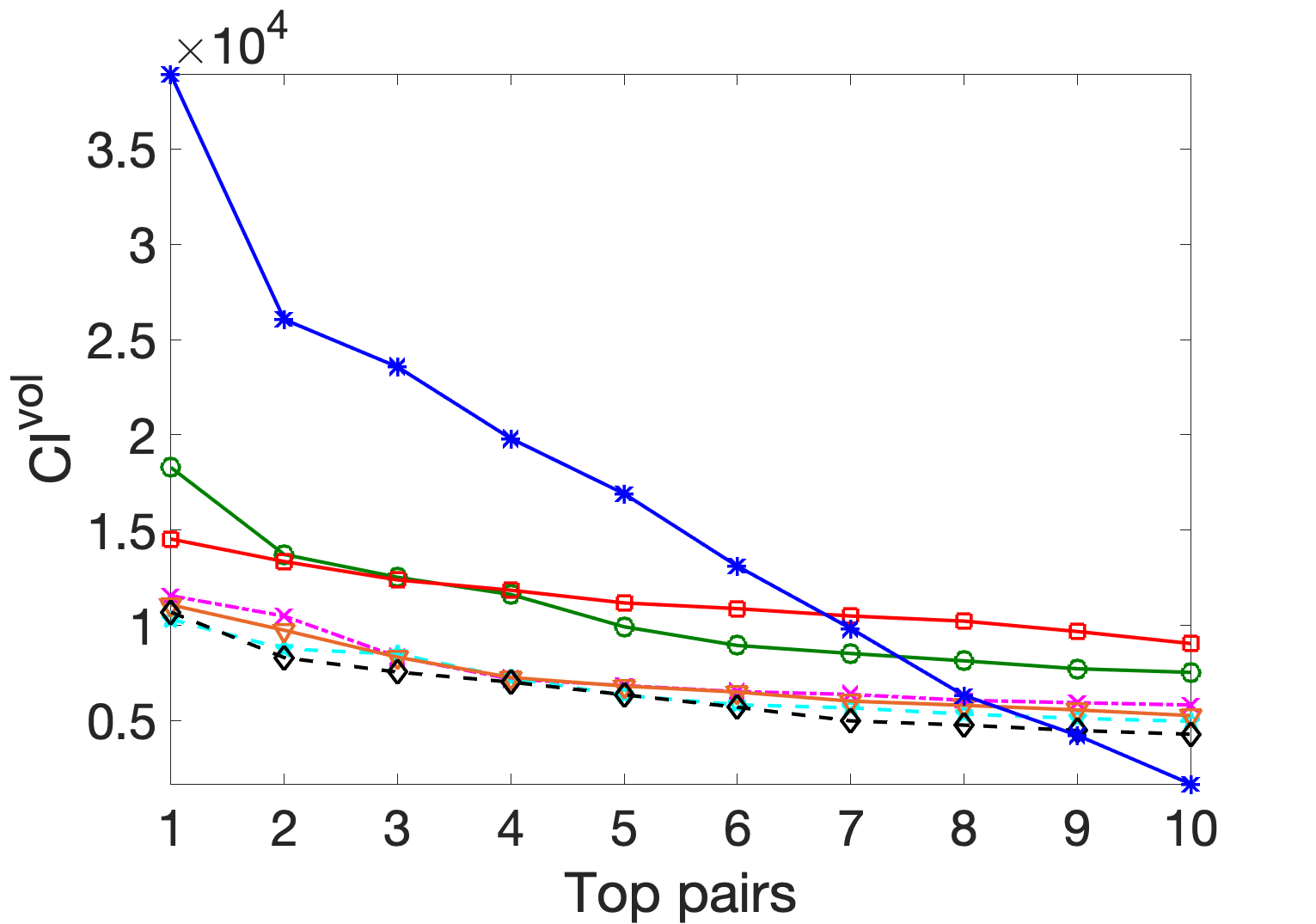} }
\subcaptionbox{$k=20$}[0.24\columnwidth]{\includegraphics[width=0.26\columnwidth]{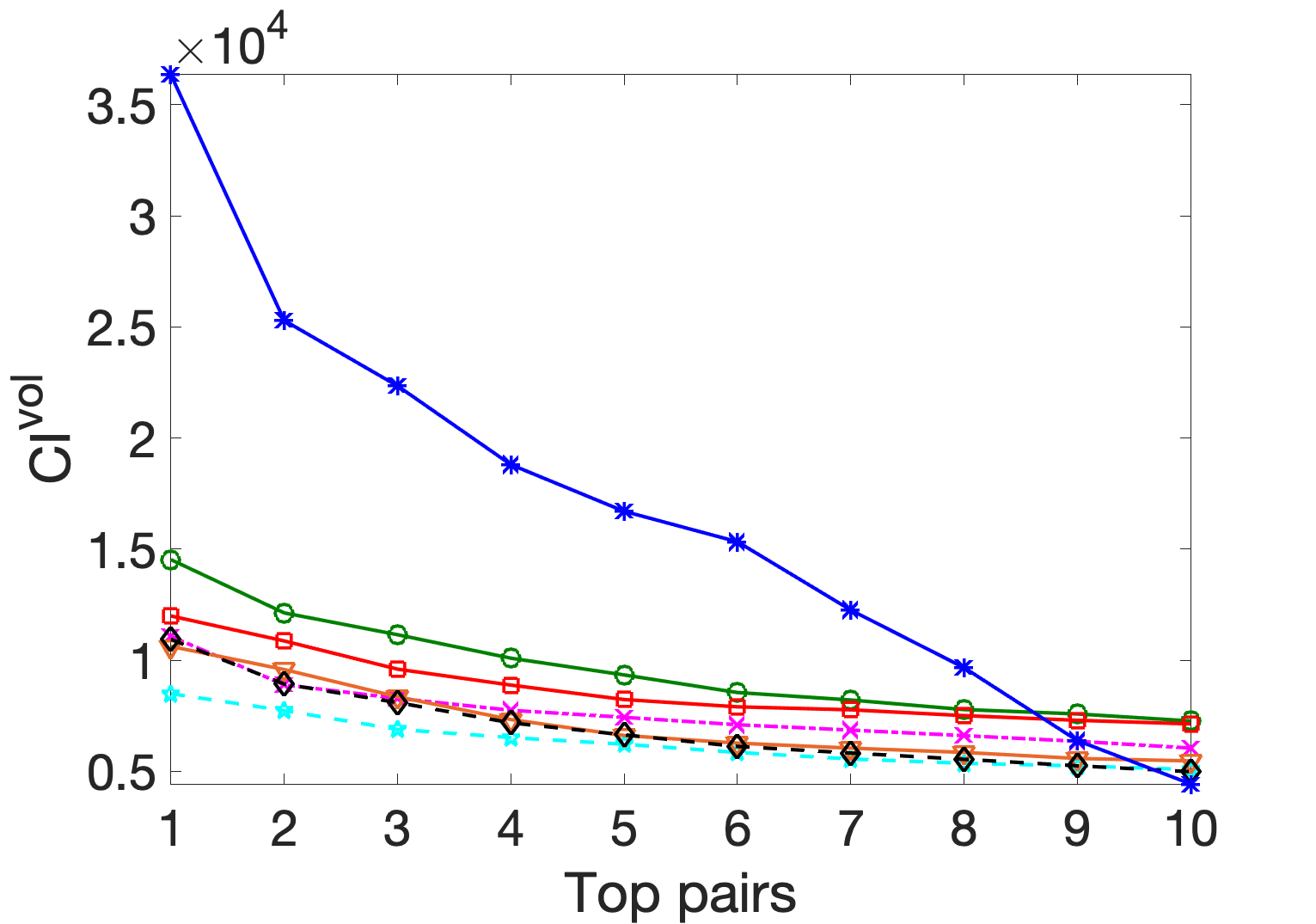} }
% \subcaption{$p = 0.02$} % \label{fig:5b}
% Note: if we turn on the \subcaption{} - the two figures end up one under the other...
\caption{\small Top $\civ$ scores attained by pairs of clusters, for the \textsc{US-migration} data set with varying $k$.
} 
\label{fig:scanID_8a_mig1}
\end{figure}%}
\vspace{-1mm} 

Figure \ref{fig:instances_mig1_ClustObj} shows the clusterings recovered by several methods for $k=10$, and % as well as  
% 
% Figure \ref{fig:mig1_G} shows
heatmaps of the % graph 
adjacency matrices sorted by induced cluster membership, highlighting  the fact that \textsc{DISGLR} and \textsc{DD-Sym} tend to uncover traditional clusters of high internal edge-density, as hinted by the prominent block-diagonal structure.  
On the other hand, % our two methods 
\text{Herm} and \text{Herm-RW} do not exhibit such a structure, and contain block submatrices of high intensity (denoting a large cut imbalance) on the off-diagonal blocks. 
Figure \ref{fig:mig1Top5} shows the three %largest size-normalised cut imbalance pairs, i.e., the 
pairs of clusters for which $\cis(C_j, C_{\ell})$ is the largest.
% , for the case $k=10$. 
We highlighted the two clusters in each pair in red (source) and blue (destination), and provided the % numerical 
values for their respective cut imbalances $ \cip $, $\cis$ and $\civ$. With respect to the two normalised cut imbalances, $ \textsc{Herm-RW} $ vastly outperforms all other methods.  
\begin{figure} % [!htp]
\captionsetup[subfigure]{skip=0pt}
\begin{centering}
% LOUT_END: \subcaptionbox{ \textsc{DISGL} }[0.512\columnwidth]{\includegraphics[width=0.51\columnwidth]{Figures/instances_mig1_ClustObj/mig1_k10_DISGL__clust.jpg}} \hspace{-4mm} 
% LOUT_END: \subcaptionbox{  \textsc{DISGR}  }[0.512\columnwidth]{\includegraphics[width=0.512\columnwidth]{Figures/instances_mig1_ClustObj/mig1_k10_DISGR__clust.jpg}} \hspace{-4mm} 
\subcaptionbox{\textsc{DISGLR}}[0.245\columnwidth]{\includegraphics[width=0.26\columnwidth]{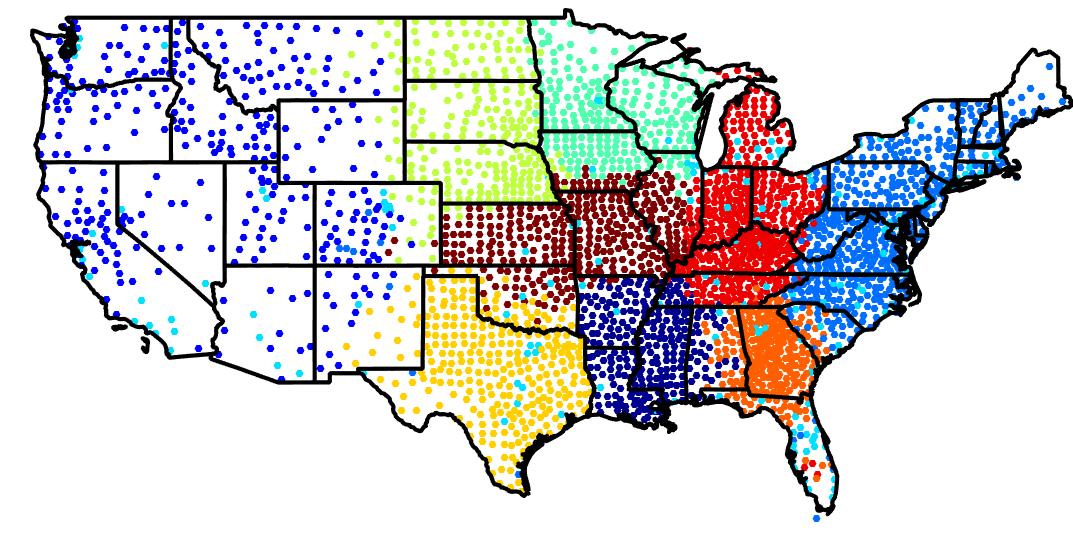}} \hspace{-1mm} 
% LOUT_END: \subcaptionbox{  \textsc{Bi-Sym}}[0.512\columnwidth]{\includegraphics[width=0.512\columnwidth]{Figures/instances_mig1_ClustObj/mig1_k10_BiSym__clust.jpg}} \hspace{-4mm} 
\subcaptionbox{\textsc{DD-Sym}}[0.245\columnwidth]{\includegraphics[width=0.26\columnwidth]{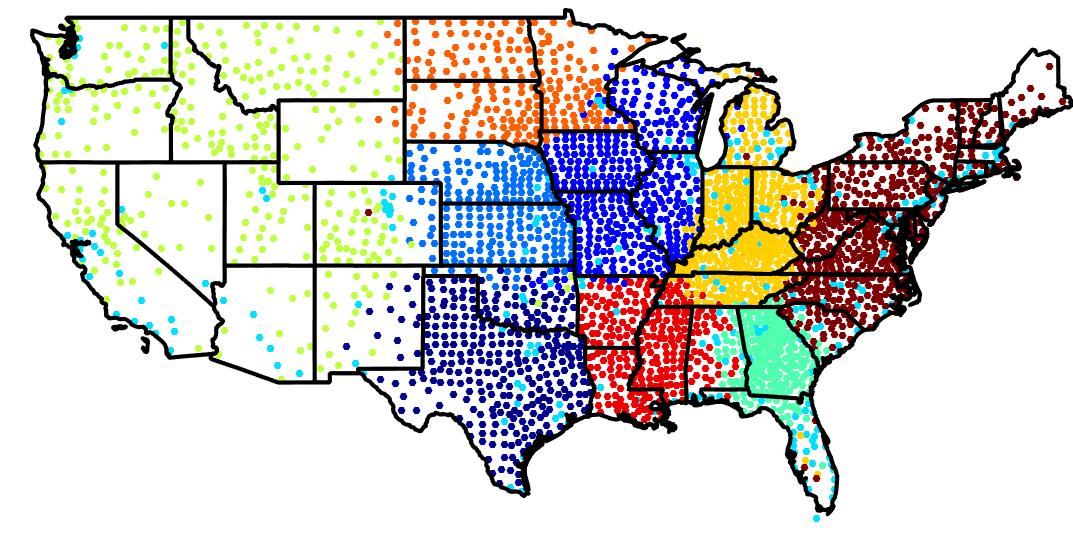}} \hspace{-1mm} 
\subcaptionbox{\textsc{Herm}}[0.245\columnwidth]{\includegraphics[width=0.26\columnwidth]{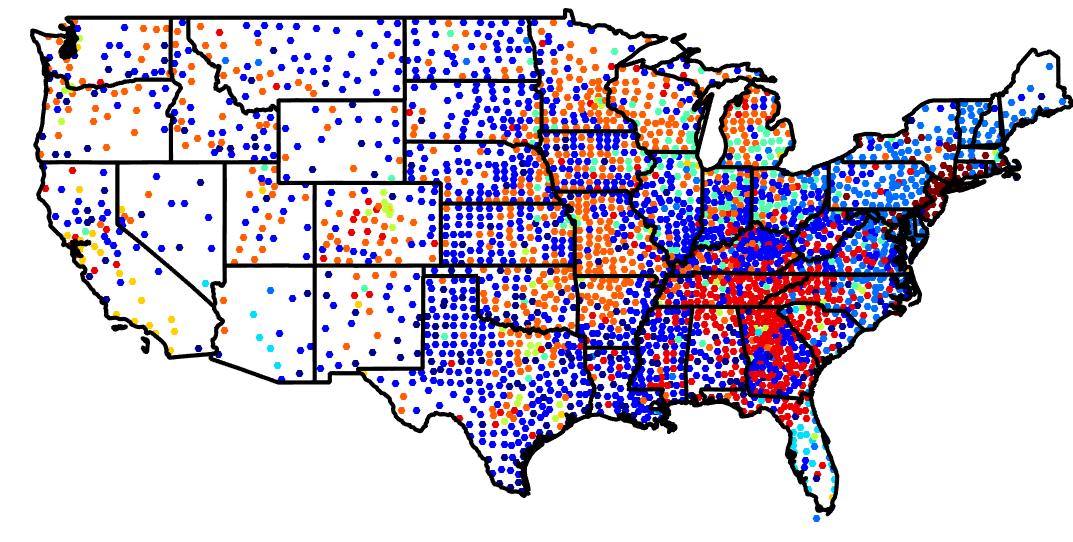}}
\hspace{-1mm} 
% \subcaptionbox{  \textsc{Herm-RW} }[0.512\columnwidth]{\includegraphics[width=0.512\columnwidth]{Figures/instances_mig1_ClustObj/mig1_k10_HermRW__clust.jpg}}
\subcaptionbox{\textsc{Herm-RW}}[0.245\columnwidth]{\includegraphics[width=0.26\columnwidth]{Figures/misc/mig1_k10_HermRW__clust_hsv.jpg}}
% \hspace{-4mm} 
% LOUT_END: \subcaptionbox{  \textsc{Herm-Sym} }[0.512\columnwidth]{\includegraphics[width=0.512\columnwidth]{Figures/instances_mig1_ClustObj/mig1_k10_HermSym__clust.jpg}}\hspace{-4mm} 
% LOUT_END: \subcaptionbox{  \textsc{Naive} }[0.512\columnwidth]{\includegraphics[width=0.512\columnwidth]{Figures/instances_mig1_ClustObj/mig1_k10_Naive__clust.jpg}}
%
%
\subcaptionbox{\textsc{DISGLR}}[0.245\columnwidth]{\includegraphics[width=0.26\columnwidth]{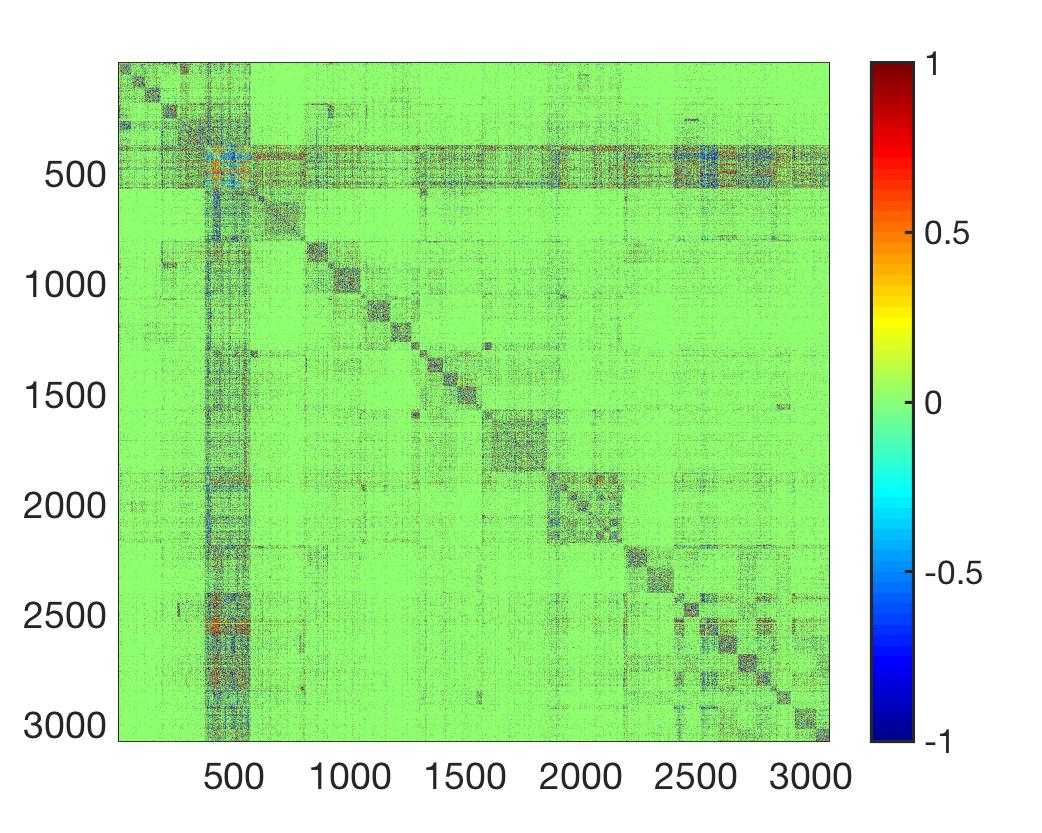}} \hspace{-1mm} 
% LOUT_END: \subcaptionbox{ \textsc{Bi-Sym}}[0.512\columnwidth]{\includegraphics[width=0.512\columnwidth]{Figures/mig1_G/mig1_k10_BiSym__G.jpg}} \hspace{-4mm} 
\subcaptionbox{ \textsc{DD-Sym}}[0.245\columnwidth]{\includegraphics[width=0.26\columnwidth]{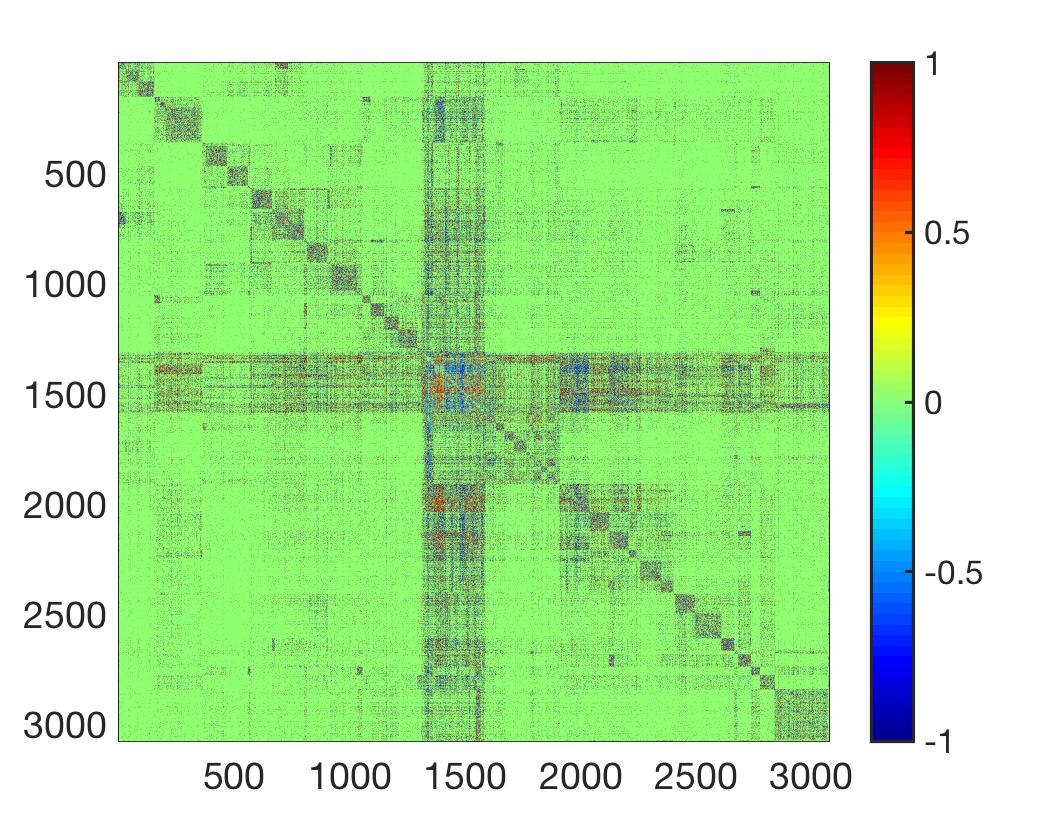}} \hspace{-1mm} 
\subcaptionbox{\textsc{Herm}}[0.245\columnwidth]{\includegraphics[width=0.26\columnwidth]{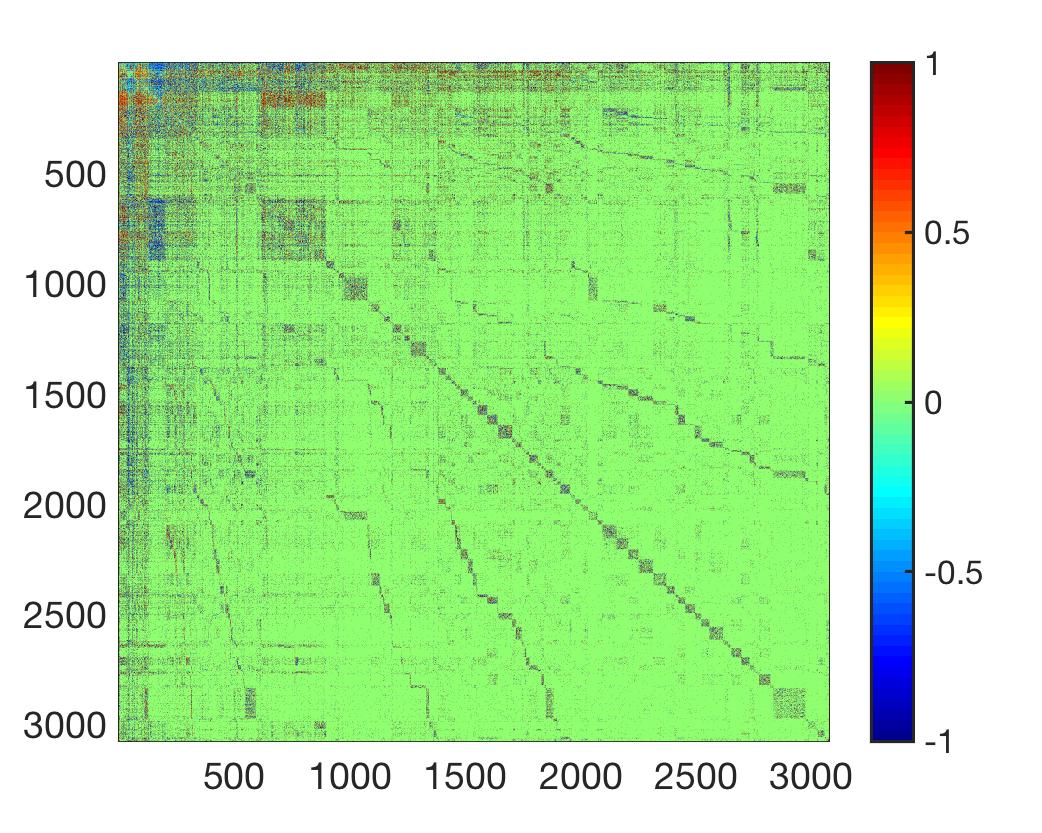}} \hspace{-1mm} 
\subcaptionbox{\textsc{Herm-RW}}[0.245\columnwidth]{\includegraphics[width=0.26\columnwidth]{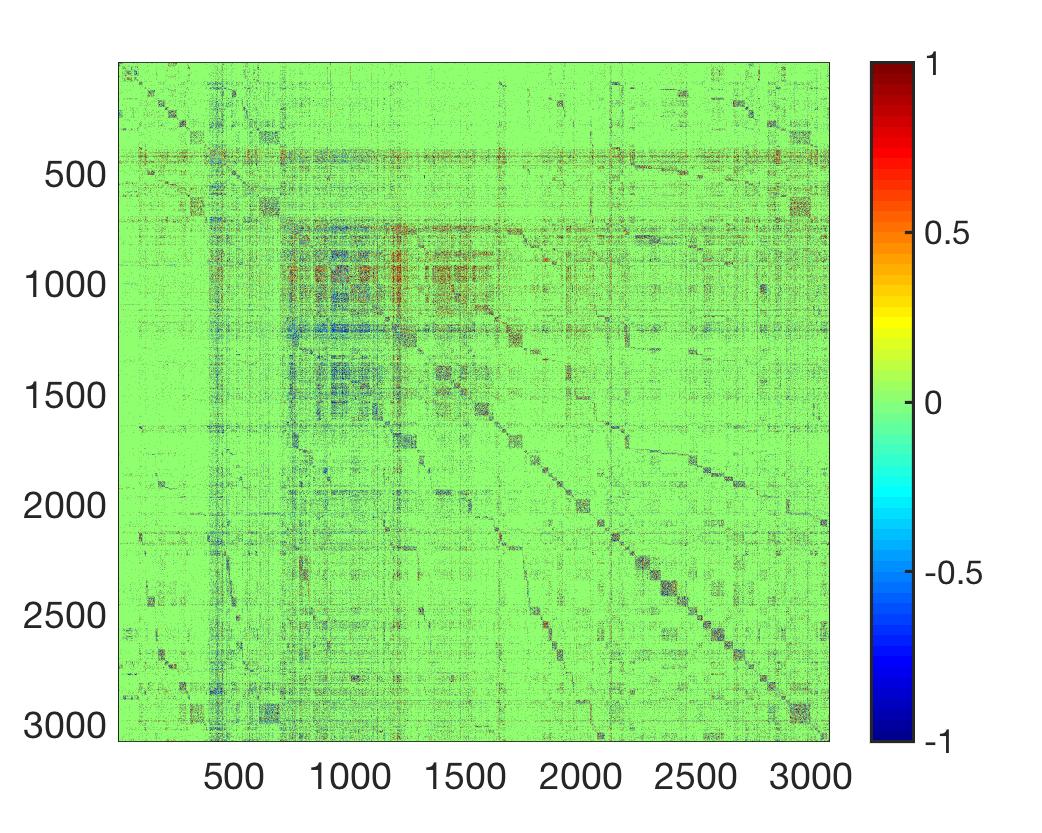}}%  \hspace{-4mm}
\end{centering}
\captionsetup{width=0.99\linewidth}
\caption{Top: Recovered clusterings for the \textsc{US-Migration} data set with $k=10$ clusters. Bottom: Heatmap of the graph adjacency matrices, sorted by induced cluster membership.}
\label{fig:instances_mig1_ClustObj}
\end{figure}
\vspace{0mm}

%%%%%%%%%%%%-------------------------------------------------------------------------------------------------------
\iffalse 
\begin{figure}  % [!htp]
\captionsetup[subfigure]{skip=0pt}
\begin{centering}
% LOUT_END: \subcaptionbox{ \textsc{DISGL} }[0.512\columnwidth]{\includegraphics[width=0.51\columnwidth]{Figures/mig1_G/mig1_k10_DISGL__G.jpg}} \hspace{-4mm} 
% LOUT_END: \subcaptionbox{ \textsc{DISGR}  }[0.512\columnwidth]{\includegraphics[width=0.512\columnwidth]{Figures/mig1_G/mig1_k10_DISGR__G.jpg}} \hspace{-4mm} 
\subcaptionbox{\textsc{DISGLR}}[0.245\columnwidth]{\includegraphics[width=0.26\columnwidth]{Figures/mig1_G/mig1_k10_DISGLR__G.jpg}} \hspace{-1mm} 
% LOUT_END: \subcaptionbox{ \textsc{Bi-Sym}}[0.512\columnwidth]{\includegraphics[width=0.512\columnwidth]{Figures/mig1_G/mig1_k10_BiSym__G.jpg}} \hspace{-4mm} 
\subcaptionbox{ \textsc{DD-Sym}}[0.245\columnwidth]{\includegraphics[width=0.26\columnwidth]{Figures/mig1_G/mig1_k10_DDSym__G.jpg}} \hspace{-1mm} 
\subcaptionbox{\textsc{Herm}}[0.245\columnwidth]{\includegraphics[width=0.26\columnwidth]{Figures/mig1_G/mig1_k10_Herm__G.jpg}} \hspace{-1mm} 
\subcaptionbox{\textsc{Herm-RW}}[0.245\columnwidth]{\includegraphics[width=0.26\columnwidth]{Figures/mig1_G/mig1_k10_HermRW__G.jpg}}%  \hspace{-4mm}
% LOUT_END: \subcaptionbox{\textsc{Herm-Sym}}[0.512\columnwidth]{\includegraphics[width=0.512\columnwidth]{Figures/mig1_G/mig1_k10_HermSym__G.jpg}} \hspace{-4mm} 
% LOUT_END: \subcaptionbox{  \textsc{Naive} }[0.512\columnwidth]{\includegraphics[width=0.512\columnwidth]{Figures/mig1_G/mig1_k10_Naive__G.jpg}}
\end{centering}
\captionsetup{width=0.99\linewidth}
\caption{Heatmap of the graph adjacency matrices, sorted by induced cluster membership, for the \textsc{US-Migration} data set with $k=10$ clusters, across several methods.}
\label{fig:mig1_G}
\end{figure}
\vspace{0mm}
\fi

\vspace{-2mm}
\newcommand{\wid}{1.5in}
\newcolumntype{C}{>{\centering\arraybackslash}m{\wid}}
\begin{table*}[!htp]\sffamily
\hspace{2mm}
 % LOUT_END:     \begin{tabular}{l*5{C}@{}}
   % LOUT_END:   & I & II & III & IV & V \\ 
% \begin{tabular}{l*4{C}@{}}
\begin{tabular}{l*3{C}@{}}
   & I & II & III  \\  
% 
\iffalse     % LOUT_END: 
\textsc{NAIVE}
& \includegraphics[width=\wid]{{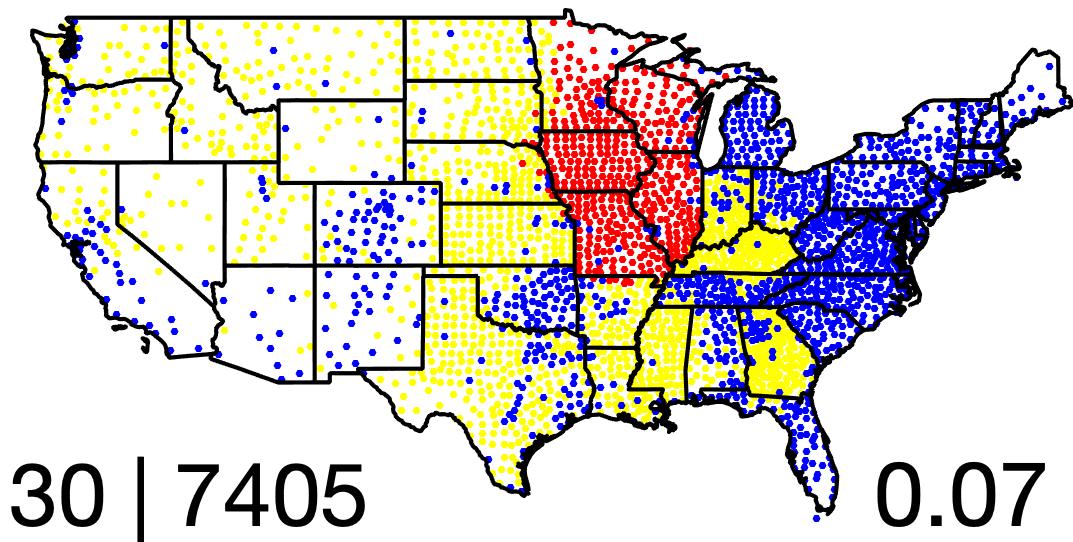}} 
& \includegraphics[width=\wid]{{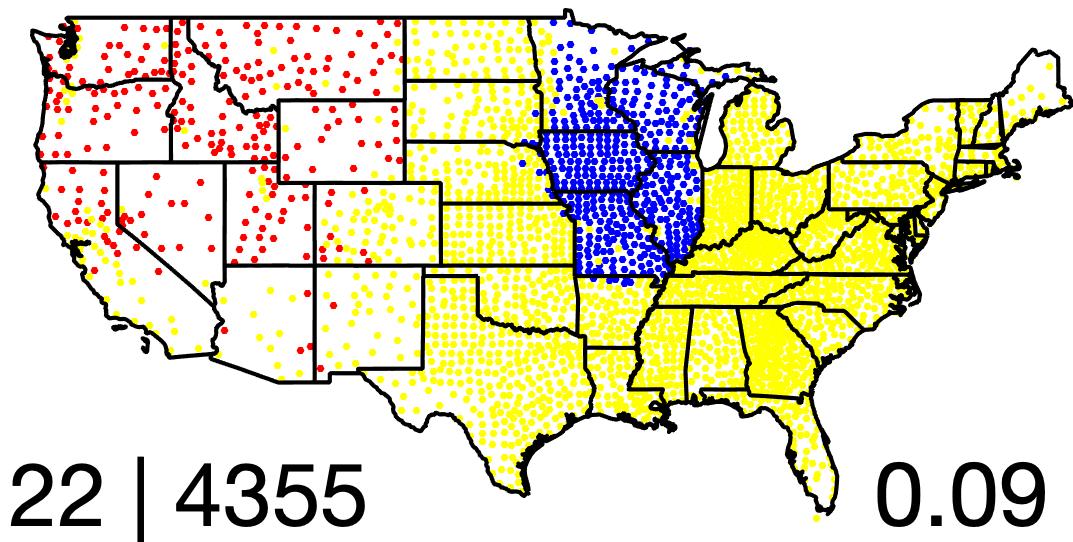}} 
& \includegraphics[width=\wid]{{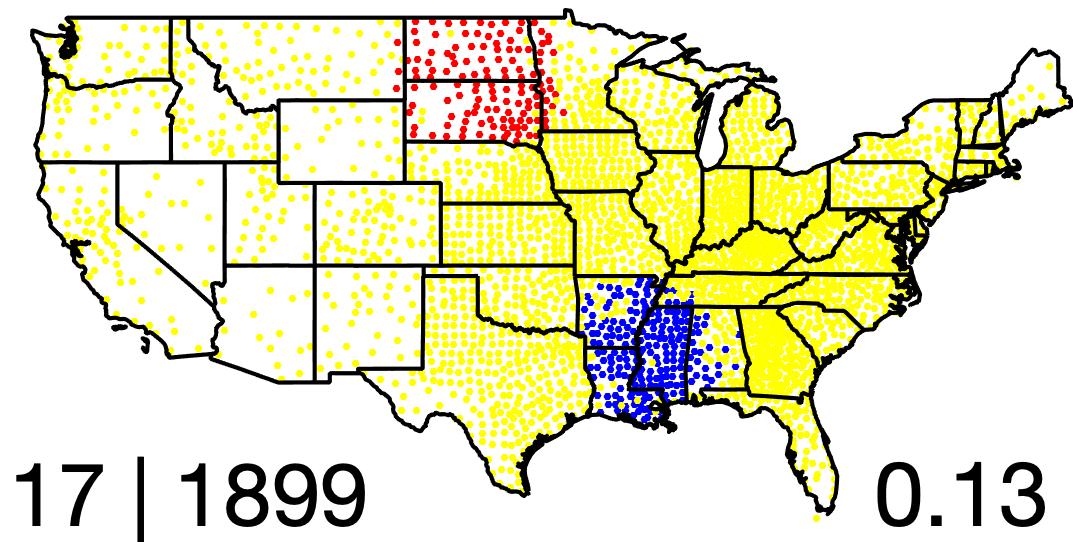}} 
& \includegraphics[width=\wid]{{Figures/mig1Top5/mig1_k10_Naive__TopIFMpair4.jpg}} 
& \includegraphics[width=\wid]{{Figures/mig1Top5/mig1_k10_Naive__TopIFMpair5.jpg}} \\ 
\fi 
%
\iffalse 
\textsc{DISGL}
& \includegraphics[width=\wid]{{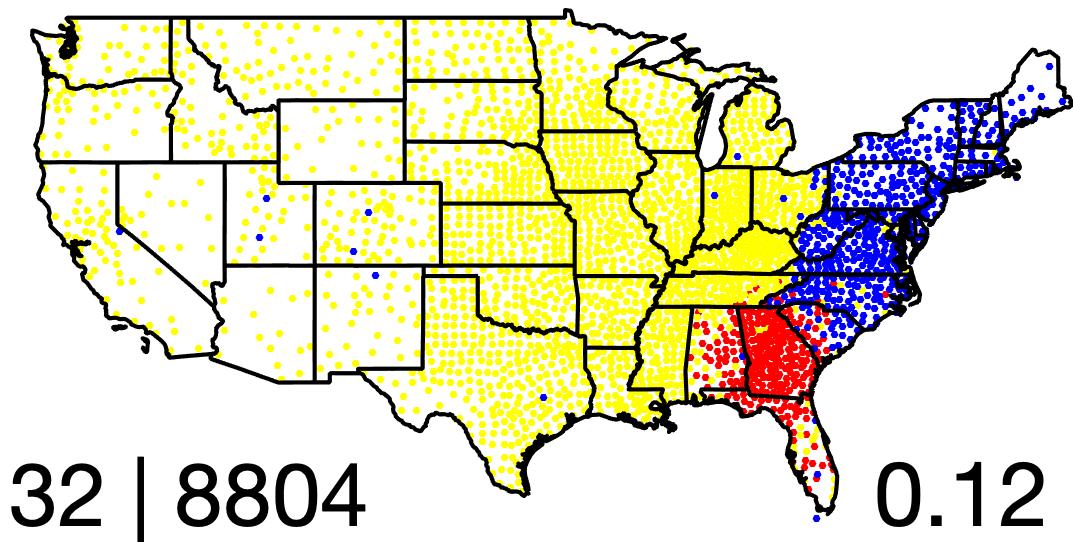}} 
& \includegraphics[width=\wid]{{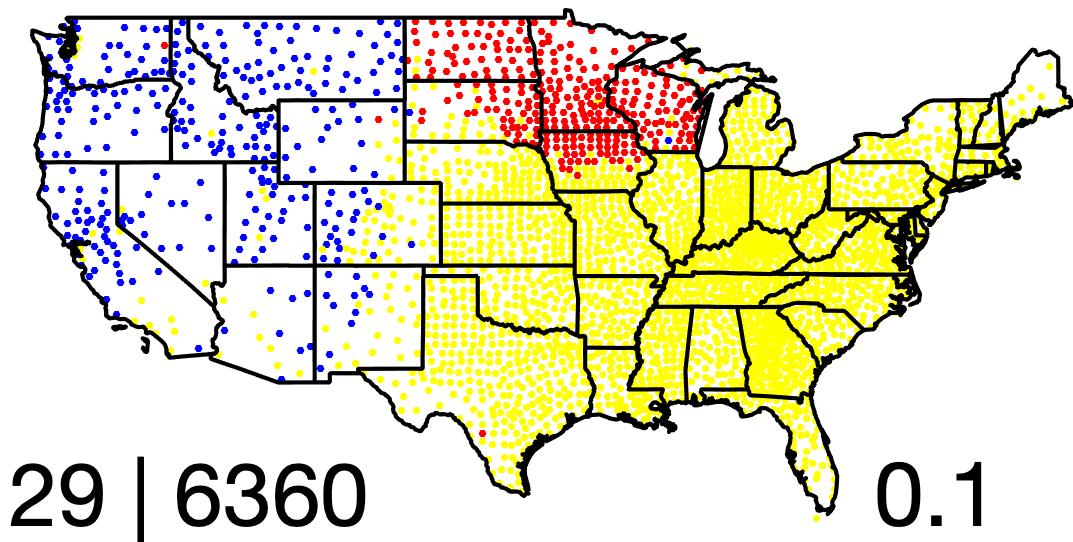}} 
& \includegraphics[width=\wid]{{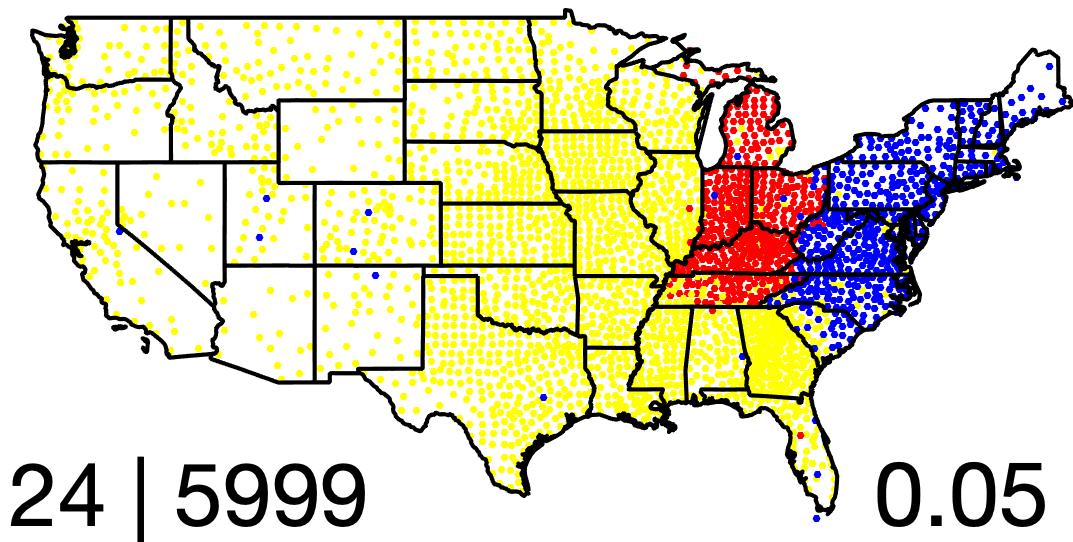}} 
& \includegraphics[width=\wid]{{Figures/mig1Top5/mig1_k10_DISGL__TopIFMpair4.jpg}} 
& \includegraphics[width=\wid]{{Figures/mig1Top5/mig1_k10_DISGL__TopIFMpair5.jpg}} \\ 
%
\textsc{DISGR}
& \includegraphics[width=\wid]{{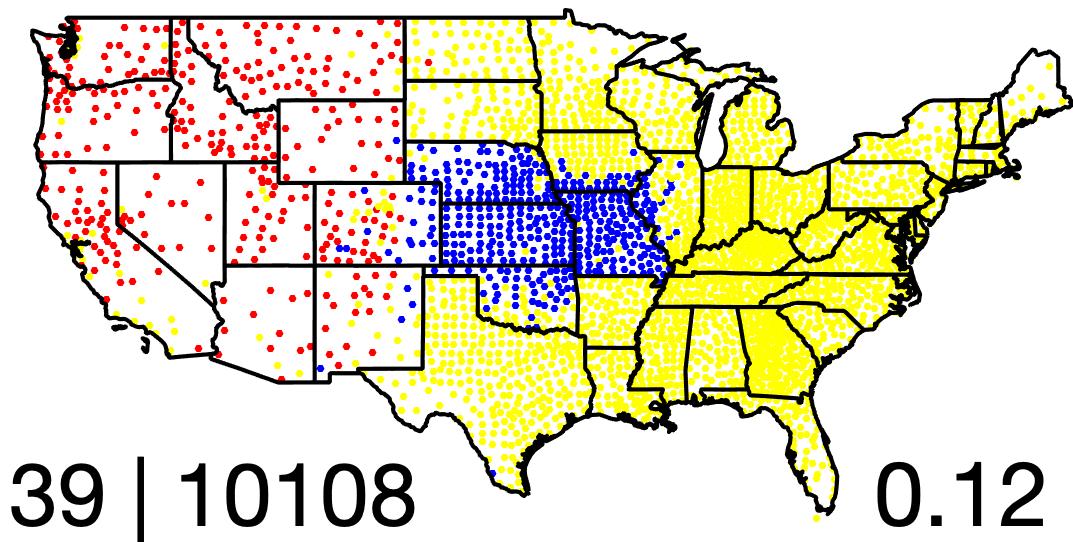}} 
& \includegraphics[width=\wid]{{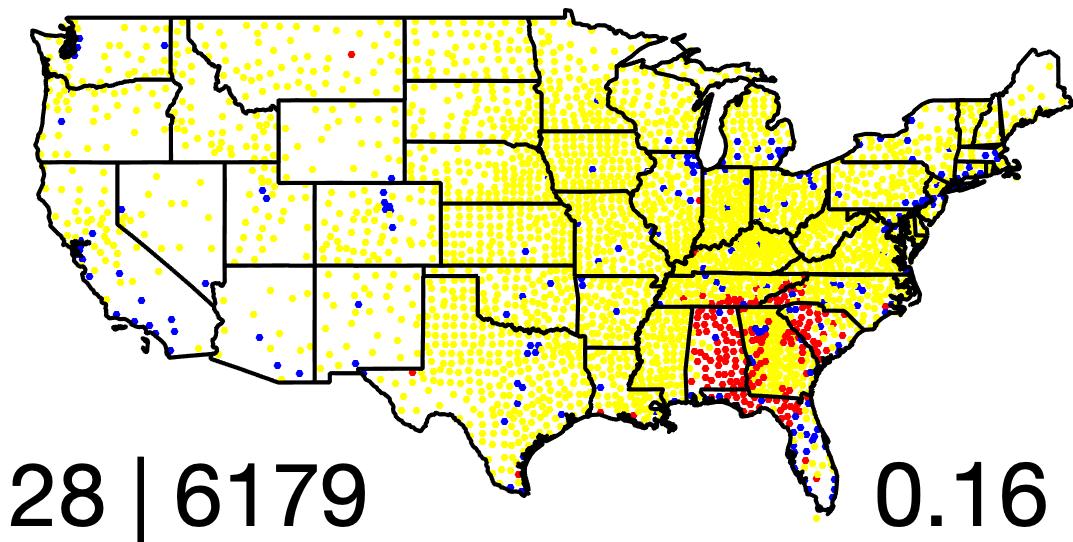}} 
& \includegraphics[width=\wid]{{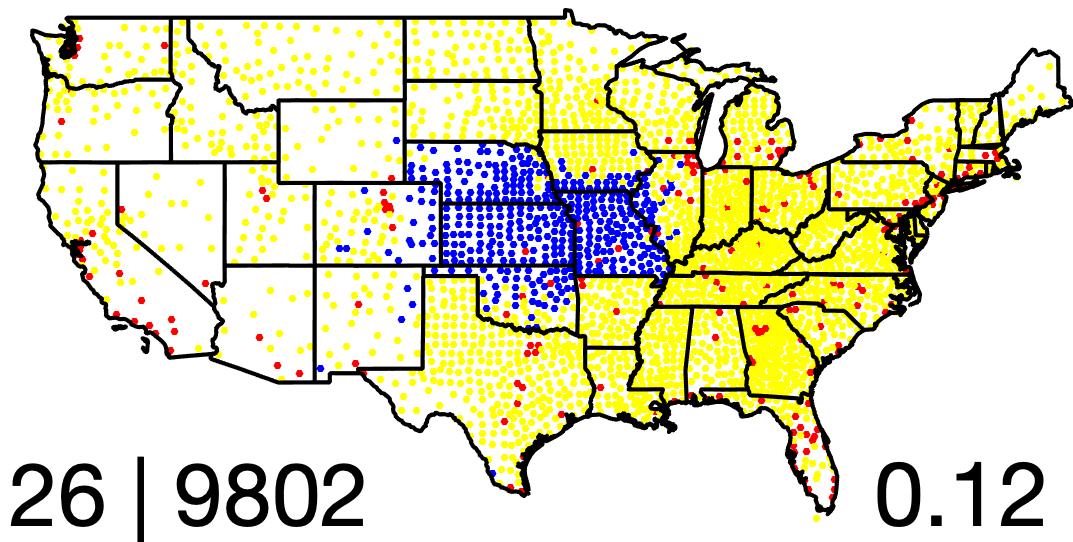}} 
& \includegraphics[width=\wid]{{Figures/mig1Top5/mig1_k10_DISGR__TopIFMpair4.jpg}} 
& \includegraphics[width=\wid]{{Figures/mig1Top5/mig1_k10_DISGR__TopIFMpair5.jpg}} \\ 
% 
\fi 
% 
\small{\textsc{DISGLR}}
& \includegraphics[width=\wid]{{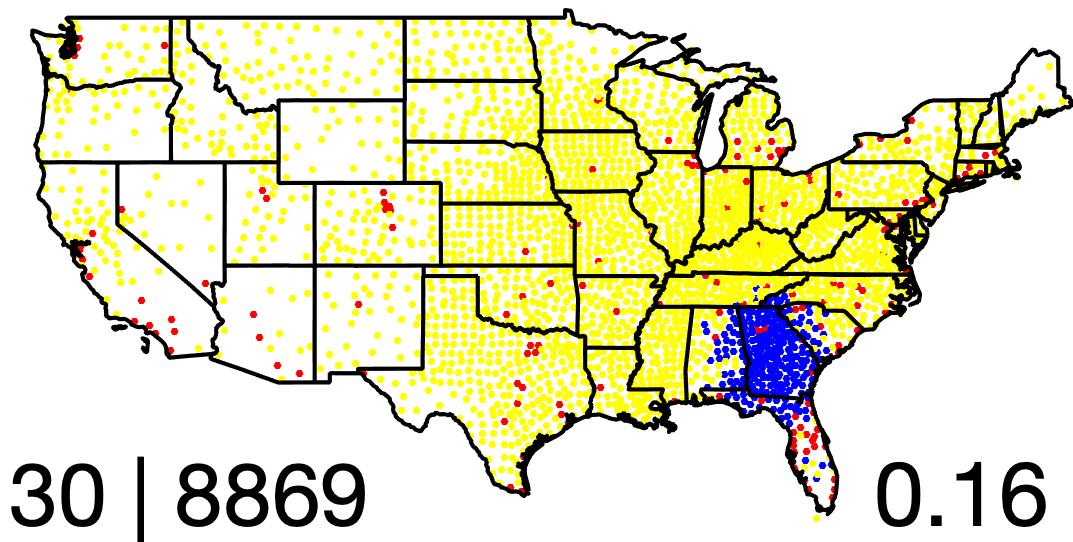}} 
& \includegraphics[width=\wid]{{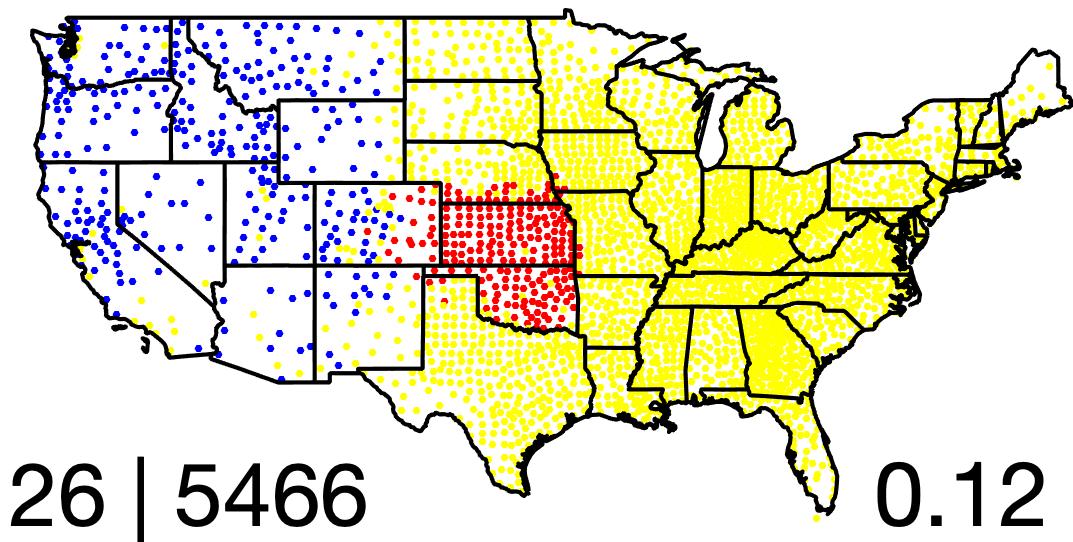}} 
& \includegraphics[width=\wid]{{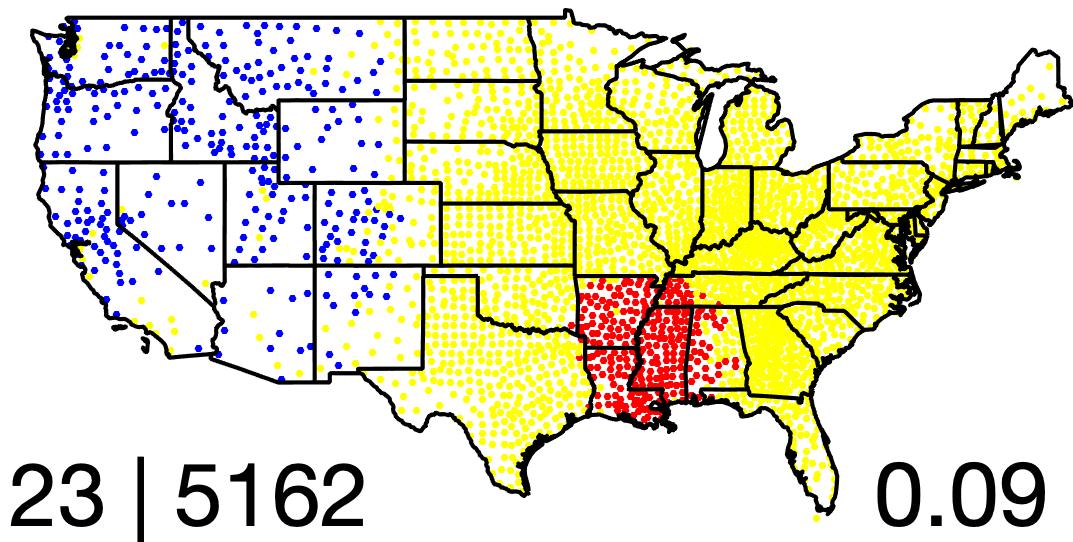}} 
% & \includegraphics[width=\wid]{{Figures/mig1Top5/mig1_k10_DISGLR__TopIFMpair4.jpg}} 
\\
 % LOUT_END: & \includegraphics[width=\wid]{{Figures/mig1Top5/mig1_k10_DISGLR__TopIFMpair5.jpg}} \\ 
%
\iffalse     % LOUT_END: 
\textsc{Bi-Sym}
& \includegraphics[width=\wid]{{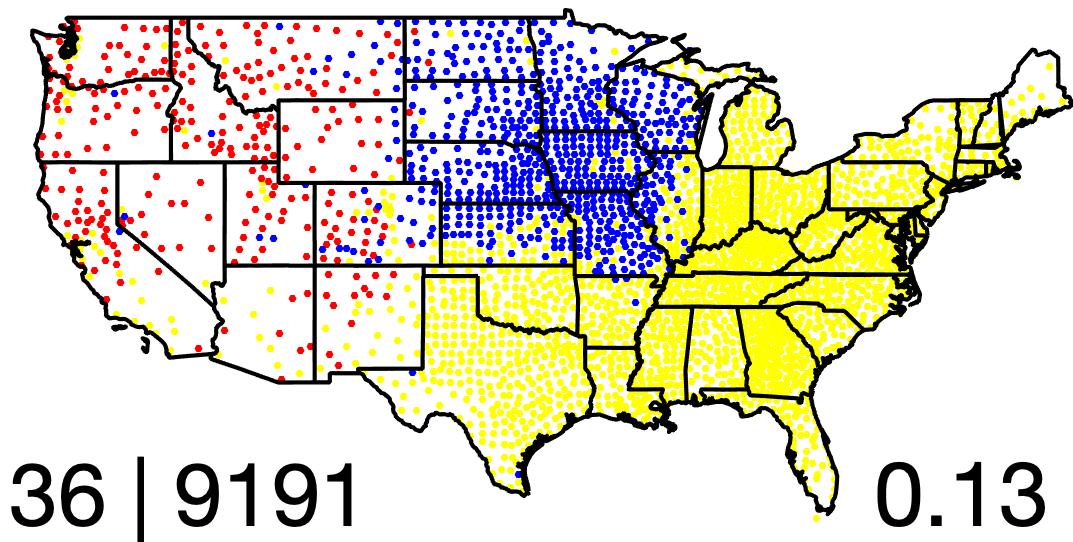}} 
& \includegraphics[width=\wid]{{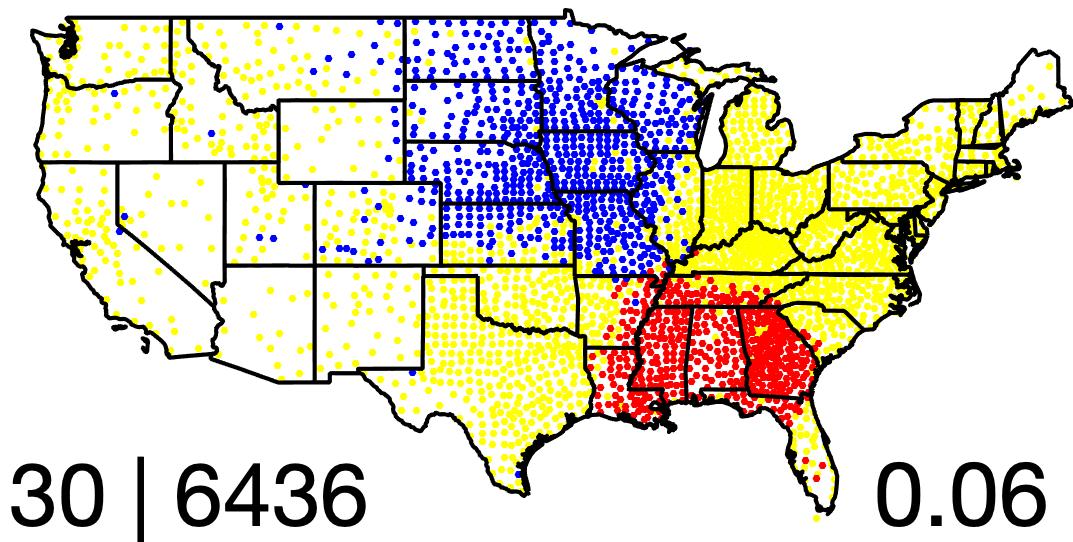}} 
& \includegraphics[width=\wid]{{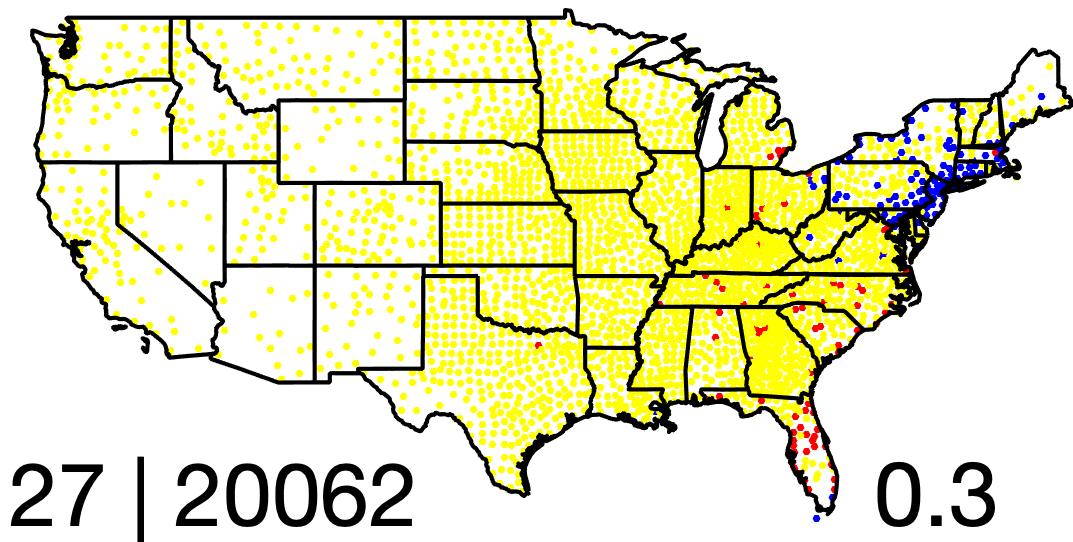}} 
& \includegraphics[width=\wid]{{Figures/mig1Top5/mig1_k10_BiSym__TopIFMpair4.jpg}} 
& \includegraphics[width=\wid]{{Figures/mig1Top5/mig1_k10_BiSym__TopIFMpair5.jpg}} \\ 
\fi 
%
\small{\textsc{DD-Sym}}
& \includegraphics[width=\wid]{{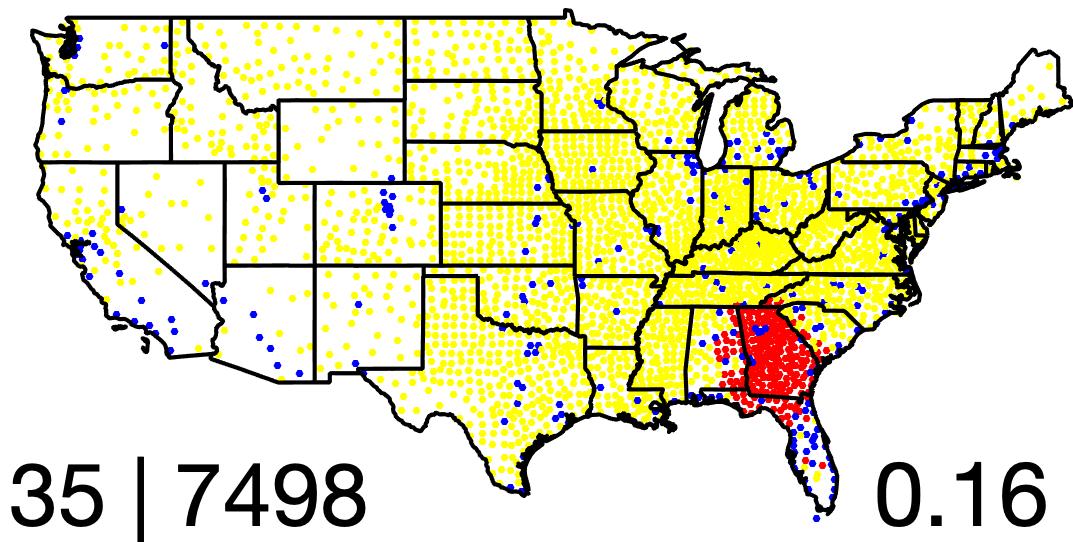}} 
& \includegraphics[width=\wid]{{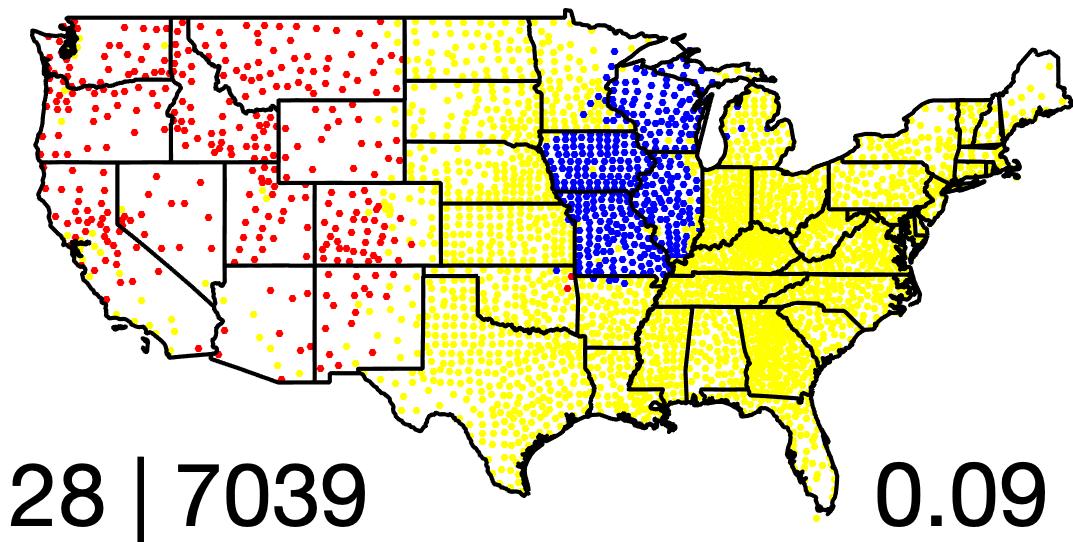}} 
& \includegraphics[width=\wid]{{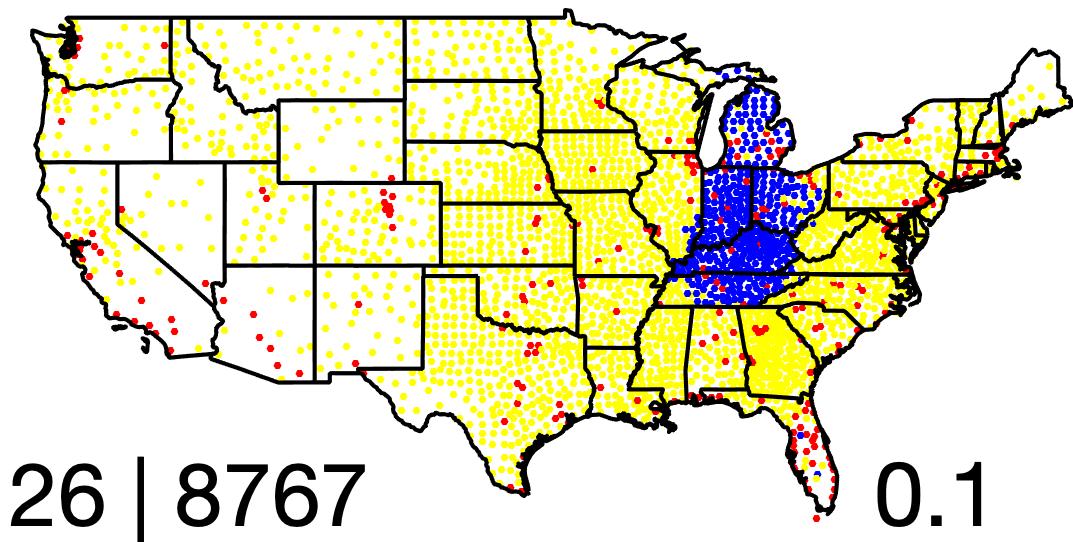}} 
% & \includegraphics[width=\wid]{{Figures/mig1Top5/mig1_k10_DDSym__TopIFMpair4.jpg}}  
\\
 % LOUT_END: & \includegraphics[width=\wid]{{Figures/mig1Top5/mig1_k10_DDSym__TopIFMpair5.jpg}} \\ 
% 
\iffalse     % LOUT_END: 
\textsc{Herm}
& \includegraphics[width=\wid]{{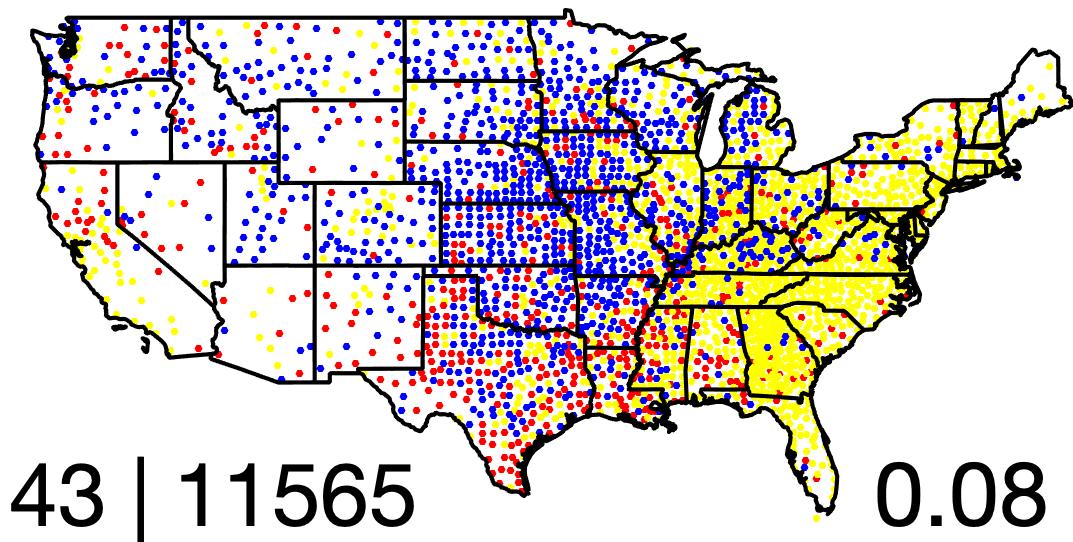}} 
& \includegraphics[width=\wid]{{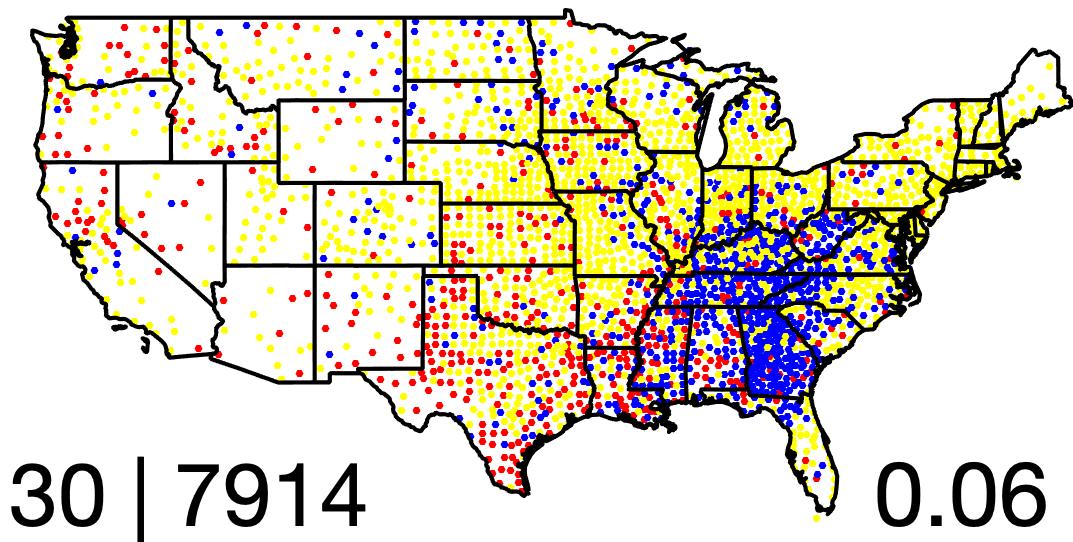}} 
& \includegraphics[width=\wid]{{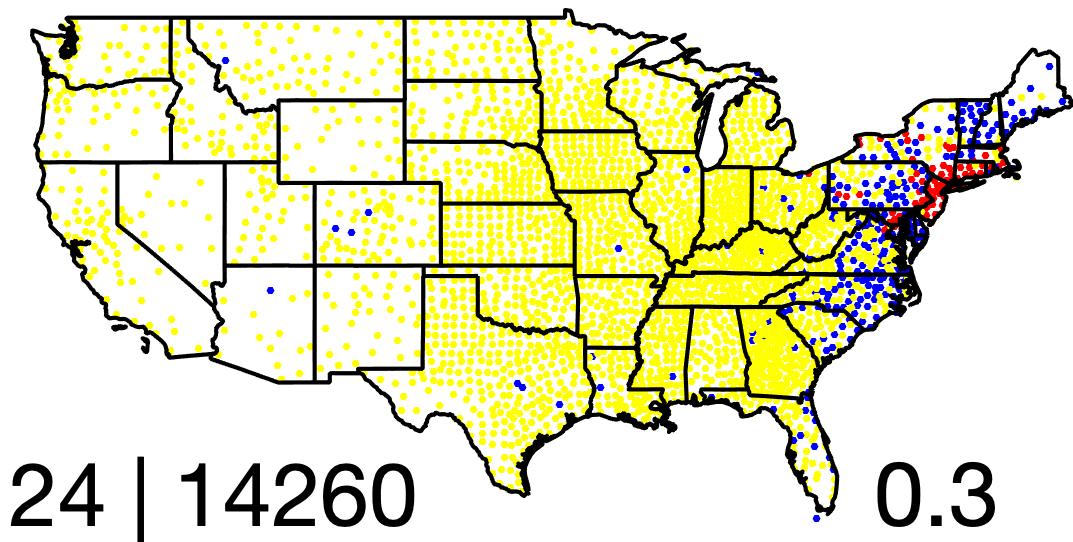}} 
% & \includegraphics[width=\wid]{{Figures/mig1Top5/mig1_k10_Herm__TopIFMpair4.jpg}}
\\
\fi 
 % LOUT_END: & \includegraphics[width=\wid]{{Figures/mig1Top5/mig1_k10_Herm__TopIFMpair5.jpg}} \\ 
%
\small{\textsc{Herm-RW}}
& \includegraphics[width=\wid]{{Figures/mig1Top5/mig1_k10_HermRW__TopIFMpair1.jpg}} 
& \includegraphics[width=\wid]{{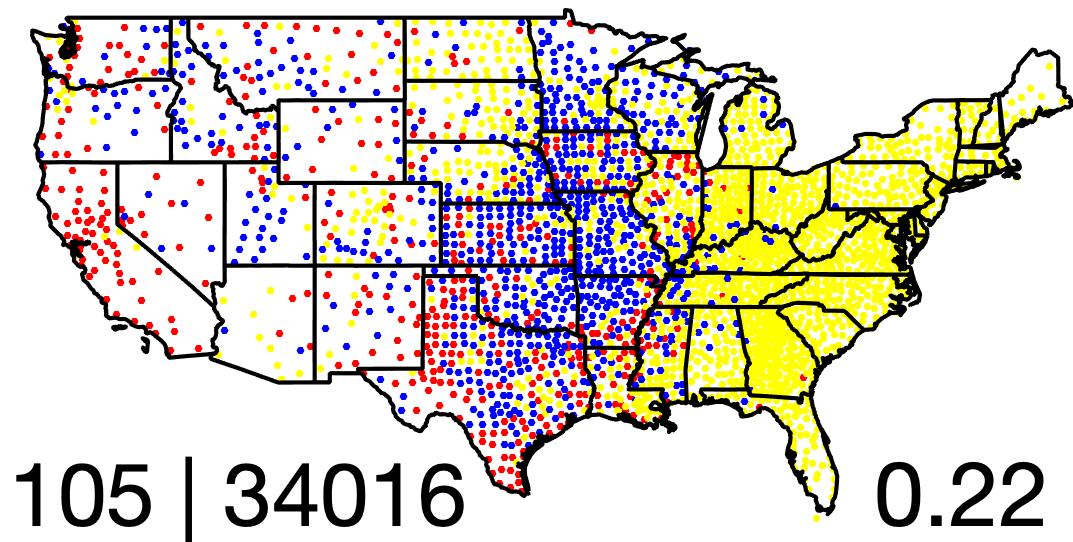}} 
& \includegraphics[width=\wid]{{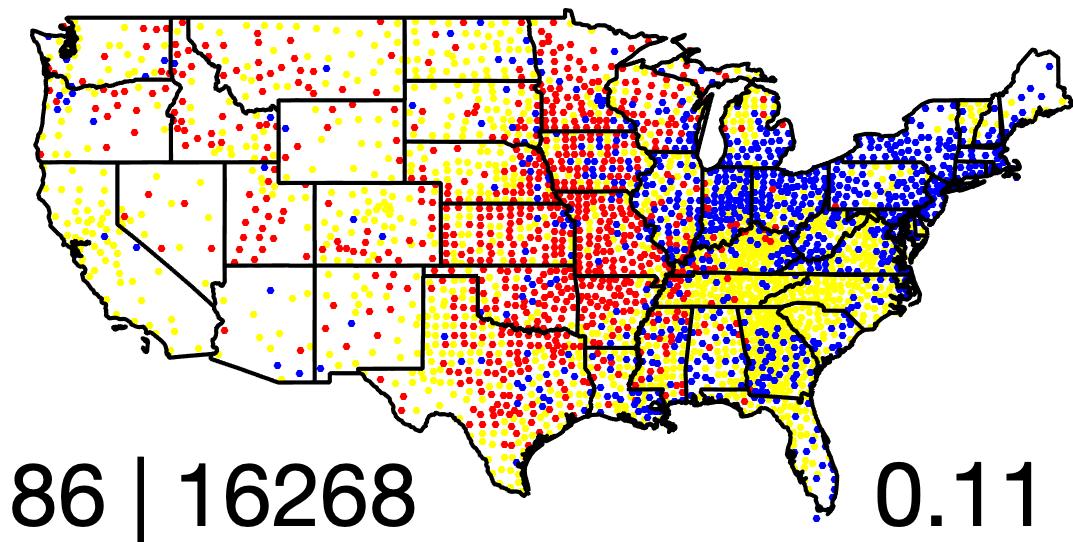}} 
% & \includegraphics[width=\wid]{{Figures/mig1Top5/mig1_k10_HermRW__TopIFMpair4.jpg}}
\\
 % LOUT_END: & \includegraphics[width=\wid]{{Figures/mig1Top5/mig1_k10_HermRW__TopIFMpair5.jpg}} \\
% 
\iffalse     % LOUT_END: 
\textsc{Herm-Sym}
& \includegraphics[width=\wid]{{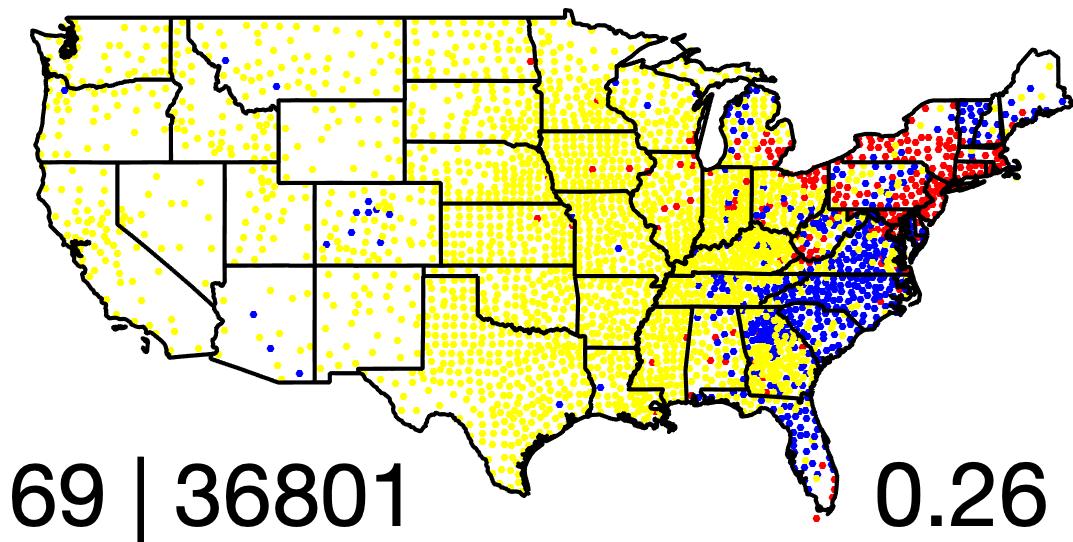}} 
& \includegraphics[width=\wid]{{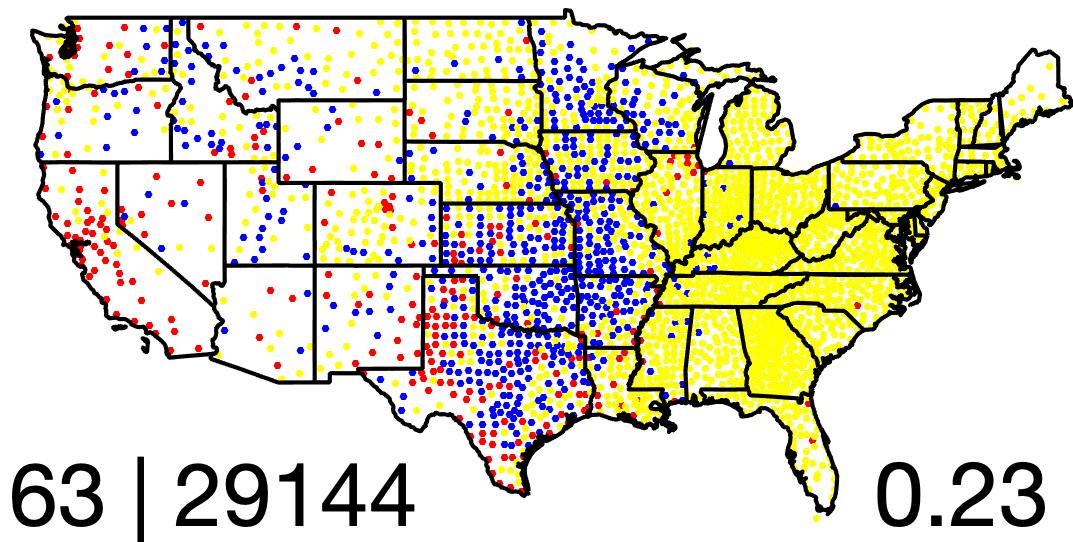}} 
& \includegraphics[width=\wid]{{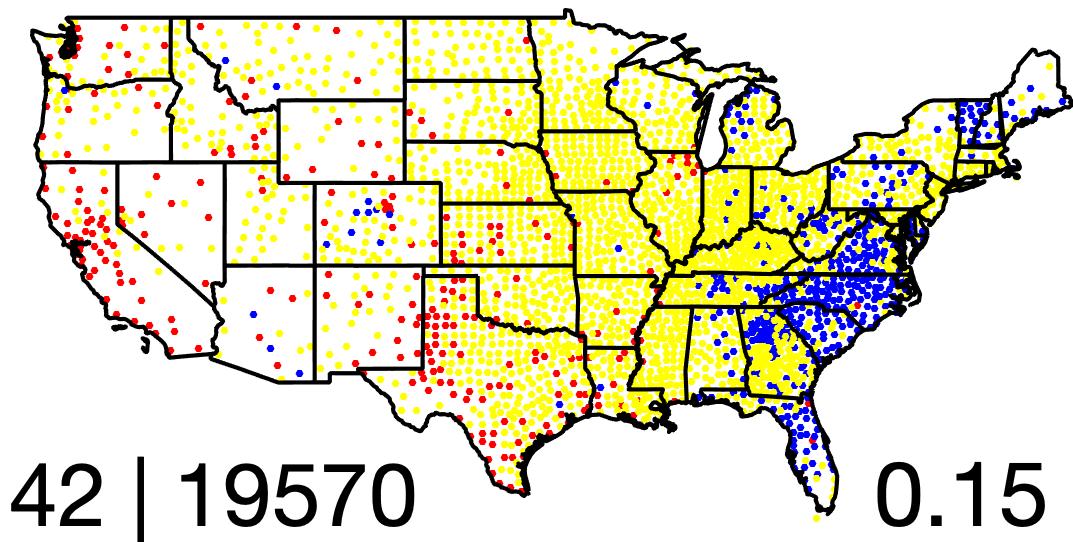}} 
& \includegraphics[width=\wid]{{Figures/mig1Top5/mig1_k10_HermSym__TopIFMpair4.jpg}} 
& \includegraphics[width=\wid]{{Figures/mig1Top5/mig1_k10_HermSym__TopIFMpair5.jpg}} \\
\fi 
%
\end{tabular}
\captionsetup{width=0.99\linewidth}
\vspace{-1mm}
\captionof{figure}{\small The top three largest size-normalised cut imbalance pairs for the \textsc{US-Migration} data with $k=10$ clusters. Red denotes the source cluster, and blue denotes the destination cluster. For each plot, the bottom left text contains the numerical values (rounded to nearest integer) of the % two 
normalised  $\cis$ and $\civ$ pairwise cut imbalance values, and the bottom right text contains the 
% numerical 
$\cip$ cut imbalance value in $ [0,1/2] $.
% \noteMC{todo: drop color bar and insert the CI values [CI,CIM,CIV] in the figure (title or large text)}.
}
\label{fig:mig1Top5}	
\end{table*}

\vspace{0mm} 
\begin{figure}[t]
\centering 
\vspace{-2mm} 
\captionsetup[subfigure]{skip=2pt}
\subcaptionbox{$k=2$}[0.24\columnwidth]{\includegraphics[width=0.26\columnwidth]{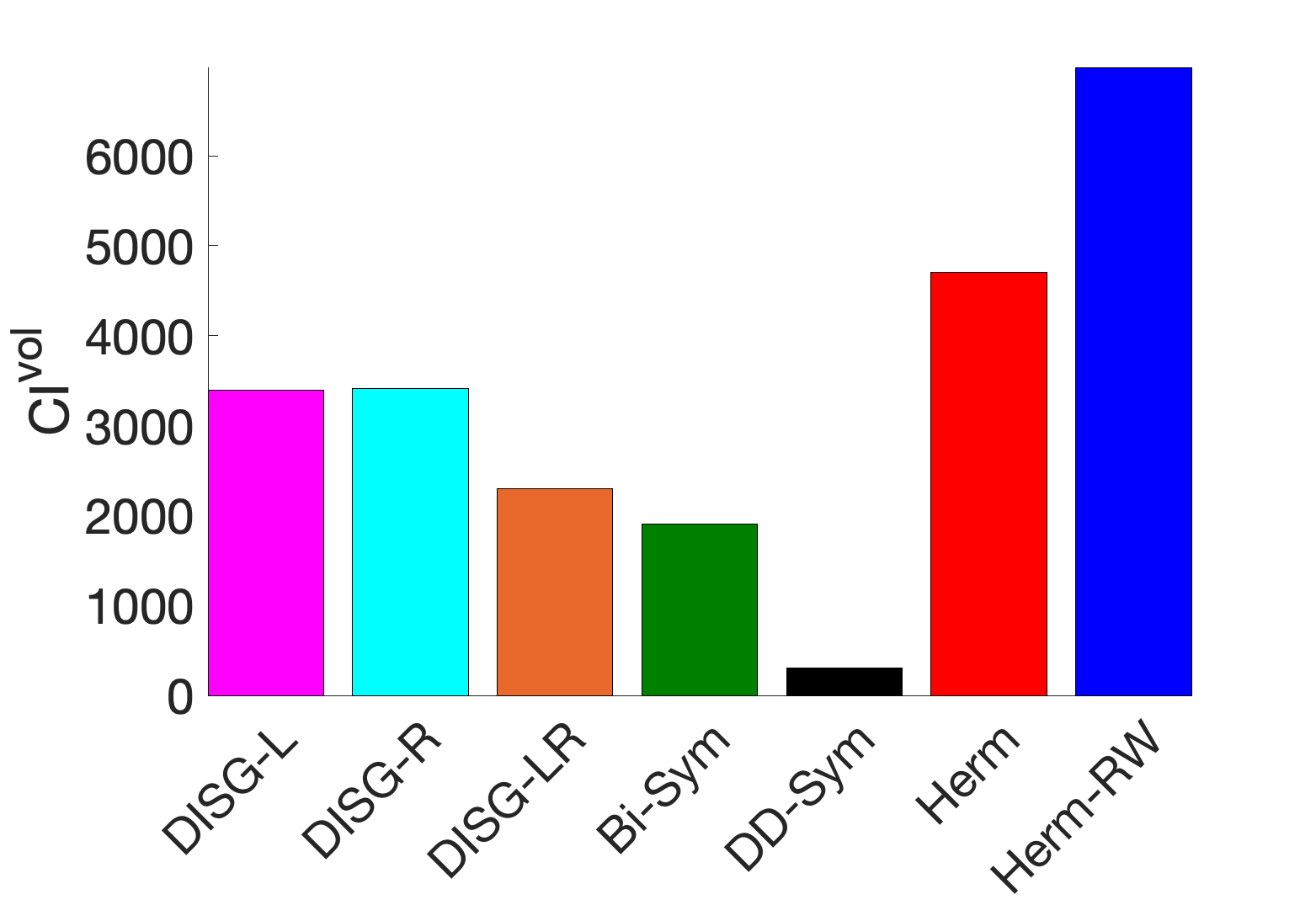} } %  \subcaption{$p = 0.01$}    \label{fig:5a}  % \par\vfill 
%  \hspace{-10mm}
\subcaptionbox{$k=3$}[0.24\columnwidth]{\includegraphics[width=0.26\columnwidth]{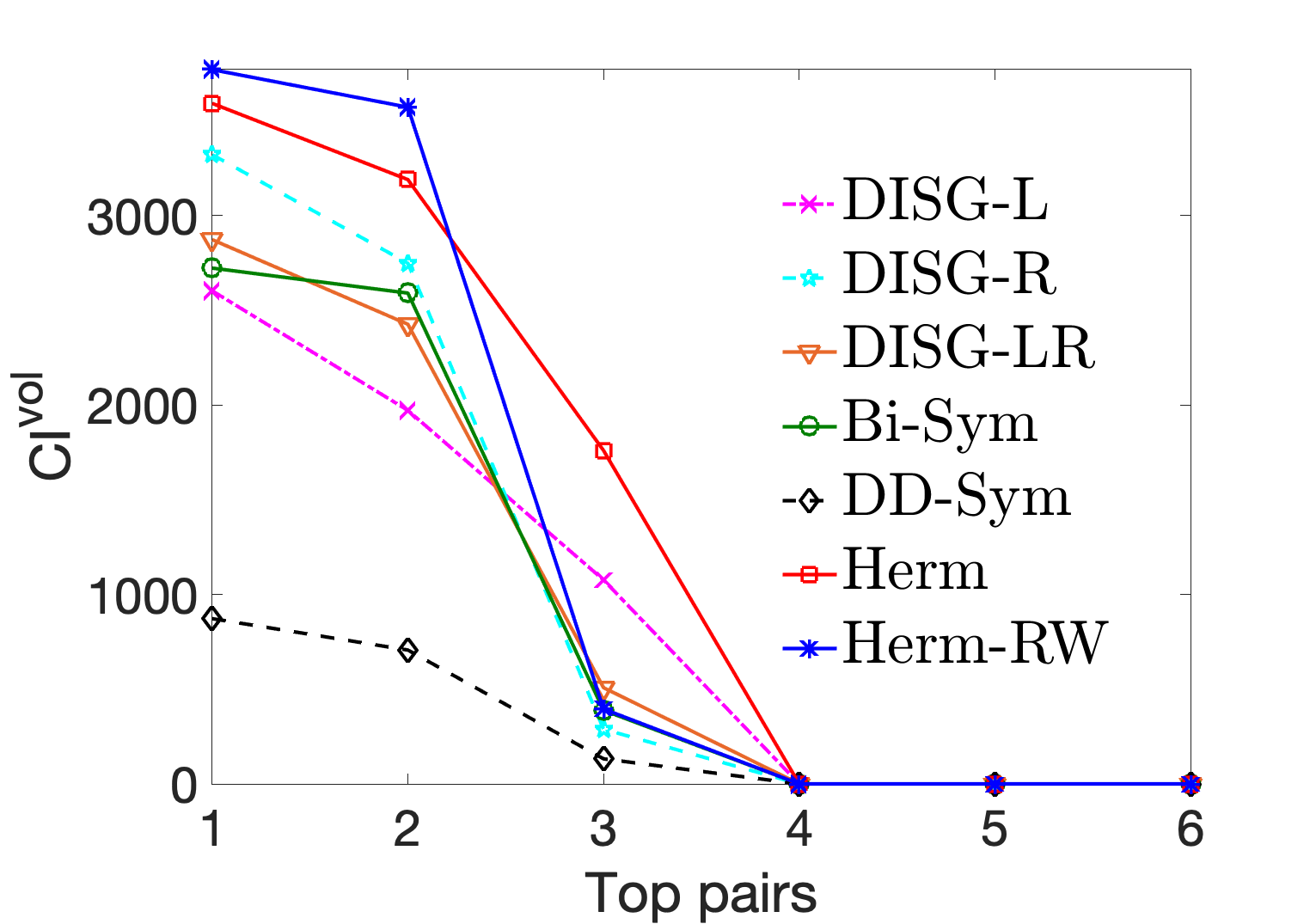} }
\subcaptionbox{$k=5$}[0.24\columnwidth]{\includegraphics[width=0.26\columnwidth]{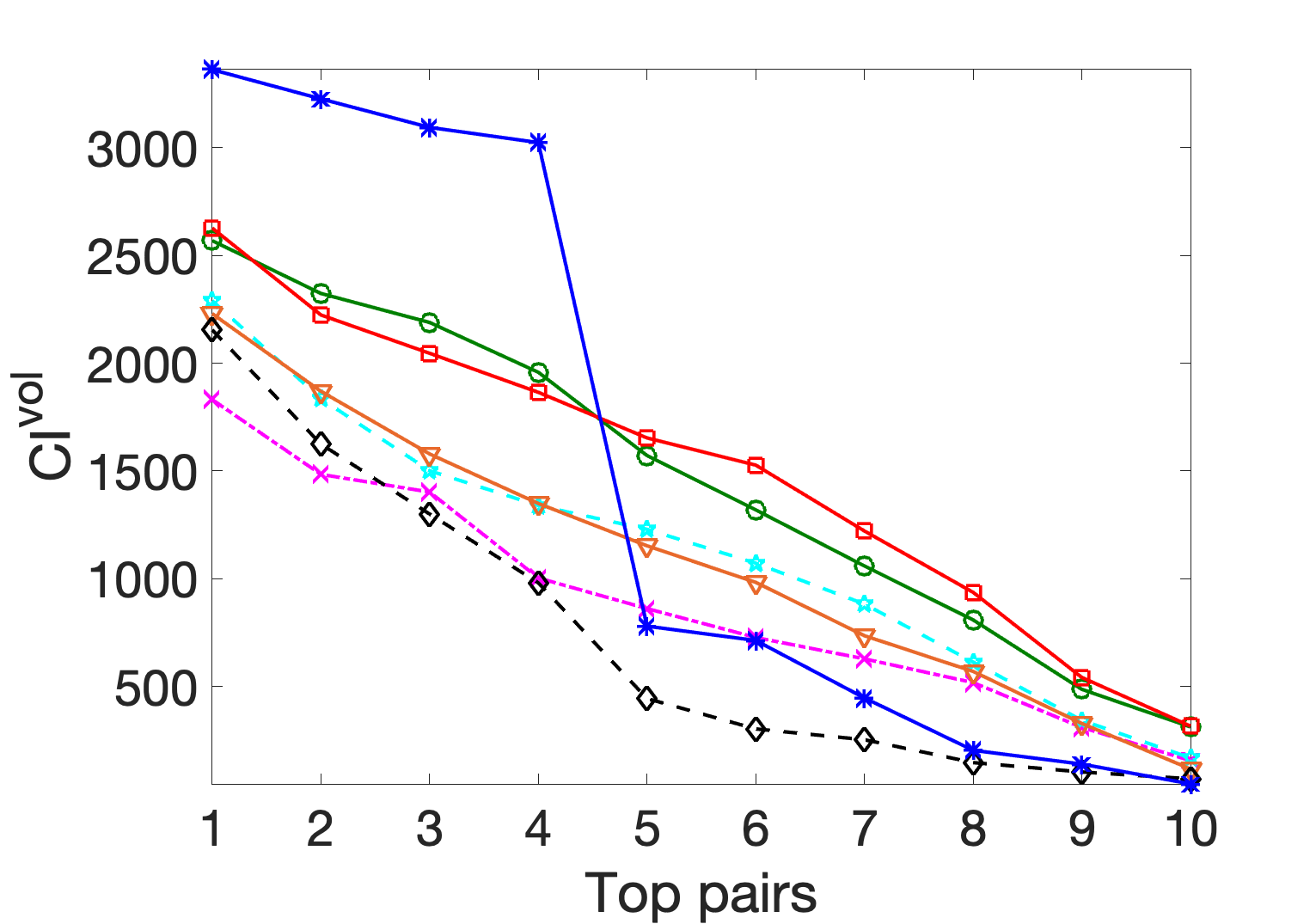} }
\subcaptionbox{$k=8$}[0.24\columnwidth]{\includegraphics[width=0.26\columnwidth]{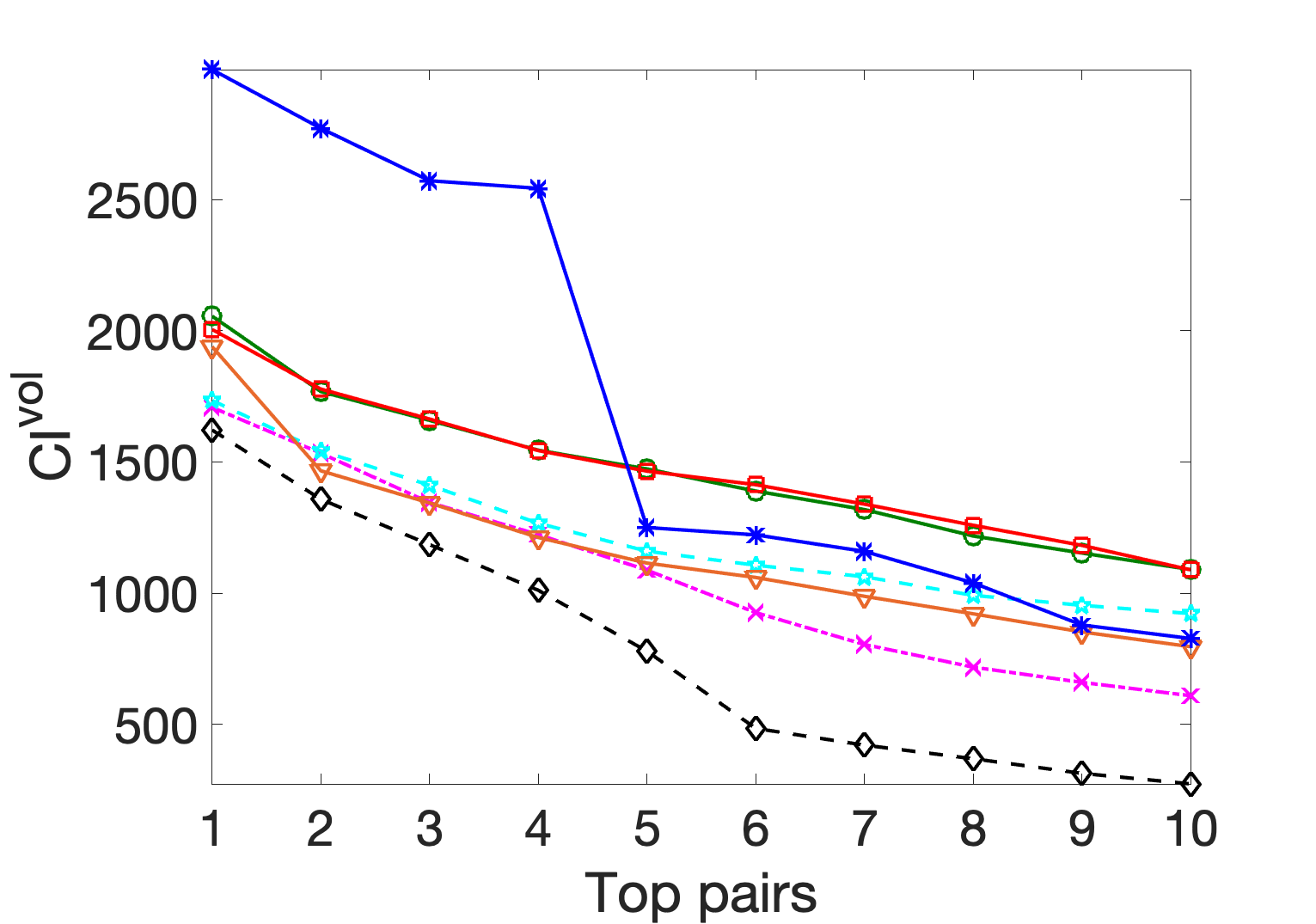} }
% \subcaption{$p = 0.02$} % \label{fig:5b}
% Note: if we turn on the \subcaption{} - the two figures end up one under the other...
\caption{\small The top $\civ$ scores attained by pairs of clusters, for the \textsc{BLOG} data set with varying $k$.  
% with $N=?$ and $k=\{2,3,5,8\}$ clusters (averaged over 20 runs). 
} 
\label{fig:scanID_9_blog}
\end{figure}

\vspace{0.3cm}

\underline{\emph{\textsc{BLOG} Network.}}
We consider the \textsc{BLOG} network from the 2004 US presidential election, as in 
Adamic and Glance \cite{adamic2005political}, who recorded the hyperlinks 
% connections among 
between 
$N=1,212$  political blogs and revealed that such connections were highly dependent on the blog's political orientation. 
% Since some vertices have zero in-degree or out-degrees in this data set, we regularise the adjacency matrix \textcolor{red}{$G$} by shifting it by a multiple of the identity.  
Figure \ref{fig:scanID_9_blog} shows the $\civ$ scores of the top pairs. % , for the largest connected component of the \textsc{BLOG} network, which contains  $N=1,212$ blogs. 
% We have included in the analysis a clustering with 
We also consider the case $k=2$, as    the  % \textsc{BLOG} 
network has an underlying  % cluster 
structure with two clusters corresponding to the Republican and Democratic parties. Overall, the two variants of our algorithm vastly outperform other methods, with \textsc{Herm-RW} as the best performer. 
% \textcolor{red}{Is it sufficient to only report the case of $k=2$? I feel this is already quite representative, and it will save us a lot space.} 
\vspace{-1mm}

\begin{figure}

\end{figure}

\begin{figure}   \end{figure}

%\textcolor{blue}{Since our goal is to uncover structures in the graph in terms of flow imbalance between pairs of subsets of vertices, we quantify this measure for the entire graph by considering all ${k \choose 2} $ pairs of clusters, and summing together the largest $2k$ $\civ$ values:  \begin{equation}     \text{Top}\civ = \sum_{t=1}^{2k} \civ({C}_{j_t},C_{{\ell}_t}) \end{equation}   where $({C}_{j_t},C_{\ell_t})$ denotes the $t$-th largest $\civ$ cut imbalance pair. We only consider the top $2k$ largest values since we do not expect for cut imbalances to manifest between all pairs of clusters (in other words, the meta-graph $F$ is typically also sparse). We also define an analogous measure for the cluster-size normalisation and we denote it by $\cistop$. }\textcolor{red}{We'll drop this paragraph completely. right?}

\vspace{-2mm}

\bibliographystyle{plain}
\bibliography{references}

\begin{thebibliography}{10}

\bibitem{adamic2005political}
Lada~A Adamic and Natalie Glance.
\newblock The political blogosphere and the 2004 {US} election: divided they
  blog.
\newblock In {\em Proceedings of the 3rd international workshop on Link
  discovery}, pages 36--43, 2005.

\bibitem{kmeanspp}
David Arthur and Sergei Vassilvitskii.
\newblock k-means++: the advantages of careful seeding.
\newblock In {\em Proceedings of the Eighteenth Annual {ACM-SIAM} Symposium on
  Discrete Algorithms, {SODA} 2007}, pages 1027--1035, 2007.

\bibitem{census}
U.~S. {Census Bureau}, 2002.
\newblock \url{www.census.gov/population/www.cen2000/ctytoctyflow/index.html}.

\bibitem{ChungRadcliffe}
Fan Chung and Mary Radcliffe.
\newblock On the spectra of general random graphs.
\newblock {\em Electronic Journal of Combinatorics}, 18(1), 2011.

\bibitem{syncRank}
M.~Cucuringu.
\newblock {Sync-Rank: Robust Ranking, Constrained Ranking and Rank Aggregation
  via Eigenvector and Semidefinite Programming Synchronization}.
\newblock {\em IEEE Transactions on Network Science and Engineering},
  3(1):58--79, 2016.

\bibitem{belgium2010}
M.~Cucuringu, V.~Blondel, and P.~Van~Dooren.
\newblock Extracting spatial information from networks with low order
  eigenvectors.
\newblock {\em Physical Review E}, 87, 2013.

\bibitem{asap2d}
M.~Cucuringu, Y.~Lipman, and A.~Singer.
\newblock Sensor network localization by eigenvector synchronization over the
  {E}uclidean group.
\newblock {\em ACM Transactions on Sensor Networks}, 8(3):19:1--19:42, 2012.

\bibitem{davisKahan}
C.~Davis and W.~M. Kahan.
\newblock The rotation of eigenvectors by a perturbation. {III}.
\newblock {\em SIAM Journal on Numerical Analysis}, 7:1--46, 1970.

\bibitem{UKmigdata}
Office for National~Statistics.
\newblock Internal migration: detailed estimates by origin and destination
  local authorities, age and sex, 2018.

\bibitem{ARI_JMLR_Gates_Ahn}
Alexander~J. Gates and Yong-Yeol Ahn.
\newblock The impact of random models on clustering similarity.
\newblock {\em Journal of Machine Learning Research}, 18(87):1--28, 2017.

\bibitem{GM17}
Krystal Guo and Bojan Mohar.
\newblock Hermitian adjacency matrix of digraphs and mixed graphs.
\newblock {\em Journal of Graph Theory}, 85(1):217--248, 2017.

\bibitem{sbm}
Paul~W. Holland, Kathryn~Blackmond Laskey, and Samuel Leinhardt.
\newblock Stochastic blockmodels: first steps.
\newblock {\em Social Networks}, 5(2):109--137, 1983.

\bibitem{kmeansalgo}
Amit Kumar, Yogish Sabharwal, and Sandeep Sen.
\newblock A simple linear time (1+$\varepsilon$)-approximation algorithm for
  $k$-means clustering in any dimensions.
\newblock In {\em Proceedings of the 45th Symposium on Foundations of Computer
  Science}, pages 454--462, 2004.

\bibitem{hoc}
James~R. Lee, Shayan~Oveis Gharan, and Luca Trevisan.
\newblock Multiway spectral partitioning and higher-order {C}heeger
  inequalities.
\newblock {\em Journal of the ACM}, 61(6), 2014.

\bibitem{leiRinaldo}
Jing Lei and Alessandro Rinaldo.
\newblock Consistency of spectral clustering in stochastic block models.
\newblock {\em The Annals of Statistics}, 43(1):215--237, 2015.

\bibitem{snapnets}
Jure Leskovec and Andrej Krevl.
\newblock {SNAP Datasets}: {Stanford} large network dataset collection.
\newblock \url{http://snap.stanford.edu/data}, June 2014.

\bibitem{MV13}
Fragkiskos~D. Malliaros and Michalis Vazirgiannis.
\newblock Clustering and community detection in directed networks: {A} survey.
\newblock {\em Physics Reports}, 533(4):95--142, 2013.

\bibitem{RankCentrality}
Sahand Negahban, Sewoong Oh, and Devavrat Shah.
\newblock Iterative ranking from pair-wise comparisons.
\newblock In {\em Advances in Neural Information Processing Systems 25}, pages
  2474--2482, 2012.

\bibitem{Ng}
A.~Y. Ng, M.~I. Jordan, and Y.~Weiss.
\newblock On spectral clustering: Analysis and an algorithm.
\newblock In {\em Advances in Neural Information Processing Systems}, pages
  849--856, 2001.

\bibitem{census_rep}
M.~J. Perry.
\newblock State-to-{S}tate {M}igration {F}lows: 1995 to 2000.
\newblock {\em Census 2000 Special Reports}, 2003.

\bibitem{rand1971}
W.M. Rand.
\newblock Objective criteria for the evaluation of clustering methods.
\newblock {\em Journal of the American Statistical Association},
  66(336):846--850, 1971.

\bibitem{co-clustering}
Karl Rohe, Tai Qin, and Bin Yu.
\newblock Co-clustering directed graphs to discover asymmetries and directional
  communities.
\newblock {\em Proceedings of the National Academy of Sciences},
  113(45):12679--12684, 2016.

\bibitem{SP11}
Venu Satuluri and Srinivasan Parthasarathy.
\newblock Symmetrizations for clustering directed graphs.
\newblock In {\em Proceedings of the 14th International Conference on Extending
  Database Technology}, pages 343--354, 2011.

\bibitem{ShiM00}
J.~Shi and J.~Malik.
\newblock Normalized cuts and image segmentation.
\newblock {\em {IEEE} Transactions on Pattern Analysis and Machine
  Intelligence}, 22(8):888--905, 2000.

\bibitem{SingerVDM}
A.~Singer and H.~T. Wu.
\newblock {Vector diffusion maps and the connection Laplacian}.
\newblock {\em Communications on Pure and Applied Mathematics}, 2012.

\bibitem{luxburg07}
Ulrike von Luxburg.
\newblock A tutorial on spectral clustering.
\newblock {\em Statistics and Computing}, 17(4):395--416, 2007.

\bibitem{white86a}
J.G. White, E.~Southgate, J.~N. Thomson, and S.~Brenner.
\newblock The structure of the nervous system of the nematode c. elegans.
\newblock {\em Philosophical transactions Royal Society London}, 314:1--340,
  1986.

\end{thebibliography}

\pagebreak

\appendix
%   \nolinenumbers

% \begin{center}
% %\large{\textbf{Supplementary material of submission  ID 1008  }}\\
% \large{\textbf{Supplementary material}}\\

% \vspace{0.5cm}

% \end{center}

%\setcounter{page}{1}

\section{Appendix}

In this appendix, we present a more detailed analysis of our algorithm and its performance on various additional  data sets. This appendix is structured as follows: Section~\ref{sec:aa} presents all the omitted proofs of the theorems and lemmas from Section~\ref{sec:analysis}; in Section~\ref{sec:appexp}, through additional experimental results, we give a detailed comparison between our algorithm and existing methods from  the literature. 

\subsection{Omitted proof details\label{sec:aa}}

In this section we present the omitted technical details about the analysis from Section~\ref{sec:analysis}. We first introduce some notation that will be used in the analysis. 
For any Hermitian matrix $A$ and parameters $\alpha\leq \beta$, let $P_{(\alpha,\beta)}(A)$  be the projection on the subspace spanned by the  eigenvectors of $A$ with the corresponding  eigenvalues in $(\alpha, \beta)$, and we define the matrix $P_{[\alpha,\beta]}(A)$ in a similar way. Notice that the matrix $P$ defined in Algorithm~\ref{alg:spectral} can be written as $P_{( -\infty, -\eps)\cup (\eps, +\infty) }(A)$.  
 
We now state two  results that will be used in the proofs below. The first is the well-known Davis-Kahan theorem, which bounds the perturbation of the eigenspaces of a matrix $H$ subject to random noise expressed by a matrix $R$. It will be used in the proof of \lemref{pqclose}.
\begin{thm}[Davis-Kahan, \cite{davisKahan}]
\label{thm:daviskahan}
Let $H,R \in \R^{d \times d}$ be Hermitian matrices. Then, for any $a \le \beta$ and $\delta > 0$ it holds that
\[
\left\|P_{[\alpha,\beta]}(H) - P_{(\alpha-\delta,\beta+\delta)}(H+R)\right\| \le \frac{\|R\|}{\delta}.
\]
\end{thm}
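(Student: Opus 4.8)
The plan is to run the classical Sylvester-equation argument underlying the Davis--Kahan $\sin\Theta$ theorem. Write $\tilde H = H + R$ and abbreviate $P = P_{[\alpha,\beta]}(H)$, $\tilde P = P_{(\alpha-\delta,\beta+\delta)}(\tilde H)$, and $Q = I - \tilde P = P_{\R\setminus(\alpha-\delta,\beta+\delta)}(\tilde H)$. Since $P$ is a spectral projection of $H$ and $Q$ one of $\tilde H$, we have the commutation relations $HP = PH$ and $\tilde H Q = Q\tilde H$. The object to control is the off-diagonal block $QP$, whose spectral norm equals the sine of the largest principal angle between $\mathrm{Im}(P)$ and $\mathrm{Im}(\tilde P)$; bounding it is the heart of the matter.

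First I would derive the Sylvester identity. Using $QRP = Q(\tilde H - H)P$ together with the two commutation relations,
\[
\tilde H(QP) - (QP)H \;=\; Q\tilde H P - QHP \;=\; Q(\tilde H - H)P \;=\; QRP ,
\]
so $QP$ is the image of the data $QRP$ under the linear map $X \mapsto \tilde H X - X H$. The key point is the spectral separation: on $\mathrm{Im}(P)$ the operator $H$ has spectrum inside $[\alpha,\beta]$, while on $\mathrm{Im}(Q)$ the operator $\tilde H$ has spectrum inside $\R\setminus(\alpha-\delta,\beta+\delta)$, and any two such points are at distance at least $\delta$. For Hermitian $H,\tilde H$ this makes the restricted Sylvester map invertible with inverse of norm at most $1/\delta$; concretely, diagonalising both matrices expresses the solution entrywise as $(QRP)_{ij}/(\mu_i-\lambda_j)$ with $|\mu_i-\lambda_j|\ge\delta$. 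Hence
\[
\|QP\| \;\le\; \tfrac{1}{\delta}\,\|QRP\| \;\le\; \tfrac{\|R\|}{\delta},
\]
since $P,Q$ are contractions. This already yields the one-sided estimate $\big\|(I-\tilde P)P\big\|\le \|R\|/\delta$, which is precisely what is needed when the result is invoked in \lemref{pqclose}.

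The main obstacle is upgrading this block bound to the full operator-norm difference $\|P-\tilde P\|$. Using the identity $\|P-\tilde P\|=\max\{\|(I-\tilde P)P\|,\ \|(I-P)\tilde P\|\}$ valid for orthogonal projections, it remains to bound the complementary block $\|(I-P)\tilde P\|$. This block is \emph{not} supplied by the $\delta$-widening alone: the reversed Sylvester computation produces the separation between the part of the spectrum of $H$ outside $[\alpha,\beta]$ and the part of the spectrum of $\tilde H$ inside $(\alpha-\delta,\beta+\delta)$, which is positive only when $H$ places no eigenvalues in the buffer zones $(\alpha-\delta,\alpha)\cup(\beta,\beta+\delta)$. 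I would therefore close the argument by invoking the spectral structure of the matrices to which the theorem is applied — for instance, in \lemref{pqclose} the matrix $\mathbb{E}A$ has its $\rank(\widetilde F)$ nonzero eigenvalues separated from $0$ by at least $\widetilde\rho\,pn$, so that with $\delta$ chosen small relative to this gap both invariant subspaces are separated by at least $\delta$ on the reversed side as well. The reversed $\mathrm{sep}$-bound then gives the same $\|R\|/\delta$ estimate for $\|(I-P)\tilde P\|$, and combining the two block bounds through the projection identity yields the stated inequality.
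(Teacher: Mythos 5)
A preliminary remark: the paper does not prove \thmref{daviskahan} at all --- it is imported from \cite{davisKahan} as a black box --- so there is no internal argument to compare yours against, and the proposal has to be assessed on its own terms. Your first half is the standard and essentially correct core of the $\sin\Theta$ theorem: the commutation relations, the identity $\widetilde{H}(QP)-(QP)H=QRP$, and the $\delta$-separation between $\sigma(H)\cap[\alpha,\beta]$ and $\sigma(\widetilde{H})\setminus(\alpha-\delta,\beta+\delta)$ correctly give $\|(I-\widetilde{P})P\|\le\|R\|/\delta$. The one soft spot is the justification of the constant: the entrywise formula $X_{ij}=(QRP)_{ij}/(\mu_i-\lambda_j)$ with $|\mu_i-\lambda_j|\ge\delta$ exhibits $X$ as a Schur multiplier applied to $QRP$, and a Schur multiplier with entries of modulus at most $1/\delta$ does not automatically have norm $1/\delta$ in the operator norm; the clean way to get the constant $1$ in this configuration is the integral representation of the Sylvester inverse, splitting $\sigma(\widetilde{H})$ restricted to $\mathrm{Im}(Q)$ into the parts above $\beta+\delta$ and below $\alpha-\delta$ and writing $X=\pm\int_0^\infty e^{-t\widetilde{H}}(QRP)\,e^{tH}\,dt$ on each part.

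The real issue is the one you flag yourself: the complementary block $\|(I-P)\widetilde{P}\|$ is not controlled by the stated hypotheses, and in fact the theorem as written is false, so this gap cannot be closed. A one-dimensional counterexample: take $R=0$ and $H=(\beta+\delta/2)$; then $P_{[\alpha,\beta]}(H)=0$ while $P_{(\alpha-\delta,\beta+\delta)}(H+R)=1$, so the left-hand side equals $1$ and the right-hand side equals $0$. What your argument actually proves is the correct one-sided statement, together with the two-sided statement under the additional hypothesis that $H$ has no eigenvalues in the buffer zones $(\alpha-\delta,\alpha)\cup(\beta,\beta+\delta)$. Your final paragraph silently upgrades this to the stated inequality by importing the spectral gap of $\mathbb{E}A$ from \lemref{pqclose}; that is the right observation about why the paper's \emph{use} of the theorem is sound (the nonzero eigenvalues of $\mathbb{E}A$ have absolute value at least $\widetilde{\rho}pn$, which under assumption~(\ref{eq:assumption}) dwarfs the cutoff $\eps$, so both separations hold there), but it is a property of the application, not of the theorem as stated. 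The honest fix is to add the buffer-zone hypothesis to the statement, after which your two block bounds combined through $\|P-\widetilde{P}\|=\max\{\|(I-\widetilde{P})P\|,\|(I-P)\widetilde{P}\|\}$ do complete the proof.
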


The other lemma that will be used in the analysis is the 
 following matrix concentration inequality.

\begin{thm}[\cite{ChungRadcliffe}]
\label{thm:matrixcher}
Let $X_1,X_2,\dots,X_m$ be independent random $d \times d$ Hermitian matrices. Moreover, assume that $\|X_j - \mathbb{E}{X_j}\| \le M$ for all $j$, and let $\sigma^2 = \|\sum_{j=1}^m \mathbb{E}{\left(X_j - \mathbb{E}{X_j}\right)^2}\|$.  Let $X = \sum_{j=1}^m X_j$. Then, for any $a>0$, it holds that 
\[
\mathbb{P}\left[\|X - \mathbb{E}{X}\| > a\right] \le 2d \exp\left( - \frac{a^2}{2\sigma^2 + 2Ma/3} \right).
\]
\end{thm}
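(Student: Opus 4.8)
The plan is to establish this bound via the matrix Laplace transform method of Ahlswede--Winter, in the refined form due to Tropp, which reduces a spectral-norm tail estimate to controlling the trace of a matrix moment generating function. First I would reduce the two-sided spectral-norm bound to two one-sided eigenvalue bounds by writing
\[
\mathbb{P}\left[\|X - \mathbb{E} X\| > a\right] \le \mathbb{P}\left[\lambda_{\max}(X - \mathbb{E} X) > a\right] + \mathbb{P}\left[\lambda_{\max}(\mathbb{E} X - X) > a\right],
\]
which is the source of the factor $2$ in front of $d$. By symmetry it then suffices to bound the first term, so I set $Y \defeq X - \mathbb{E} X = \sum_{j=1}^m Z_j$ with $Z_j \defeq X_j - \mathbb{E} X_j$ independent, centred, Hermitian, and satisfying $\|Z_j\| \le M$.

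The engine of the argument is the exponential Markov inequality for the top eigenvalue: for every $\theta > 0$,
\[
\mathbb{P}\left[\lambda_{\max}(Y) > a\right] \le e^{-\theta a}\, \mathbb{E}\,\tr \exp(\theta Y).
\]
The step I expect to be the main obstacle is decoupling $\mathbb{E}\,\tr \exp(\theta \sum_j Z_j)$ into per-summand contributions: because the $Z_j$ do not commute, $\exp(\theta \sum_j Z_j) \ne \prod_j \exp(\theta Z_j)$ and the scalar factorisation of the MGF is unavailable. To overcome this I would invoke Lieb's concavity theorem (that $A \mapsto \tr\exp(H + \log A)$ is concave on positive definite $A$), which, combined with Jensen's inequality applied summand by summand, yields the subadditivity of the matrix cumulant generating function:
\[
\mathbb{E}\,\tr \exp\!\Big(\sum_{j=1}^m \theta Z_j\Big) \le \tr \exp\!\Big(\sum_{j=1}^m \log \mathbb{E}\, e^{\theta Z_j}\Big).
\]
I note the tighter $\sigma^2 = \|\sum_j \mathbb{E} Z_j^2\|$ form in the statement (as opposed to $\sum_j \|\mathbb{E} Z_j^2\|$) is precisely what forces the use of Lieb's theorem rather than the cruder Golden--Thompson peeling.

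It remains to bound each matrix MGF. Using $\mathbb{E} Z_j = 0$ together with the operator inequality $Z_j^p \pleq M^{p-2} Z_j^2$ for $p \ge 2$ (valid since $\|Z_j\| \le M$), I would expand the exponential and sum the resulting series with the elementary estimate $p! \ge 2\cdot 3^{p-2}$ to obtain
\[
\log \mathbb{E}\, e^{\theta Z_j} \pleq g(\theta)\, \mathbb{E} Z_j^2, \qquad g(\theta) \defeq \frac{\theta^2/2}{1 - \theta M/3}, \quad 0 < \theta < 3/M.
\]
Summing over $j$, applying monotonicity of the trace exponential under the semidefinite order, and using $\tr \exp(H) \le d\, e^{\lambda_{\max}(H)}$ together with $\lambda_{\max}\big(\sum_j \mathbb{E} Z_j^2\big) = \sigma^2$ gives
\[
\mathbb{P}\left[\lambda_{\max}(Y) > a\right] \le d \exp\!\big(g(\theta)\sigma^2 - \theta a\big).
\]
Finally I would minimise the exponent over $\theta \in (0, 3/M)$; the choice $\theta = a/(\sigma^2 + Ma/3)$ lies in the admissible range and produces the exponent $-a^2/(2\sigma^2 + 2Ma/3)$. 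Running the identical argument on $-Y$ and adding the two tail bounds yields the factor $2d$ and completes the proof.
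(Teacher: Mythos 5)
Your proposal is correct: the reduction to one-sided $\lambda_{\max}$ bounds, the Laplace-transform step, the Lieb/Jensen subadditivity of the matrix cumulant generating function, the moment bound $\log \mathbb{E}\,e^{\theta Z_j} \preccurlyeq \frac{\theta^2/2}{1-\theta M/3}\,\mathbb{E}Z_j^2$, and the choice $\theta = a/(\sigma^2 + Ma/3)$ all check out and yield exactly the stated exponent and the factor $2d$. Note, however, that the paper does not prove this theorem at all --- it imports it verbatim from the cited reference of Chung and Radcliffe, which in turn rests on Tropp's matrix Bernstein machinery; so there is no in-paper proof to compare against, and your argument is a faithful, self-contained reconstruction of the standard derivation in that source.
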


We can now present the omitted proofs from Section \ref{sec:analysis}.

\begin{proof}[Proof of Proposition~\ref{prop2}]
First of all, we assume that $F$ is the matrix with two identical rows indexed by $j$ and $\ell$, and we prove that $\widetilde{F}$ has a nondistinguishing image.  To this end, notice that 
$
\widetilde{F}_{j,j} = \widetilde{F}_{\ell,\ell} = \widetilde{F}_{j,\ell} =\widetilde{F}_{\ell,j} = 0,
$
and there is an automorphism that swaps $j$ and $\ell$ such that the remaining rows look like the same. This implies that $
P_{\Im(\widetilde{F})}(j,\cdot) = P_{\Im(\widetilde{F})}(\ell,\cdot),
$
which proves the claim.

Secondly, we prove the other direction. Assume that  $P_{\Im(\widetilde{F})}(j,\cdot) = P_{\Im(\widetilde{F})}(\ell,\cdot)$ with  $0 \le j \ne \ell \le k-1$, and consider the vector $\chi \in \{-1,0,1\}^k$
which is $1$ in the $j$th entry, $-1$ in the $\ell$th entry, and zero otherwise. It is easy to check that  $P_{\Im(\widetilde{F})} \chi = \mathbf{0}$. This means that $\chi \in \ker(\widetilde{F})$ and $\widetilde{F} \chi = 0$, which implies that the columns
of $\widetilde{F}$ indexed by $j$ and $\ell$, as well as the corresponding rows are equal~(since $\widetilde{F}$ is Hermitian). Hence, the corresponding rows of $F$ must be equal. \qedhere
\end{proof}

We now devote our attention to prove \thmref{main}. The following lemma shows that the Hermitian adjacency matrix of a random graph generated from the  DSBM is concentrated around its expectation.
\begin{lem}
\label{lem:concentration}
Let $G \sim \mathcal{G}\left(k,n,p,q,F\right)$ with  $p=q$. Then, with high probability, we have that $\|A - \mathbb{E}{A}\| \le 10 \sqrt{pkn \log{n}}$.
\end{lem}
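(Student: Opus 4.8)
The plan is to write $A - \mathbb{E}A$ as a sum of independent random Hermitian matrices indexed by the (unordered) vertex pairs, and then invoke the matrix concentration inequality of \thmref{matrixcher}. For each pair $\{u,v\}$ let $X_{uv}$ be the Hermitian matrix that agrees with $A$ in the two entries $(u,v)$ and $(v,u)$ and is zero everywhere else; since the edge between $u$ and $v$ (together with its orientation) is sampled independently of all other edges, the matrices $\{X_{uv}\}_{u<v}$ are mutually independent, and, as $A$ has zero diagonal, $A = \sum_{u<v} X_{uv}$. Setting $Y_{uv} = X_{uv} - \mathbb{E}X_{uv}$, the goal becomes bounding $\big\|\sum_{u<v} Y_{uv}\big\|$.

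First I would bound the uniform norm $M$. Each $X_{uv}$ is supported on the $2\times 2$ principal submatrix indexed by $\{u,v\}$, where it has the shape $\left(\begin{smallmatrix} 0 & a \\ \bar a & 0\end{smallmatrix}\right)$ with $|a|\in\{0,1\}$; hence $\|X_{uv}\|\le 1$, and since $\|\mathbb{E}X_{uv}\| = p\,|2F_{\ell,j}-1| \le p$ for $u\in C_\ell,v\in C_j$, we get $\|Y_{uv}\| \le 1 + p \le 2$, so $M = O(1)$.

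The main computation, and the step I expect to be the crux, is the variance proxy $\sigma^2 = \big\|\sum_{u<v}\mathbb{E}Y_{uv}^2\big\|$. Squaring the $2\times 2$ block shows that $X_{uv}^2$ is \emph{diagonal}, equal to $|a|^2$ on the two diagonal positions $(u,u)$ and $(v,v)$; taking expectations, $\mathbb{E}X_{uv}^2$ places $p$ on each of those two diagonal entries (since $|a|^2 = 1$ exactly when the edge is present, an event of probability $p$). Subtracting $(\mathbb{E}X_{uv})^2$, which contributes only $p^2(2F_{\ell,j}-1)^2 \le p^2$ on the same diagonal positions, leaves $\mathbb{E}Y_{uv}^2$ diagonal with both nonzero entries at most $p$. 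Summing over all pairs, $\sum_{u<v}\mathbb{E}Y_{uv}^2$ is again diagonal, and its $(w,w)$ entry is a sum of at most $N-1 = kn-1$ terms each bounded by $p$; therefore $\sigma^2 \le pkn$.

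Finally I would substitute $M = O(1)$, $\sigma^2 \le pkn$, $d = kn$, and the deviation $a = 10\sqrt{pkn\log n}$ into \thmref{matrixcher}. In the relevant regime $pkn = \Omega(\log n)$ the quadratic term $2\sigma^2 \le 2pkn$ dominates the linear term $2Ma/3 = O(\sqrt{pkn\log n})$ in the denominator, so the exponent is $-\Omega(\log n)$ with a large absolute constant; this beats the prefactor $2d = 2kn = \mathrm{poly}(n)$, giving $\mathbb{P}\!\left[\|A-\mathbb{E}A\| > 10\sqrt{pkn\log n}\right] = o(1)$. The only delicate point beyond the variance computation is verifying that the constant $10$ is large enough to make the exponent exceed the logarithm coming from the $2kn$ prefactor, which is a routine check once $p$ is assumed to be at least around the connectivity threshold.
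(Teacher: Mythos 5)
Your proposal is correct and follows essentially the same route as the paper: decompose $A-\mathbb{E}A$ into independent Hermitian matrices supported on single vertex pairs, observe that each squared centered summand is diagonal with entries at most $O(p)$ so that $\sigma^2 \le pkn$, and apply \thmref{matrixcher} with $a = 10\sqrt{pkn\log n}$ and $d=kn$. Your variance computation is in fact slightly cleaner (you get $\sigma^2\le pkn$ directly from $p-p^2(2F_{\ell,j}-1)^2\in[0,p]$, where the paper settles for a factor-$2$ bound), and your closing remark about needing $pkn=\Omega(\log n)$ for the constant $10$ to suffice is a fair observation that the paper leaves implicit.
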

% Using \thmref{daviskahan} we will bound the distance between the image of $\mathbb{E}{A}$ and the top eigenspaces of $A$, which in turn depends on the spectral norm of $A-\mathbb{E}{A}$.  
% We now apply this theorem to bound $\|A-\mathbb{E}{A}\|$.
\begin{proof}
Let $M^{uv}\in\mathbb{C}^{N\times N}$ be the matrix 
with exactly two non-zero entries defined by 
$
\left(M^{uv} \right)_{u,v}=1$, $\left( M^{uv}\right)_{v,u} =-1. $
By definition, $(M^{uv})^2$ has exactly two nonzero entries, i.e., 
\begin{equation}
\label{eq:nonzeromsq}
\left(M^{uv}\right)^2_{u,u} = \left(M^{uv}\right)^2_{v,v} = -1.
\end{equation}
Let $X^{uv}$ be a random matrix defined by
\[
X^{uv} =  \begin{cases}
				i\cdot M^{uv} 	& \text{ if } \, u \leadsto v \\
				-i\cdot M^{uv}  	& \text{ if } \, v \leadsto u \\
				0 		& \text{ otherwise.}\\
			\end{cases}
\]
Observe that $\sum_{\{u,v\}} X^{uv} = A$, the adjacency matrix of $G$. 

Let $u,v \in V$ be a pair of vertices such that $u \in C_j$ and $v \in C_{\ell}$. Then, we have 
\begin{align*}
\mathbb{E}{X^{uv}} & = p \ \left( F_{j,\ell} M^{u,v}\cdot i  + F_{\ell,j} M^{v,u}\cdot i  \right)\\
& = p \ \left( F_{j,\ell} M^{u,v}\cdot i  - (1- F_{j,\ell}) M^{u,v}\cdot i  \right)\\
& = p \  (2 F_{j,\ell} -1 )M^{u,v}\cdot i\\
& = p \ \widetilde{F}_{j,\ell} M^{uv},
\end{align*}
and
% \begin{align*}
% \mathbb{E}{\left(X^{uv} - \mathbb{E}{X^{uv}}\right)^2}& = (1-p)\left(-\mathbb{E}{X^{uv}}\right)^2 + p F_{j,\ell} \left(M^{uv}\cdot i -\mathbb{E}{X^{uv}}\right)^2\\
% &  \qquad + p F_{\ell,j} \left(-M^{uv}\cdot i-\mathbb{E}{X^{uv}}\right)^2\\
% &= (1-p)\left(-\mathbb{E}{X^{uv}}\right)^2 + p \left(M^{uv}\cdot i -\mathbb{E}{X^{uv}}\right)^2 \\
% & = p\left(M^{uv}\right)^2\left((1-p)p \left(\widetilde{F}_{j,\ell}\right)^2  
%     + \left(i-p\widetilde{F}_{j,\ell}\right)^2\right).
% \end{align*}
    \begin{align*}
        \mathbb{E}{\left(X^{uv} - \mathbb{E}{X^{uv}}\right)^2}
        % &=
        % \mathbb{E}\left( \left(X^{uv}\right)^2
        % - X^{uv}\left(\mathbb{E}{X^{uv}}\right) - \left(\mathbb{E}{X^{uv}}\right) X^{uv}
        % + \left(\mathbb{E}{X^{uv}}\right)^2 \right) \\
        &=
        \mathbb{E} \left(X^{uv}\right)^2 - \left(\mathbb{E}{X^{uv}}\right)^2 \\
        &=
        -p \left(M^{uv}\right)^2 - p^2 \left(\widetilde{F}_{j,\ell}\right)^2 \left(M^{uv}\right)^2 \\
        &= \left(-p \left(\widetilde{F}_{j,\ell}\right)^2 - 1\right) p \left(M^{uv}\right)^2.
    \end{align*}
Moreover, $\left|-p \left(\widetilde{F}_{j,\ell}\right)^2 - 1\right| \leq 2$.
%Notice that $\left|(1-p)p \left(\widetilde{F}_{j,\ell}\right)^2  + \left(i-p\widetilde{F}_{j,\ell}\right)^2\right| \le 5$. 
    Therefore, since the spectral norm of a matrix is upper bounded by the sum of the absolute values of the entries in each row, by \eq{nonzeromsq} it holds that
  $
\left\|\sum_{u,v \in V} \mathbb{E}{\left(X^{uv} - \mathbb{E}{X^{uv}}\right)^2}\right\| \le 2pkn$.
Setting $a = 10 \sqrt{pkn \log{n}}$, $M = 1$, $\sigma^2 \le pkn$ and $d = kn$, we apply \thmref{matrixcher} to obtain the statement.
\end{proof}

We now combine \lemref{concentration} and \thmref{daviskahan} to bound how far the matrix $P$ computed by Algorithm~\ref{alg:spectral} is to the projection on the image of $\mathbb{E}A$. 
\begin{lem}
\label{lem:pqclose}
Let $G \sim \mathcal{G}\left(k,n,p,q,F\right)$ with  $p=q$. Let $Q$ be the projection on the image of $\mathbb{E}{A}$, i.e., $Q = P_{\Im(\mathbb{E}{A})}$ and $P$ as in Algorithm~\ref{alg:spectral}. Moreover, set the parameter $\eps$ of Algorithm~\ref{alg:spectral} to $\eps = 20 \sqrt{pkn \log{n}}$ and assume \eqref{eq:assumption} holds. Then, it holds with high probability that
\[
\|P - Q\| = O\left(\frac{\sqrt{k \log{n}}}{\widetilde{\rho} \sqrt{pn}} \right).
\]
\end{lem}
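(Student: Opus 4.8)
The plan is to view $A$ as a perturbation of its expectation and apply the Davis--Kahan theorem (\thmref{daviskahan}) to transfer the projection onto the image of $\mathbb{E}A$ to the spectral projector $P$ computed by the algorithm. Set $H = \mathbb{E}A$ and $R = A - \mathbb{E}A$, so that $A = H + R$. Recall from the discussion preceding the lemma that every nonzero eigenvalue of $\mathbb{E}A$ is of the form $\widetilde{\lambda}\, pn$ for some nonzero eigenvalue $\widetilde{\lambda}$ of $\widetilde{F}$; hence, by the definition of the spectral gap $\widetilde{\rho}$, all nonzero eigenvalues of $\mathbb{E}A$ have absolute value at least $\widetilde{\rho}\, pn$, while the remaining eigenvalues sit at $0$. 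Thus $Q = P_{\Im(\mathbb{E}A)}$ is precisely the projector onto the eigenspaces of $\mathbb{E}A$ whose eigenvalues lie outside the interval $(-\widetilde{\rho}pn, \widetilde{\rho}pn)$.

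Because \thmref{daviskahan} concerns a single interval, I would first split both projectors into their positive and negative parts, writing $Q = Q_+ + Q_-$ and $P = P_+ + P_-$, where $Q_+ = P_{[\widetilde{\rho}pn, \infty)}(\mathbb{E}A)$, $P_+ = P_{(\eps,\infty)}(A)$, and $Q_-, P_-$ are the analogous projectors onto the negative eigenvalues. For the positive part I would apply \thmref{daviskahan} with $H = \mathbb{E}A$, the interval $[\alpha,\beta] = [\widetilde{\rho}pn, \beta_0]$ for any finite cutoff $\beta_0$ exceeding $\max\{\|\mathbb{E}A\|, \|A\|\}$, and the shift $\delta = \widetilde{\rho}pn - \eps$. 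This choice turns the enlarged interval $(\alpha - \delta, \beta + \delta)$ into $(\eps, \infty)$ effectively, so the theorem yields $\|Q_+ - P_+\| \le \|R\|/\delta$; the negative part is handled symmetrically with $[\alpha,\beta] = [-\beta_0, -\widetilde{\rho}pn]$ and the same $\delta$, giving $\|Q_- - P_-\| \le \|R\|/\delta$. A triangle inequality then bounds $\|P - Q\|$ by $2\|R\|/\delta$.

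It remains to control $\|R\|/\delta$. By \lemref{concentration} we have $\|R\| = \|A - \mathbb{E}A\| \le 10\sqrt{pkn\log n}$ with high probability, which is exactly $\eps/2$ for the chosen $\eps = 20\sqrt{pkn\log n}$. Assumption~\eqref{eq:assumption} with a sufficiently large constant $C$ guarantees $\widetilde{\rho}pn \ge C(k/\theta)\sqrt{pn\log n} \ge 2\eps$, since $\eps = 20\sqrt{k}\,\sqrt{pn\log n}$ and $k/\theta \ge \sqrt{k}$; consequently $\delta = \widetilde{\rho}pn - \eps \ge \widetilde{\rho}pn/2$. Substituting and using the simplification $\sqrt{pkn\log n}/(pn) = \sqrt{k\log n}/\sqrt{pn}$ gives $\|P - Q\| \le 2\|R\|/\delta = O\left(\sqrt{k\log n}/(\widetilde{\rho}\sqrt{pn})\right)$, as claimed.

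The main obstacle I anticipate is not the perturbation estimate itself but verifying that the spectral threshold $\eps$ is simultaneously large enough to dominate the noise $\|R\|$ (so that $P$ captures no spurious eigenvectors arising from the kernel of $\mathbb{E}A$) and small enough to lie well below $\widetilde{\rho}pn$ (so that $P$ captures every signal eigenvector). Both requirements are met precisely because \eqref{eq:assumption} forces a separation between $\eps$ and $\widetilde{\rho}pn$, which is exactly what makes $\delta$ positive and comparable to $\widetilde{\rho}pn$. A minor technical point is the use of the formally infinite endpoints, which is absorbed by choosing the finite cutoff $\beta_0$ above the spectral radius of both matrices.
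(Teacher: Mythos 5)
Your proof is correct and follows essentially the same route as the paper's: combine the concentration bound of \lemref{concentration} with the Davis--Kahan theorem, using \eqref{eq:assumption} to ensure the shift $\delta$ is comparable to $\widetilde{\rho}pn$. Your splitting of $P$ and $Q$ into positive and negative spectral parts to fit the single-interval form of \thmref{daviskahan} is a more explicit treatment of a detail the paper elides (it quotes the denominator $\widetilde{\rho}pn-2\eps$ without the decomposition), costing only a factor of $2$ that is absorbed into the $O(\cdot)$.
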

\begin{proof}
Let $(\lambda_1,g_1),\ldots, (\lambda_{\ell}, g_{\ell})$ be the pairs of the eigenvalues and eigenvectors computed by Algorithm~\ref{alg:spectral}.  Then, by \lemref{concentration} it holds for any $1\leq j\leq \ell$ that 
$ |\lambda_j |
\geq \widetilde{\rho}pn -\eps$ .
Notice that the other eigenvalues of $A$ have absolute value less than $\eps$.
Therefore, based on assumption
(\ref{eq:assumption}), and the relationship between the eigenvalues of $\widetilde{F}$ and $\mathbb{E}A$, 
we apply
 \thmref{daviskahan} and obtain
\[
\|P - Q\| \le \frac{\|A - \mathbb{E}{A}\|}{\widetilde{\rho}pn - 2\eps} = O\left(\frac{\sqrt{k \log{n}}}{\widetilde{\rho} \sqrt{pn}}\right). \qedhere
\] 
\end{proof}

We are now ready to prove the main theorem, which gives
an upper bound on the number of  vertices misclassified by Algorithm~\ref{alg:spectral}. More precisely, given a graph $G=(V,E)$ with clusters $C_0,\dots,C_{k-1} \subset V$ and a partition $A_0,\dots,A_{k-1}$ of $V$, the number of misclassified vertices is defined as 
\vspace{-1mm}
\[
\mathcal{M} = \min_{\sigma \in S_k} \sum_{j=0}^{k-1} \left(|A_{\sigma(j)} \setminus C_j| + |C_j \setminus A_{\sigma(j)}|\right),
\]
\vspace{-3mm}

\noindent where $S_k$ is the symmetric group on $[k]$. We also assume that the $k$-means algorithm used in Algorithm~\ref{alg:spectral} achieves a constant approximation ratio (e.g., \cite{kmeansalgo}). Now we are ready to prove the main result of the submission.

% \vspace{-1mm}
% \begin{thm}[Main Theorem]
% \label{thm:main}
% Let $G \sim \mathcal{G}\left(k,\{n_j\}_{j=0}^{k-1},p,q, F \right)$ with $n_i=n$ for all $0 \le i \le k-1$ and $p=q$. Assume  \eq{assumption} holds and  $\widetilde{F}$ has a $\theta$-distinguishing image with $\theta>0$. Then, the number of misclassified vertices by Algorithm~\ref{alg:spectral} is $O\left( \frac{k^2 \log{n}}{\widehat{\rho}^2 \theta^2 p}\right)$ with high probability.
% \end{thm}
 
\begin{proof}[Proof of \thmref{main}]
Let $Q=P_{\Im(\mathbb{E}{A})}$ and $P$ as in Algorithm~\ref{alg:spectral}. Observe that $Q$ is a block matrix with the following properties: rows corresponding to vertices belonging to the same cluster are equal, while the distance between rows corresponding to different clusters is at least $\theta$. 
For any cluster $C_j$, let $c_j$ be the row of $Q$ corresponding to any vertex in $C_j$ (they are all equal). Let $\overline{c_j}$ be the average of the rows of $P$ corresponding to $C_j$. By \lemref{pqclose} we know that $\|c_j - \overline{c_j}\| = O\left( \frac{\sqrt{k \log{n}}}{\widetilde{\rho} \sqrt{pn}}\right) $, which implies, for any $\ell \ne j$, 
\begin{equation}
\label{eq:distance}
\|\overline{c_j} - \overline{c_\ell}\| \ge \theta - \frac{20\sqrt{k \log{n}}}{\widetilde{\rho} \sqrt{pn}} = \theta/2,
\end{equation}
where the second inequality follows from assumption (\ref{eq:assumption}). Moreover, the optimal $k$-means cost is at most 
\begin{equation}
\label{eq:kmeanscost}
\sum_{j=0}^{k-1} \sum_{u \in C_j} \|P(u,\cdot) - c_j \|^2 \le \tr(P-Q)^2 \le \|P - Q\|^2 \cdot kn = O\left( \frac{k^2 \log{n}}{\widetilde{\rho}^2 p}\right)
\end{equation}
where the last equality follows from \lemref{pqclose}.

Let $c_0^{\star}, \dots, c_{k-1}^{\star}$ be the optimal centroids of a $k$-means clustering on the rows of $P$. For any $\ell \ne j$, we claim that $\|c_j^{\star} - c_{\ell}^{\star}\| \ge \theta/4$. Assume this  isn't true. By \eq{distance}, then, there must exist a $\overline{c_{\ell}}$ which is at least $\theta/4$ far from any point $c_j^{\star}$. We now show this implies that the optimal $k$-means cost is large, contradicting \eq{kmeanscost}. Let $c^{\star}(u)$ be the centroid $c_j^{\star}$ which is closest to $P(u,\cdot)$. Then, by the triangle inequality and the trivial inequality $(x-y)^2 \ge x^2/2 - y^2$, the optimal cost is lower bounded by
\begin{align*}
 \sum_{u \in C_{\ell}} \|P(u,\cdot) - c^{\star}(u) \|^2
 &\ge \sum_{u \in C_{\ell}} \left(\|\overline{c_{\ell}} - c^{\star}(u) \| - \|P(u,\cdot) - \overline{c_{\ell}} \|\right)^2 \\
 &\ge  \sum_{u \in C_{\ell}}  \left(\frac{1}{2}\|\overline{c_{\ell}} - c^{\star}(u) \|^2 - \|P(u,\cdot) - \overline{c_{\ell}} \|^2\right) \\
 &\ge \frac{n \theta^2}{32} - O\left( \frac{k^2 \log{n}}{\widetilde{\rho}^2 p}\right),
\end{align*}
which, by assumption (\ref{eq:assumption}), contradicts the fact that the optimal $k$-means cost is upper bounded by \eq{kmeanscost}.
Therefore, it holds that $\|c_j^{\star} - c_{\ell}^{\star}\| \ge \theta/4$ for any $\ell \ne j$. Hence, every time we misclassify a vertex we pay a cost of $\Omega(\theta^2)$. Because of this, any constant factor approximation algorithm for $k$-means will misclassify at most $O\left( \frac{k^2 \log{n}}{\widetilde{\rho}^2 \theta^2 p}\right)$ vertices.
\end{proof}

\begin{proof}[Proof of Corollary \ref{cor:cyclic}]
We start investigating the matrix $\widetilde{F} = \left(2
F- \mathbf{1}_{k \times k}\right)\cdot i$, which, in cyclic block models, can be rewritten as follows: $\widetilde{F}_{j,\ell} = (1-2\eta)\cdot i$  if $j \equiv \ell - 1 \mod k$, $\widetilde{F}_{j,\ell} = -(1-2\eta)\cdot i$ if $j \equiv \ell + 1 \mod k$, and $\widetilde{F}_{j,\ell} = 0$ otherwise. Therefore, $\widetilde{F}$ is a circulant matrix. From the theory of circulant matrices, we can deduce that $\widetilde{F}$ has a set of $k$ orthonormal eigenvectors $\widetilde{f}_0,\dots,\widetilde{f}_{k-1}$, such that, for any $0\le j,\ell\le k-1$, $\widetilde{f}_j(\ell) = \omega_k^{j\ell}k^{-1/2}$,
where $\omega_k$ is the $k$-th root of unity.  Let $\rho_0,\dots,\rho_{k-1}$ be the eigenvalues of $\widetilde{F}$ ordered so that $\rho_j$ is the eigenvalue corresponding to $\widetilde{f}_j$. It holds that
\vspace{-1mm}
\begin{align}
\rho_j &= (1-2\eta) \left(\omega_k^j -  \overline{\omega_k}^j \right)\cdot i = - 2\sin(2\pi j/k) (1-2\eta), \label{eq:lambdai}
\end{align}
where the second equality holds because the difference between a complex number $c$ and its conjugate is equal to twice the imaginary part of $c$.

From this we can easily obtain a bound on the spectral gap $\widetilde{\rho}$:
\vspace{-1mm}
\begin{equation}
\nonumber
\widetilde{\rho} = \min_{j\in[k]\setminus\{0,k/2\}} 2(1-2\eta)\cdot|\sin(2\pi j/k)|
                    = \Theta\left(\frac{1-2\eta}{k}\right).
\end{equation}
\vspace{-3mm}

From \eq{lambdai} we know the kernel of $\widetilde{F}$ is spanned by $\widetilde{f}_0$ and, if $k$ is even, by $\widetilde{f}_{k/2}$.  In both cases, however, $\widetilde{F}$ has a $\Omega(1)$-distinguishing image. Therefore,
the assumption of \eq{assumption} holds whenever 
$p = \omega\left(\frac{k^3 \log{n}}{(1-2\eta)^2n}\right)$. We can apply \thmref{main} to deduce that  the number of misclassified vertices is $O\left(\frac{k^4 \log{n}}{(1-2\eta)^2p} \right)$ with high probability.
\end{proof}

%%%%%%%%%%%%==============================================================================================
%%%%%%%%%%%%==============================================================================================
\subsection{Additional experimental results\label{sec:appexp}}

This section presents more detailed comparison on the performance of our algorithm with other spectral clustering algorithms for digraphs on both synthetic and real-world data sets.
All of  our experiments are performed in Matlab R2017b, on a MacBook Pro, with 2.8 GHz Intel Core i7 and 16 GB of memory. The spectral clustering algorithms are implemented using the Matlab function \texttt{eigs} to compute  eigenvectors, and the $k$-means++ algorithm~\cite{kmeanspp}.% to perform $k$-means clustering.
 
%  A_{\mathrm{rw}}

\noindent \textbf{More detailed experimental results for the DSBM.} In Figure~\ref{fig:DiSBM_AdjSpectrum} we consider two instances of graphs generated from the \textsc{DSBM} 
with $k=5$ clusters, where each cluster is of size $n=100$, $p=50\%$, and  noise parameter $\eta=0.15$. The figures at the top concern a cyclic block model, while the figures at the bottom  a randomly oriented complete meta-graph. We report the heatmap of the Hermitian adjacency matrices, the spectrum of $A_{\mathrm{rw}}$ leveraged by \textsc{Herm-RW}, as well as the final recovered cluster-structure with colours representing the $\cip$ score.  From Figures~\ref{fig:DiSBM_AdjSpectrum} (a) and (c) (resp. (d) and (f)), we can clearly see the cyclic (resp. complete) pattern between clusters. Moreover, Figures~\ref{fig:DiSBM_AdjSpectrum} (b) and (e) show that the bulk of the eigenvalues of $A_{\mathrm{rw}}$ is concentrated around $0$, with exactly $4$ outliers with larger absolute value: $4$ corresponds to the rank of the matrix $\tilde{F}$ of the corresponding DSBM. Also notice that the eigenvalues outside these outliers are more concentrated in the case of the block model with complete meta-graph. This is not a surprise since in the latter we have a noise level of $0.15$ between any pair of clusters, while in the cyclic block model we have noise level of $0.15$ between $k$ pairs, and of $0.5$ (corresponding to completely random orientations of the edges) between the remaining pairs.

Figure \ref{fig:largeclusters} shows the recovery rate of spectral clustering algorithms for DSBM with a randomly oriented complete meta-graph with $k=50$  clusters, each of size $n=100$. In this regime with a very large number of clusters, our proposed methods perform drastically better than competing approaches. For edge density $p=1\%$ only our approaches are able to achieve a meaningful ARI value, at least for low level of noise $\eta$ (recall that when $\eta=0$ noise due to intra-cluster edges is still present). When $p=2\%$, other methods perform reasonably well up to a noise level of $\eta=0.1$. Our method, instead, is able to achieve very good accuracy up to $\eta=0.15$, and non-trivial accuracy up to $\eta=0.2$.

Figure~\ref{fig:scanID_2a_b} (a) and (b) is a comparison on the DSBM model with $N=10,000$ and $k=20$ clusters, for both a complete meta-graph and a cyclic block model. This is the largest graph we have experimented with, and it shows that our Hermitian-based algorithms vastly outperform the competing methods, especially in the case of the cyclic block model where   there are less pairwise interactions between the clusters. Note that in Figure~\ref{fig:scanID_2a_b} (b) we left out \textsc{Bi-Sym} and  \textsc{DD-Sym} from the comparison, due to their computational cost.  It is also easy to see that 
our algorithms not only significantly outperform all the other tested methods, but  also run significantly  faster than \textsc{Bi-Sym} and \textsc{DD-Sym} which involve  matrix multiplication operations. For instance, \figref{scanID_2a_b} (c) compares the runtime of all algorithms on graphs randomly generated from the DSBM for $N=10,000$, $k=20, p=0.4\%$, and different $\eta$ values, and this quantitative comparison holds for different choices of parameters in general.
 
% , which also makes the case these two approaches do not scale as well

\iffalse 
\begin{wrapfigure}{o}{0.35\textwidth}
\captionsetup[subfigure]{skip=0pt}
% \vspace{-4mm}
% \hspace{-6mm}
\begin{centering}
\includegraphics[width=0.33\columnwidth]{Figures/scanID_2a_Kk/scanID_2a_n500_k20_p0p004_Kk_nrReps10_time.png}
\end{centering}
% \vspace{-1mm}
\captionsetup{width=0.99\linewidth}
\caption{\small Runtime analysis}
\label{fig:scanID_2a_RunningTime}
% \vspace{-1mm}
\end{wrapfigure}
\fi 
% as the rest of the methods. 
\begin{figure}
\captionsetup[subfigure]{skip=0pt}
\begin{centering}
\subcaptionbox{ $G$ }[0.3\columnwidth]{\includegraphics[width=0.25\columnwidth]{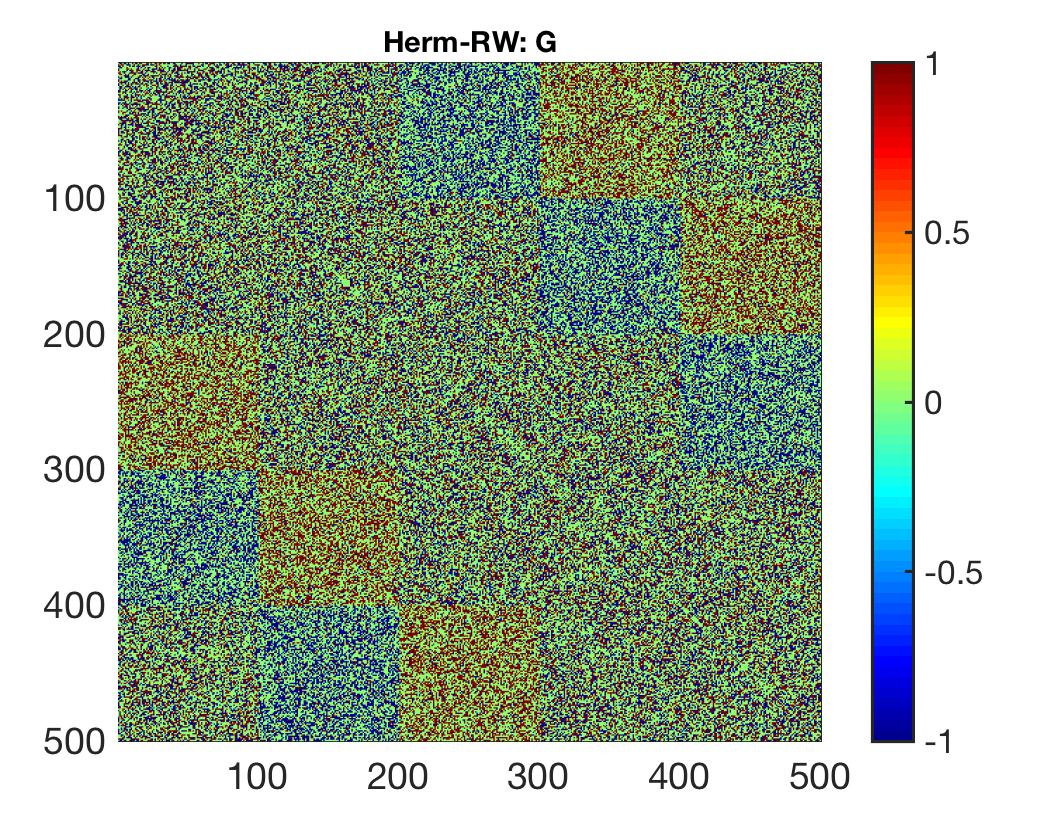}}  \hspace{-2mm} 
\subcaptionbox{ Spectrum of $ A_{\mathrm{rw}}$ }[0.3\columnwidth]{\includegraphics[width=0.25\columnwidth]{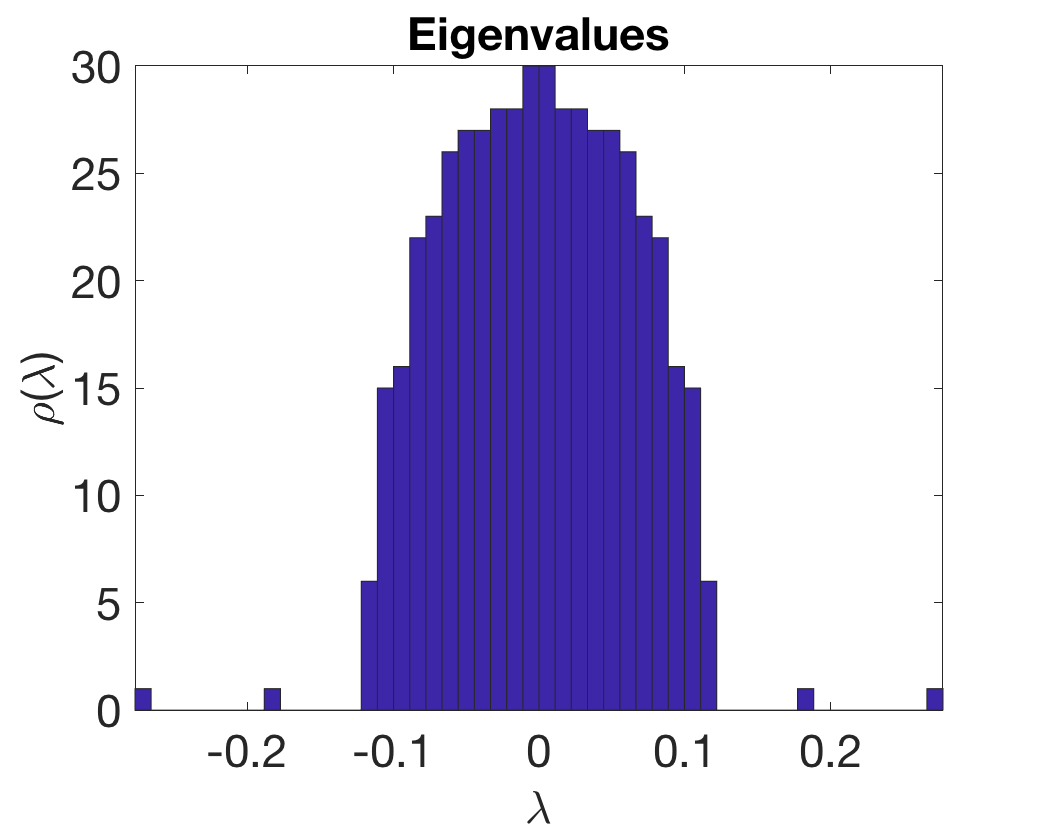}}  \hspace{-2mm} 
\subcaptionbox{$\cip$ matrix}[0.3\columnwidth]{\includegraphics[width=0.25\columnwidth]{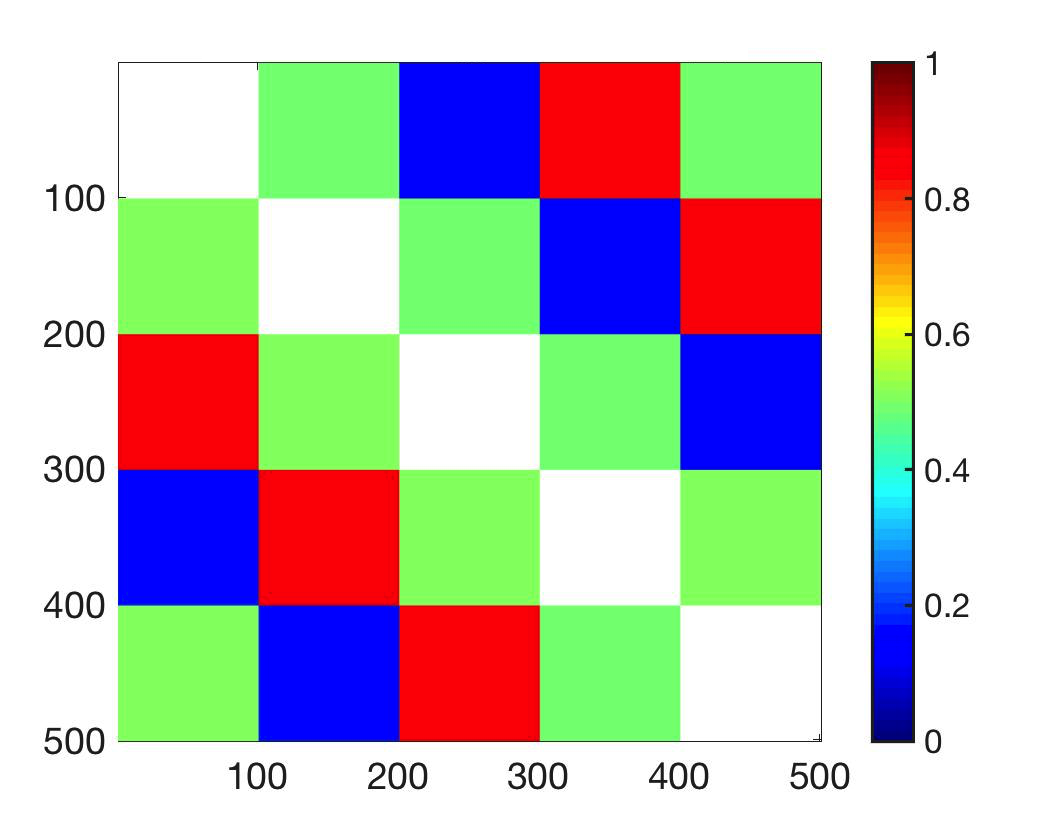}}
\end{centering}
\begin{centering}
\subcaptionbox{ $G$ }[0.3\columnwidth]{\includegraphics[width=0.25\columnwidth]{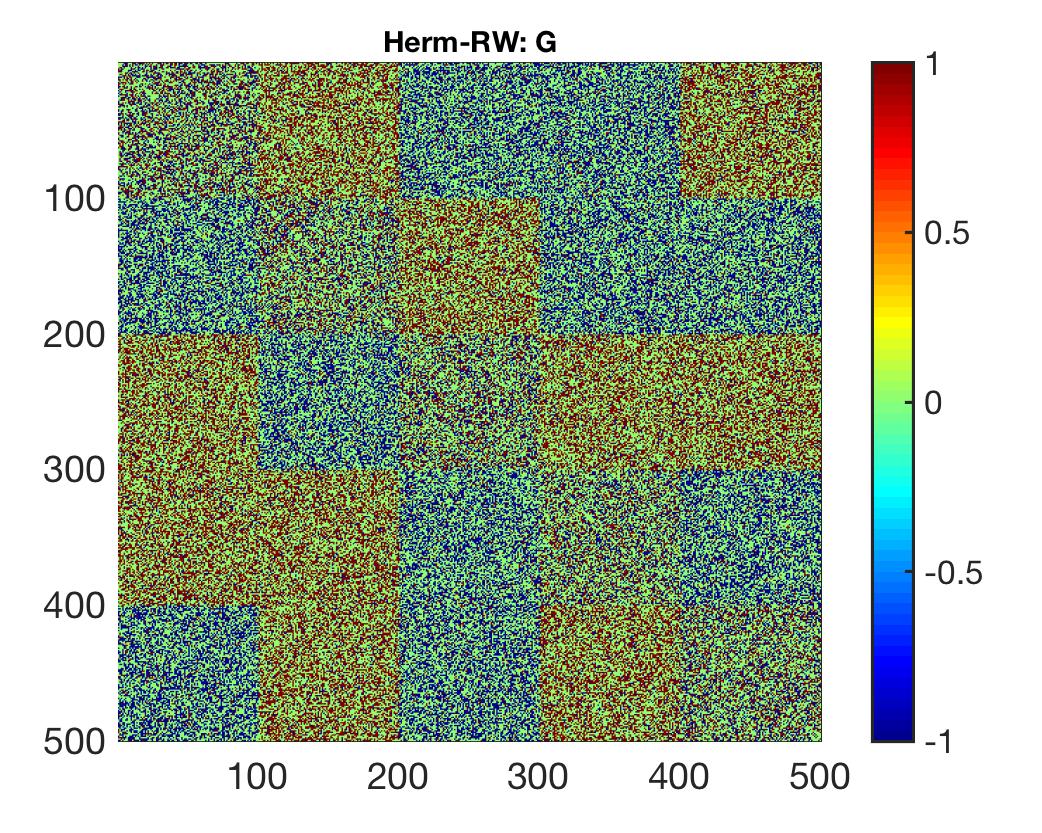}}
\hspace{5mm} 
\subcaptionbox{  Spectrum of $ A_{\mathrm{rw}}$ }[0.3\columnwidth]{\includegraphics[width=0.25\columnwidth]{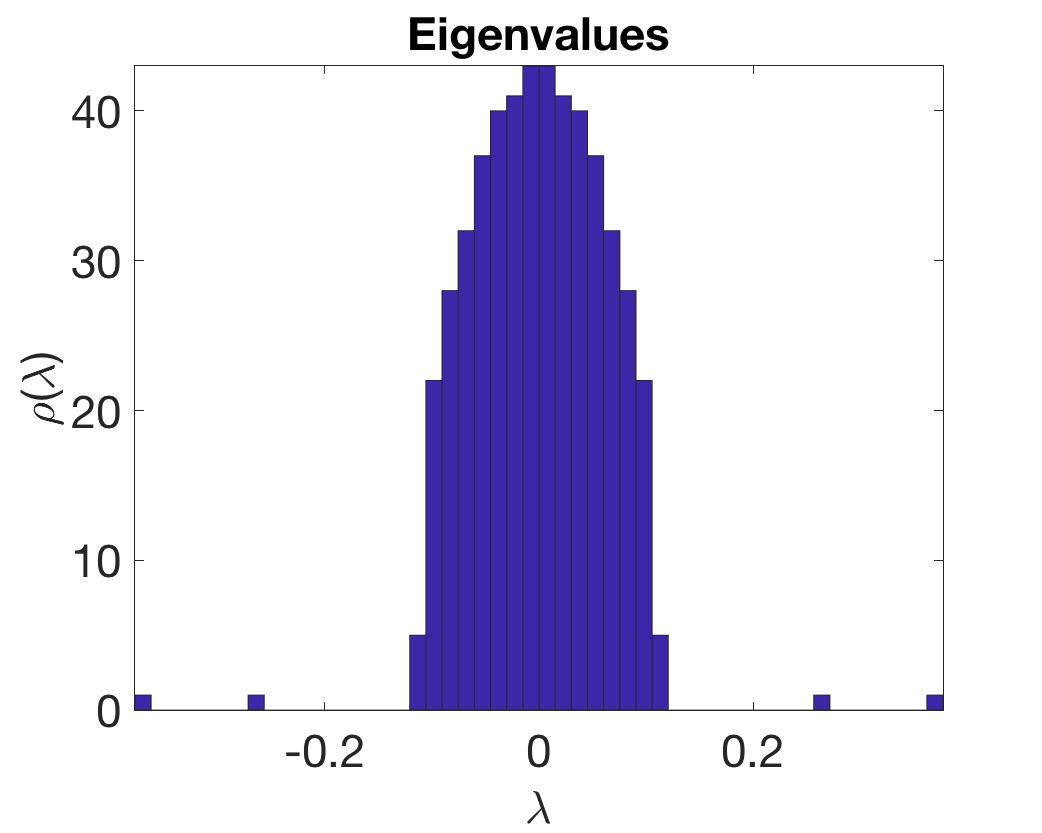}}
\hspace{5mm} 
\subcaptionbox{$\cip$ matrix}[0.3\columnwidth]{\includegraphics[width=0.25\columnwidth]{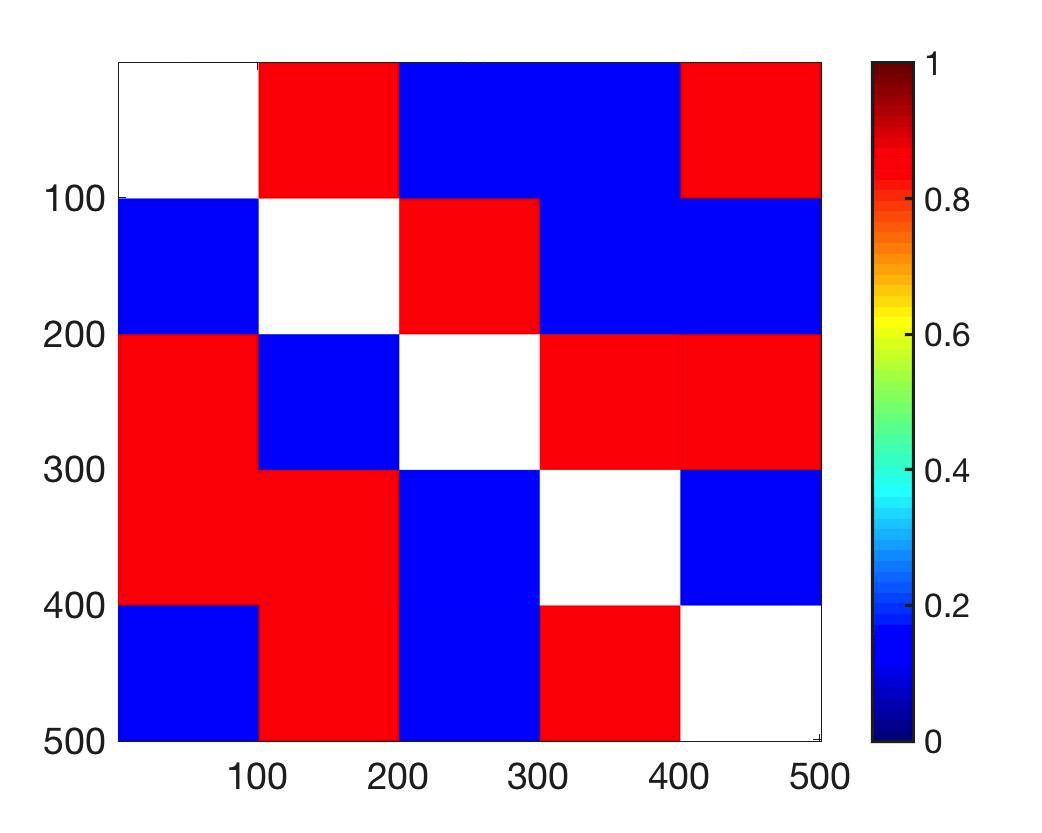}}
\end{centering}
\captionsetup{width=0.99\linewidth}
\caption{\small Recovery of an instance of the \textsc{DSBM} model, $N=500$, $p=50\%$, $\eta=0.15$ and $k=5$ clusters, for a cyclic block model (top) and a randomly oriented complete  meta-graph (bottom).}
\label{fig:DiSBM_AdjSpectrum}	
\end{figure}
\vspace{0mm}

% \todo{Figure 9: replace $H_{RW}$ by $A_{RW}$? }
% Fixed

% \textcolor{orange}{replace word ``IF'' with ``CI'' at the top of Figure 9 (c) (f)?} \todo{M: hm, at this point it's easiest to crop the image and simply get rid of the title in those two figures - it should not be there to begin with.}
\begin{figure}
\centering 
\captionsetup[subfigure]{skip=2pt}
\subcaptionbox{$p=1\%$}[0.49\columnwidth]{\includegraphics[width=0.35\columnwidth]{Figures/scanID_1a_Kk/scanID_1a_n100_k50_p0p01_Kk_nrReps10_ARI.png} } %  \subcaption{$p = 0.01$}    \label{fig:5a}  % \par\vfill 
%  \hspace{-10mm}
\subcaptionbox{$p=2\%$}[0.49\columnwidth]{\includegraphics[width=0.35\columnwidth]{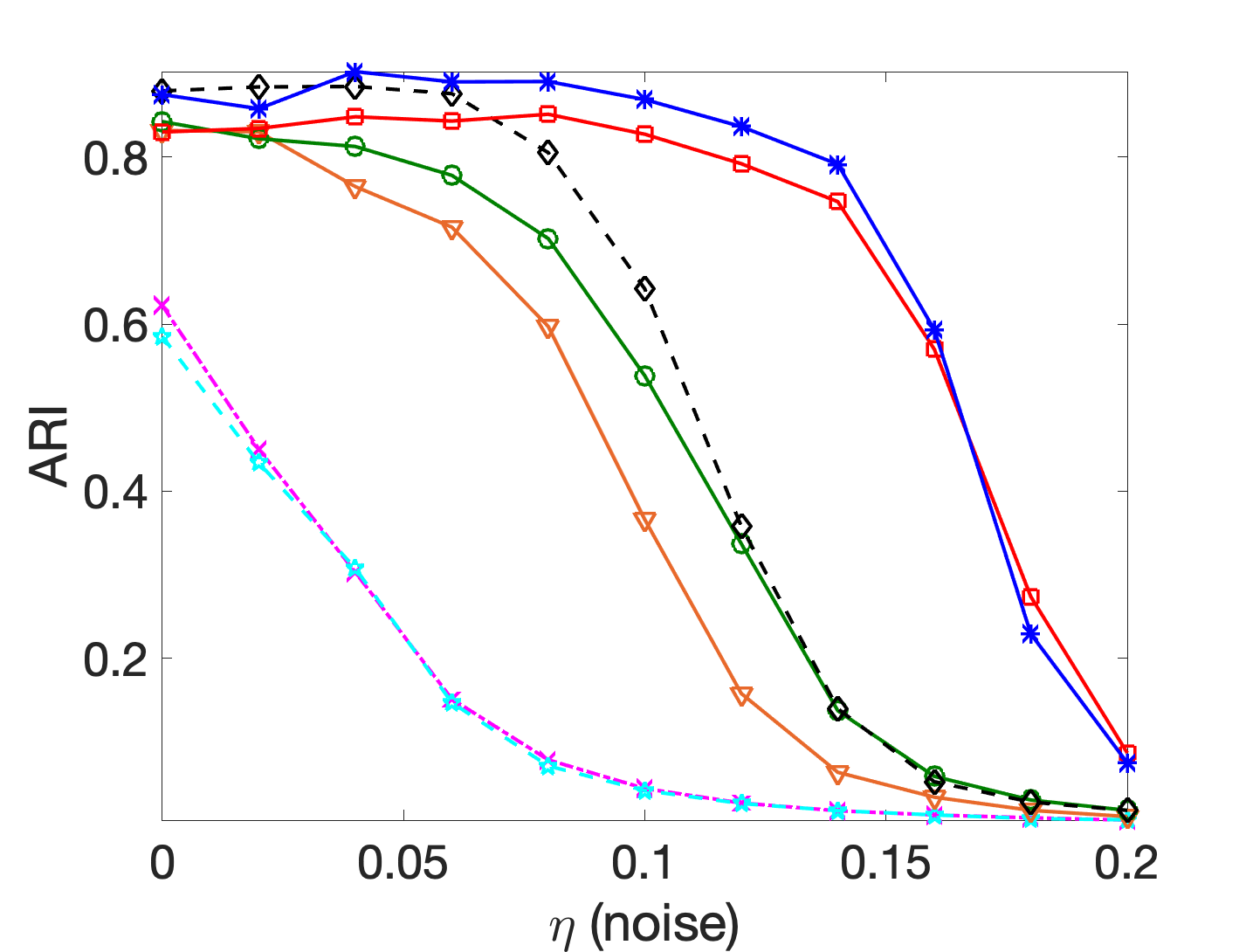} }
% \subcaption{$p = 0.02$} % \label{fig:5b}
% Note: if we turn on the \subcaption{} - the two figures end up one under the other...
\caption{Recovery rates for the complete meta-graph in the \textsc{DSBM} with  $k=50$ and $n=100$. }
\label{fig:largeclusters}%\textcolor{red}{Is (a) the same as Figure 5? Yes, we should perhaps remove it, and then decide if we want to show (b) or no.}
\end{figure}

\begin{figure}[h!]
\captionsetup[subfigure]{skip=0pt}
\begin{centering}\hspace{2mm}
\subcaptionbox{Recovery rates for the complete meta-graph $p=0.4\%$. }[0.3\columnwidth]{\includegraphics[width=0.3\columnwidth]{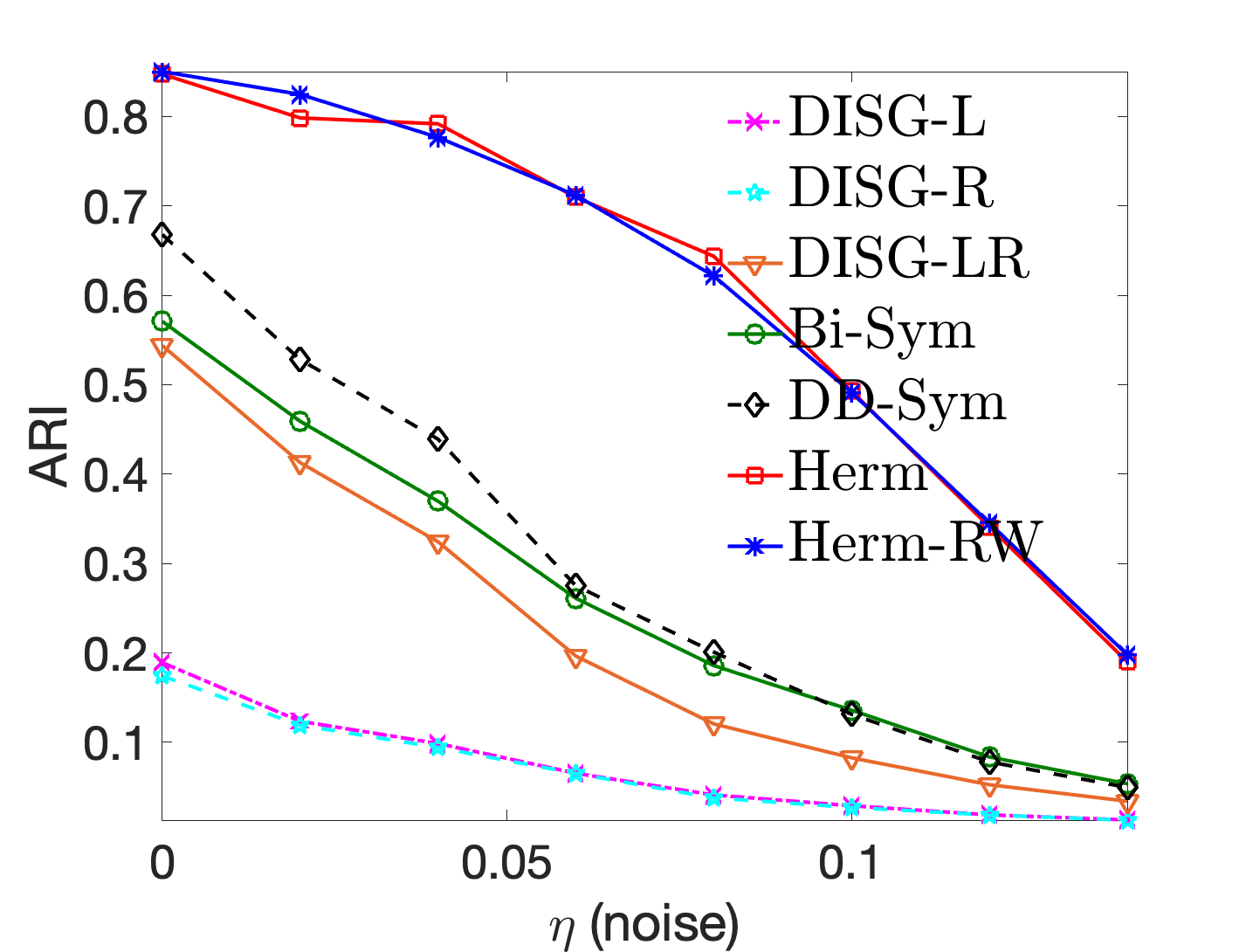}}
\hspace{3mm} 
\subcaptionbox{Recovery rates for the cyclic block model $ p=3\% $. }[0.3\columnwidth]{\includegraphics[width=0.3\columnwidth]{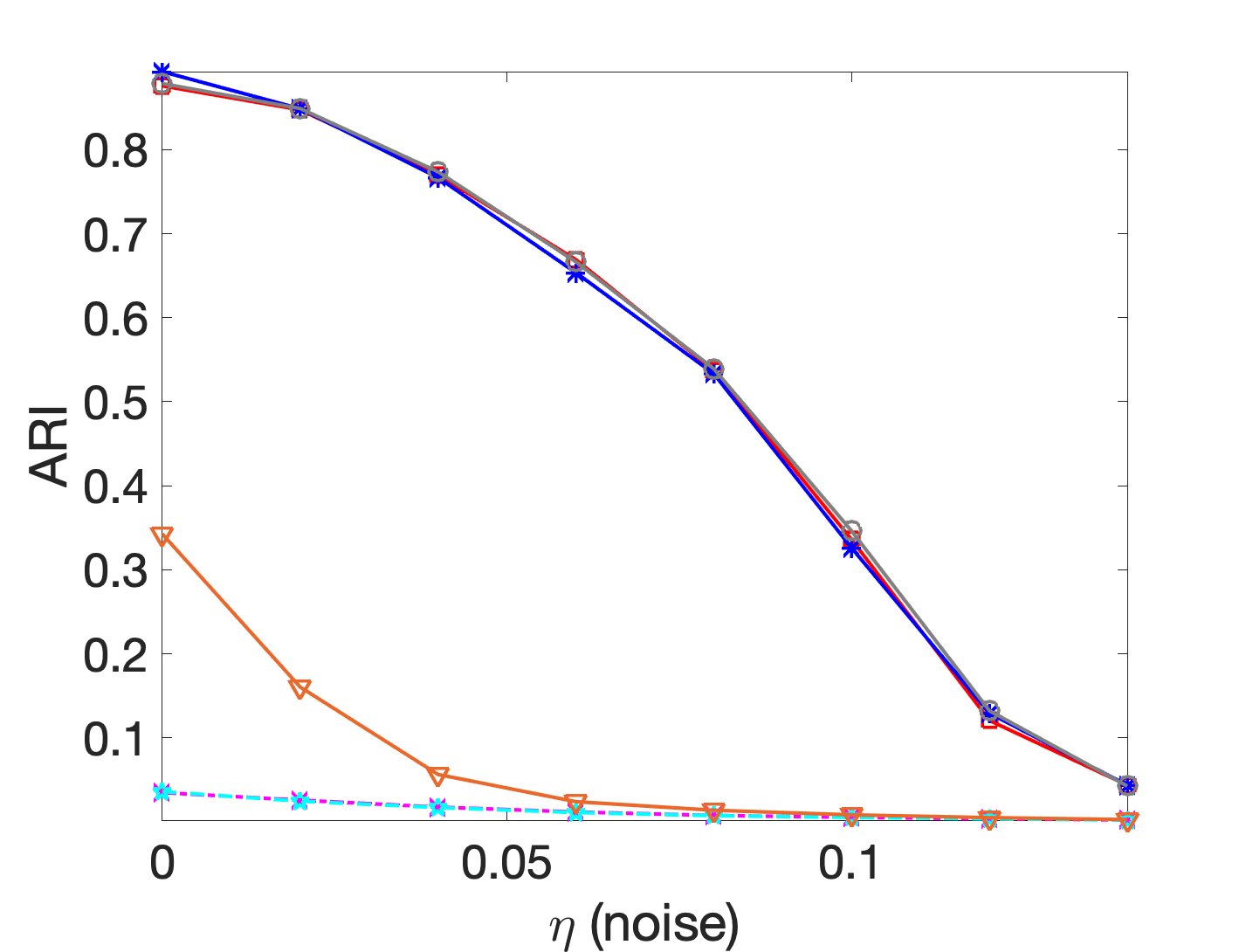}}
 \hspace{3mm} 
\subcaptionbox{Runtime analysis for the complete meta-graph $p=0.4\%$. }[0.3\columnwidth]{\includegraphics[width=0.3\columnwidth]{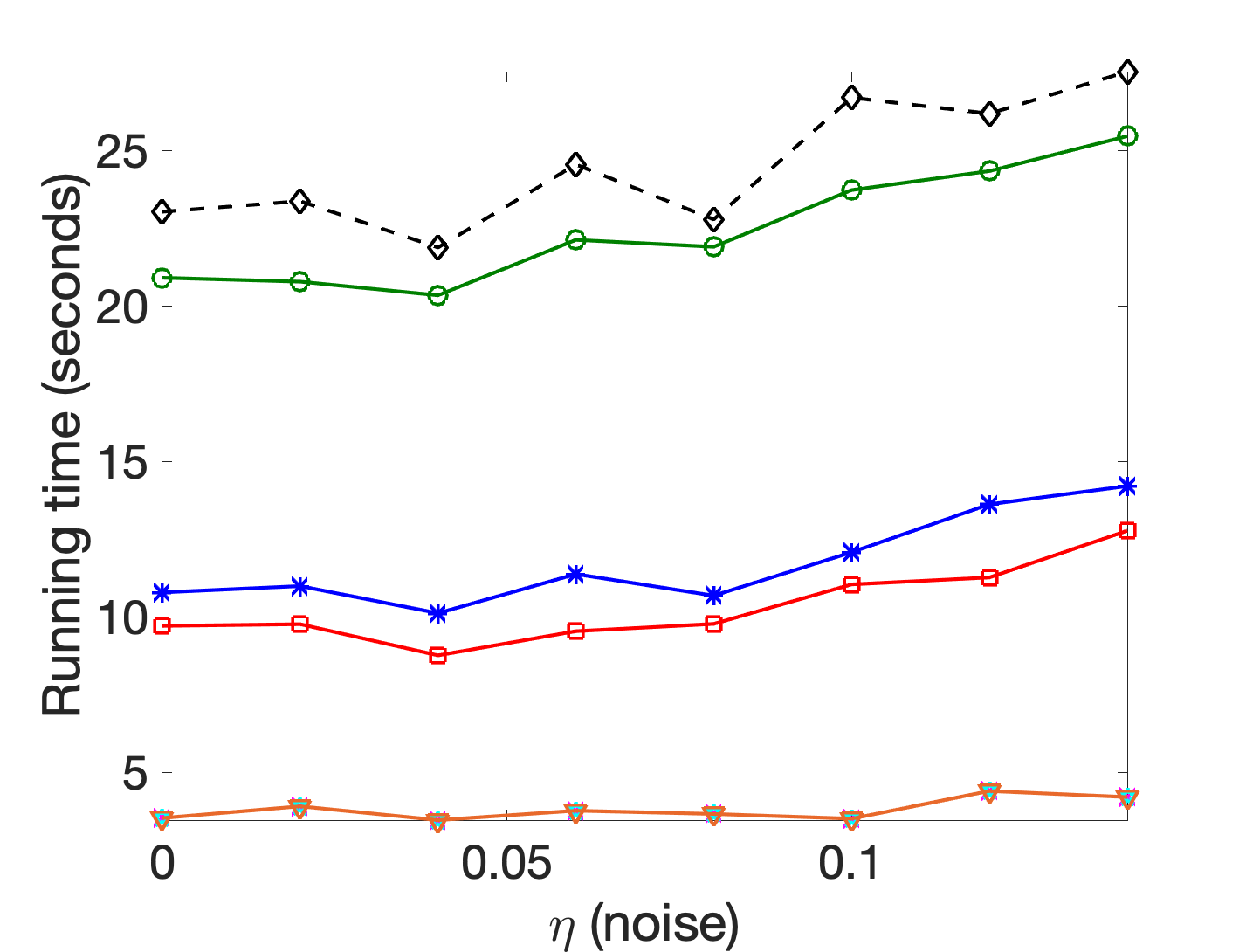}}
\end{centering} 
\captionsetup{width=0.99\linewidth} 
\caption{\small Recovery rates and running time for a complete meta-graph and a cyclic block model in a \textsc{DSBM} with $k=20$ clusters, $N=10,000$ at various levels of noise.  Averaged over 10 runs.} 
\label{fig:scanID_2a_b} 
\end{figure}

\noindent \textbf{\textsc{US-Migration}}: 
We present further numerical results for the main data set of our submission, omitted from the main text due to page limit.  
We compare the performance of all the variants of the algorithms listed in the submission, and Figure  \ref{fig:mig1Top5_ALL_Methods} is a visualisation of the top three largest pairs in terms of the size-normalised cut imbalance ratio for $k=10$. The \textsc{Naive} method performs standard spectral clustering on the symmetrised matrix $G = M + M^{\rot}$. In addition to our two proposed Hermitian-based approaches, \textsc{Herm} and \textsc{Herm-RW},  considered thus far throughout the paper, we also evaluate the performance of a third method which we denote \textsc{Herm-Sym}. In a similar spirit, \textsc{Herm-Sym} considers the top $k$ largest eigenvalues of the following matrix 
\begin{equation}
    A_{\mathrm{sym}} = D^{-1/2} A  D^{-1/2}, 
\label{def:Lsym}
\end{equation}
and recovers the clusters via $k$-means in this spectral embedding. The normalisation in $A_{\mathrm{sym}}$ is  particularly suitable for the skewed degree distributions often encountered in real data. For each highlighted pair (shown in red and blue in the US map colorings, while yellow denotes the remaining nodes), we also show the numerical scores achieved by the respective pair in terms of the three performance  metrics~($\cip$,  $\cis$, and  $\civ$). Here are our conclusions: 
\begin{itemize}\itemsep 0.3pt
\item 
In terms of the  $\cip$ score, the top three methods are \textsc{Herm-Sym} (0.26), \textsc{Herm-Rw} (0.19), followed by \textsc{DD-SYM} (0.16) and \textsc{DISG-LR} (0.16). We remind the reader that an imbalance score of $\cip=0.26$  as achieved by \textsc{Herm-Sym}  essentially denotes that $26\% + 50\%  = 76\% $ of the total weight of the edges between   a pair of clusters is oriented in one direction, and the remaining $24\%$ in the other direction. 
\item 
In terms of the  $\cis$ score, the top three  methods are \textsc{Herm-RW} (105), \textsc{Herm-Sym} (63), \textsc{Herm} (30), and  \textsc{Bi-Sym} (30).
\item 
Finally, in terms of the  $\civ$ score, the top three methods are \textsc{Bi-Sym} (20,062),  \textsc{Herm-Sym}(19,570) and \textsc{Herm-RW} (16,268). 
\end{itemize}

% \clearpage 
\bigskip 
%%%%%%%%%%%%-------------------------------------------------------------------------------------------------
%%%%%%%%%%%%-------------------------------------------------------------------------------------------------
 
 \vspace{-0.4cm}

\noindent \textbf{\textsc{US-Migration-II}}: 
%  mig2:
Due to a small number of very large entries in the initial migration matrix $M$, many of the methods we compare against are not able to produce meaningful results.%\footnote{We remark that without dropping the outlier entries, \text{Herm-RW} performs the best with respect to  the $ \civtop$ values  for every single value of $k = \{3, 5, 8, 10, 20,30,40 \}$.}. 
To this end, we pre-process the migration matrix $M$ and cap all entries at $10,000$, which corresponds to the  
% 99.96\% percentile.  -- dropped one decimal and saved a line!!
99.9\% percentile. 
As shown in Figure \ref{fig:intronaive}, a simple symmetrisation of the input matrix $M \mapsto M+M^{\rot}$, followed by standard spectral clustering of undirected graphs \cite{luxburg07}, will reveal clusters that align very well with the state boundaries \cite{belgium2010}.  
The top, respectively bottom, plots in Figure  \ref{fig:scanID_8b_mig2_new} show the $\cip$, respectively $\civ$, score for the top pairs. 
% for the \textsc{US-Migration-II} data set, 
For the CI score, \textsc{Herm} and \textsc{Herm-RW} are among the top performing methods along with \textsc{Bi-Sym}, while for $\civ$, \textsc{Herm-RW} is the best performing method across all values of $k=\{2,10,20,40\}$.

\renewcommand{\wid}{1.5in}
%\newcolumntype{C}{>{\centering\arraybackslash}m{\wid}}
 
\begin{figure*}[h!]\sffamily
\hspace{-1cm}
\begin{tabular}{l*4{C}@{}}
 & $k=2$ & $k=10$ & $k=20$ & $k=40$ \\ 
& \includegraphics[width=0.2470\columnwidth]{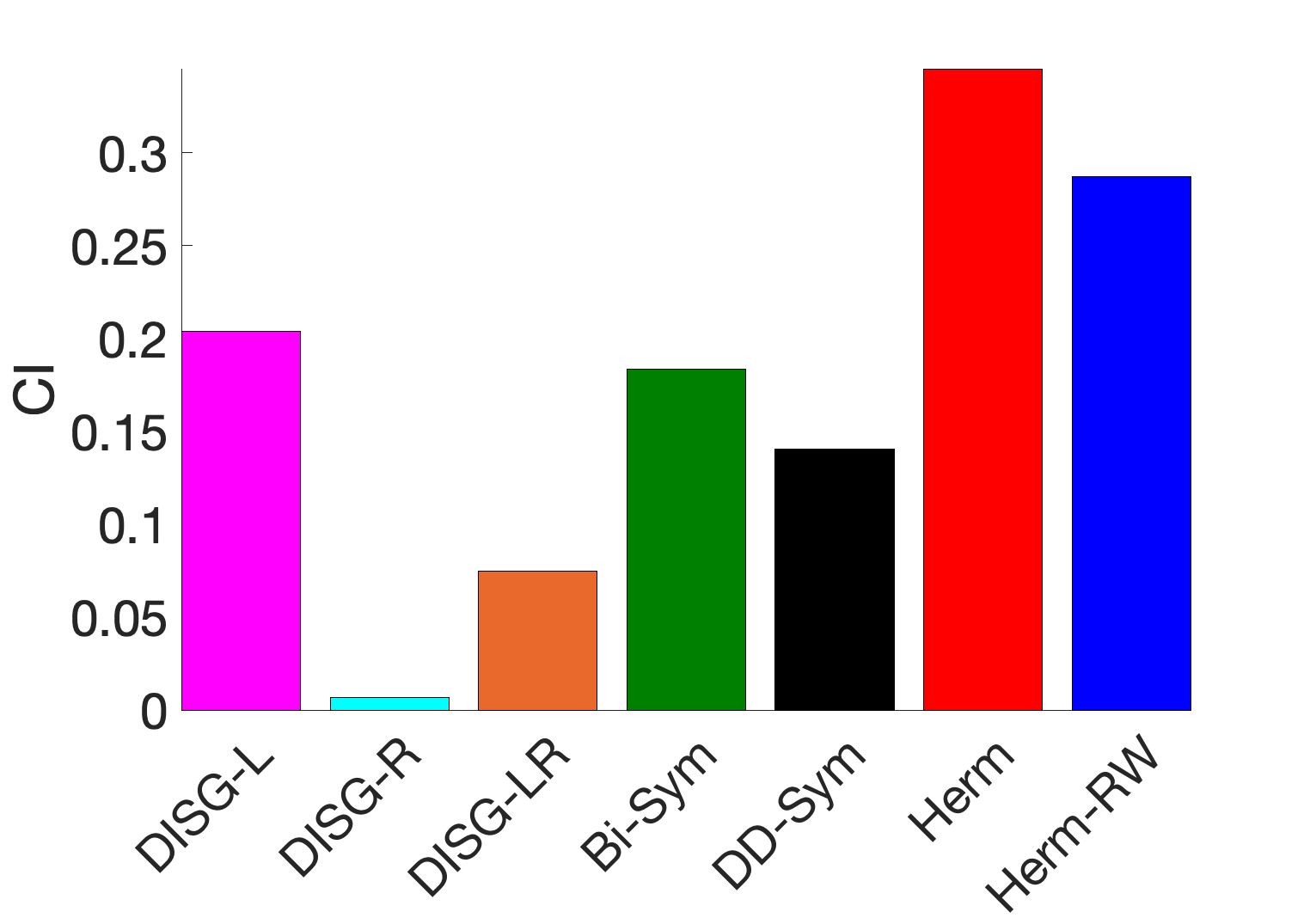}
& \includegraphics[width=0.2470\columnwidth]{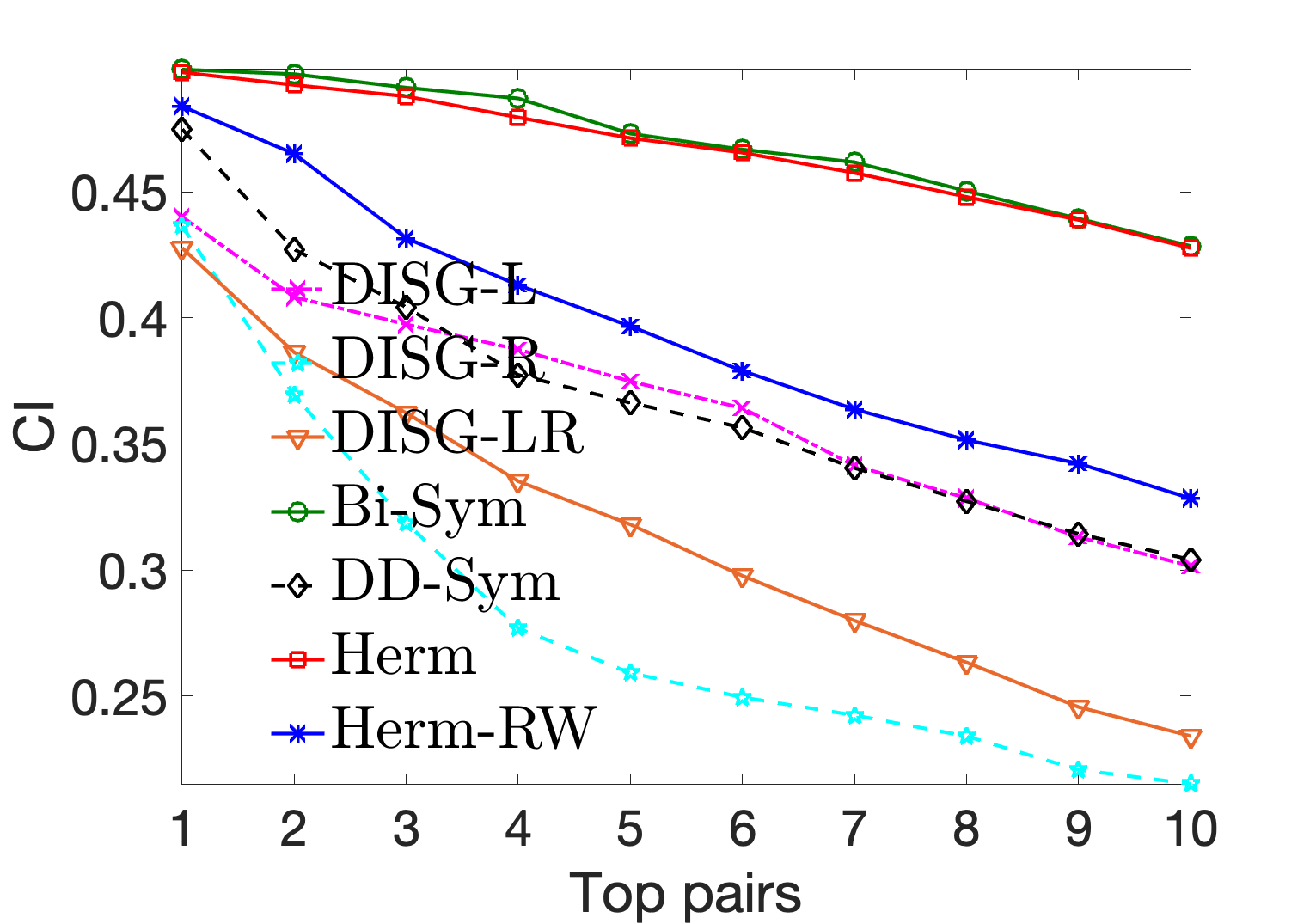}
& \includegraphics[width=0.2470\columnwidth]{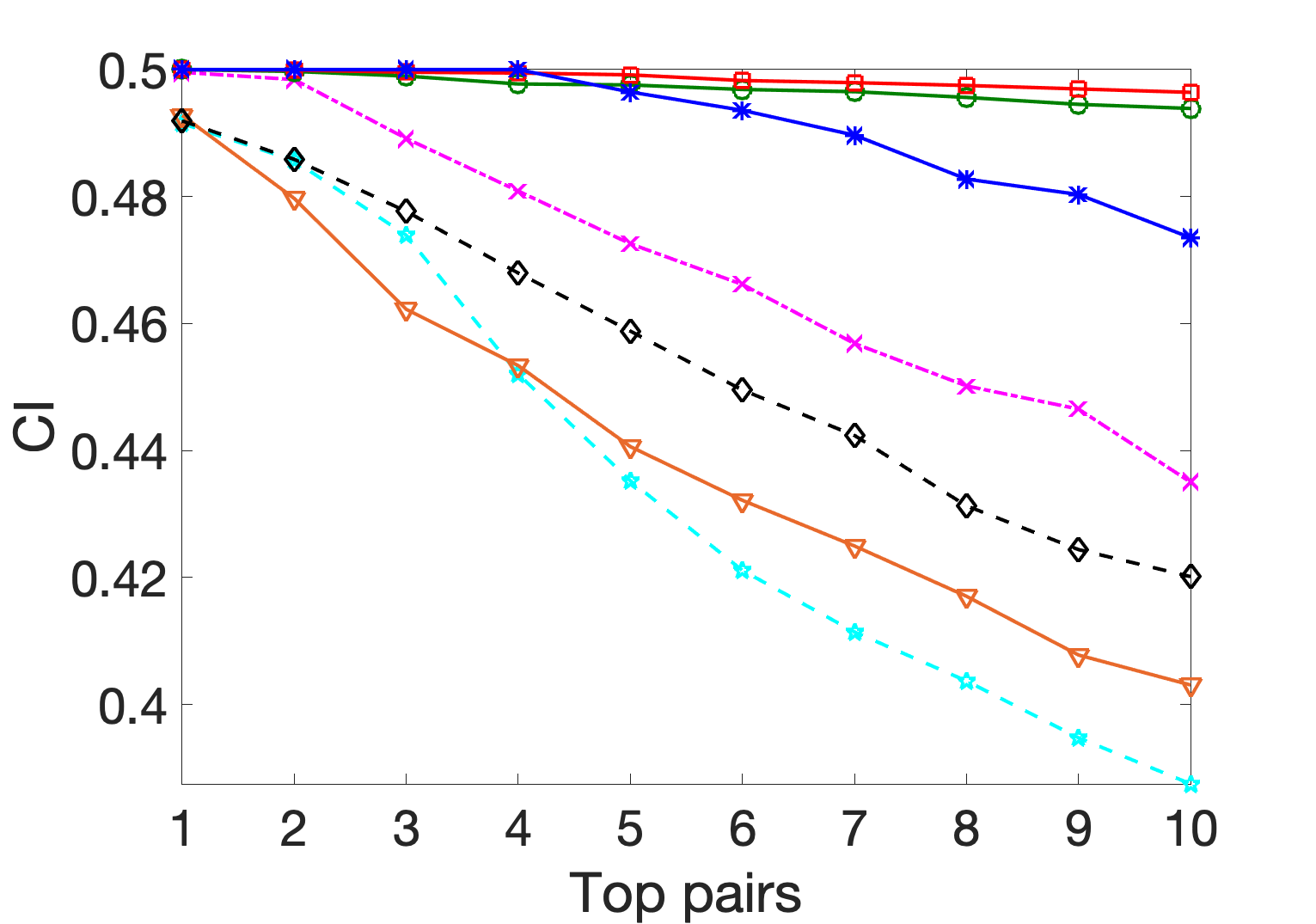}
& \includegraphics[width=0.2470\columnwidth]{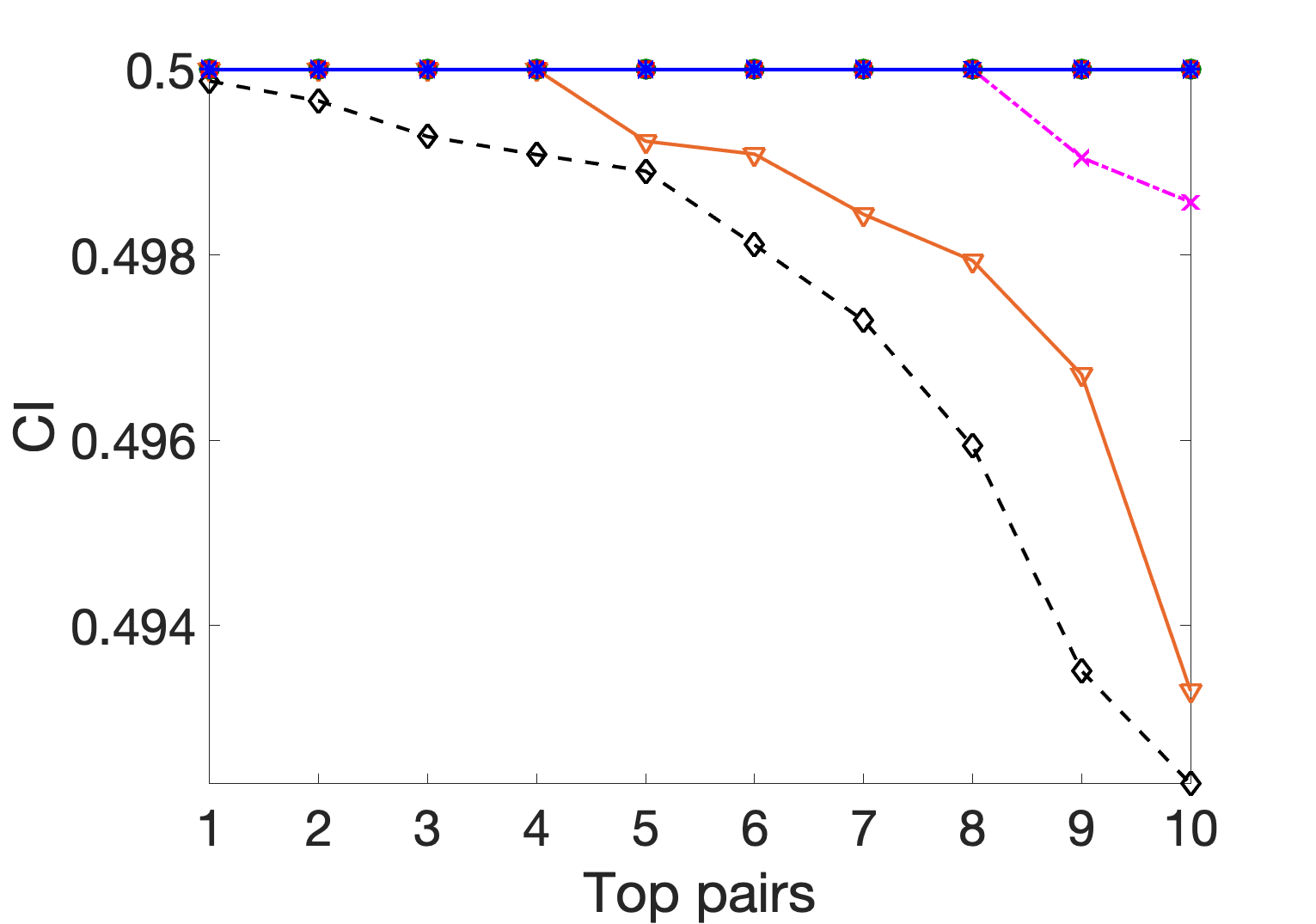} \\ 
& \includegraphics[width=0.2470\columnwidth]{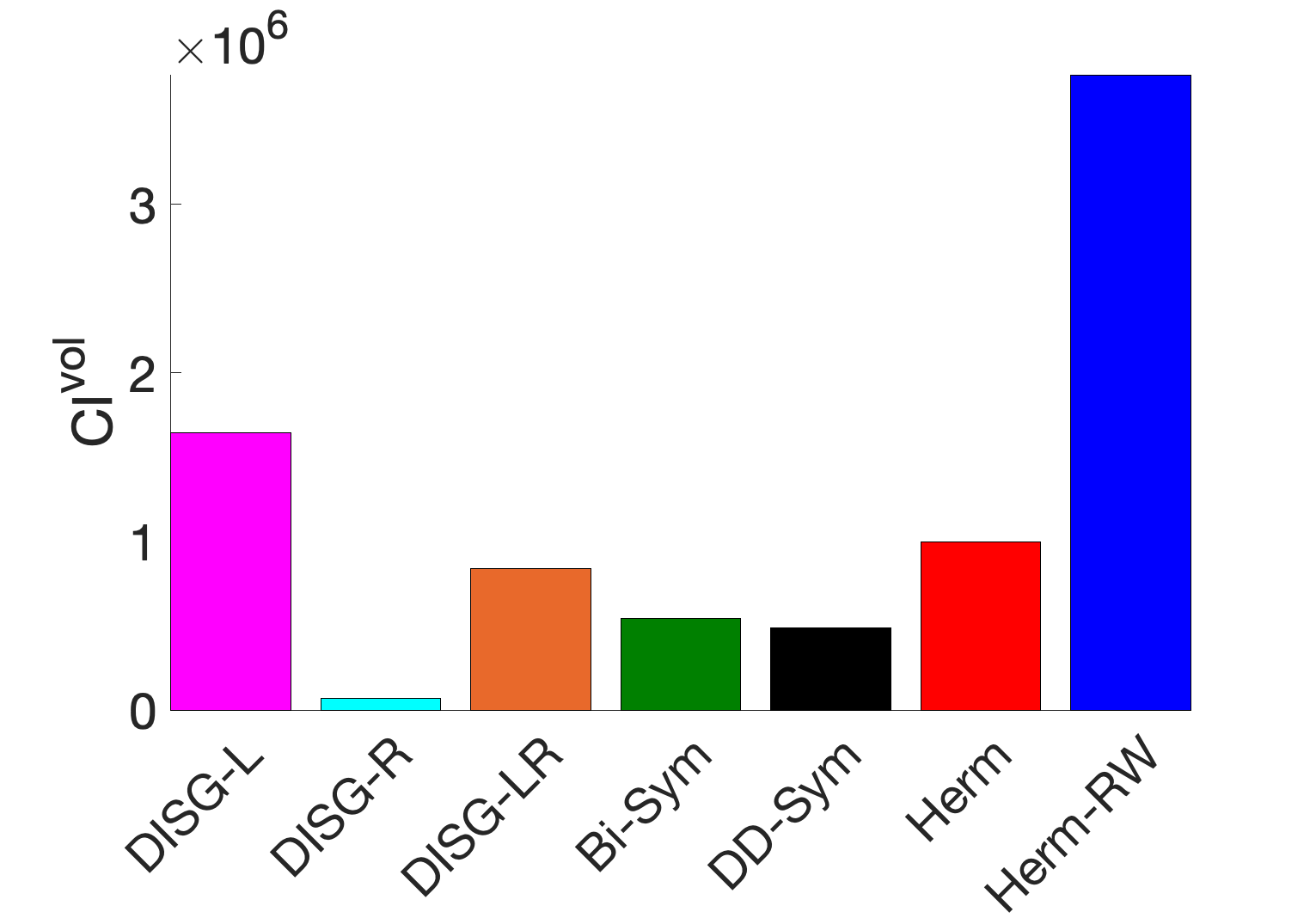}
& \includegraphics[width=0.2470\columnwidth]{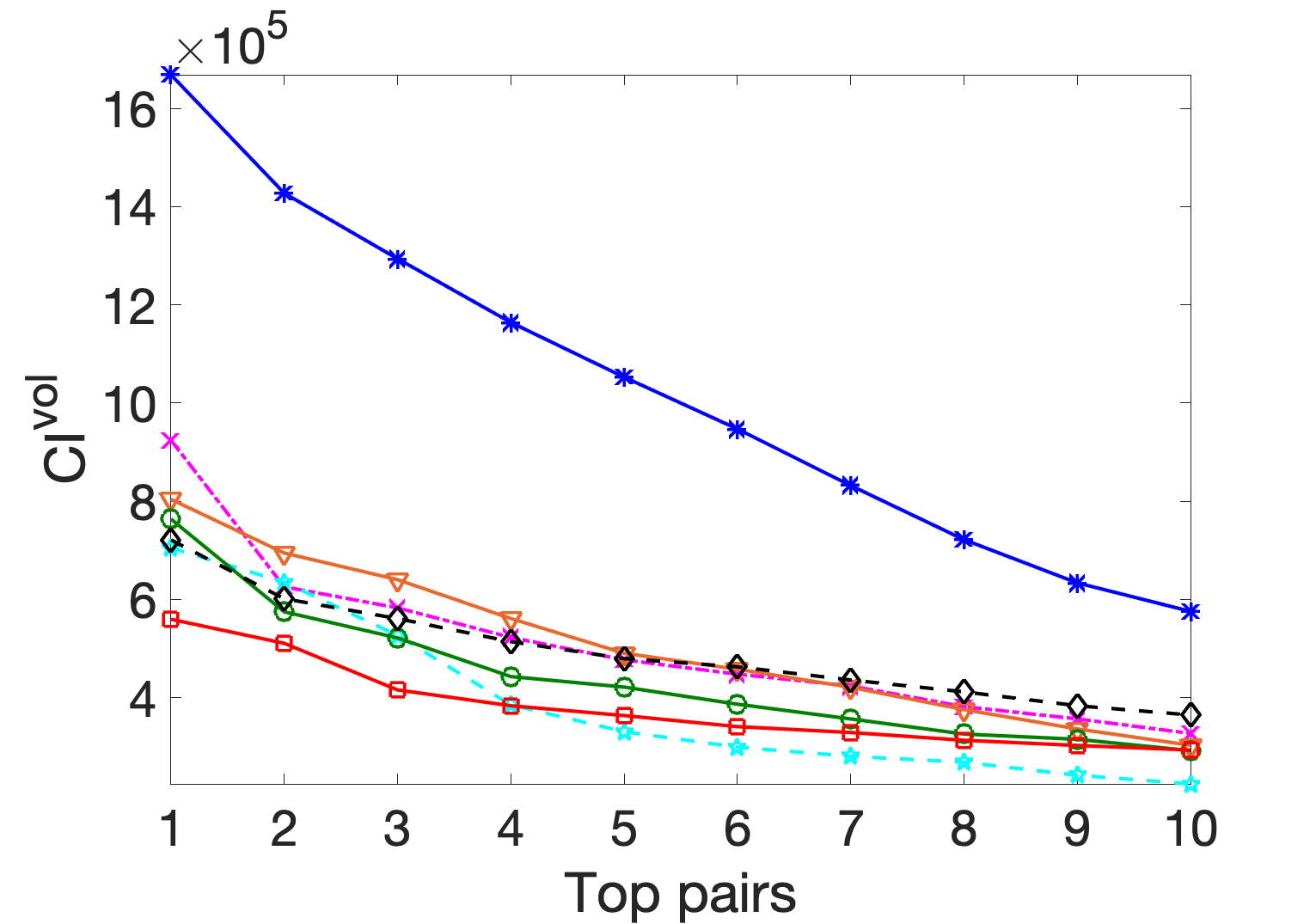}
& \includegraphics[width=0.2470\columnwidth]{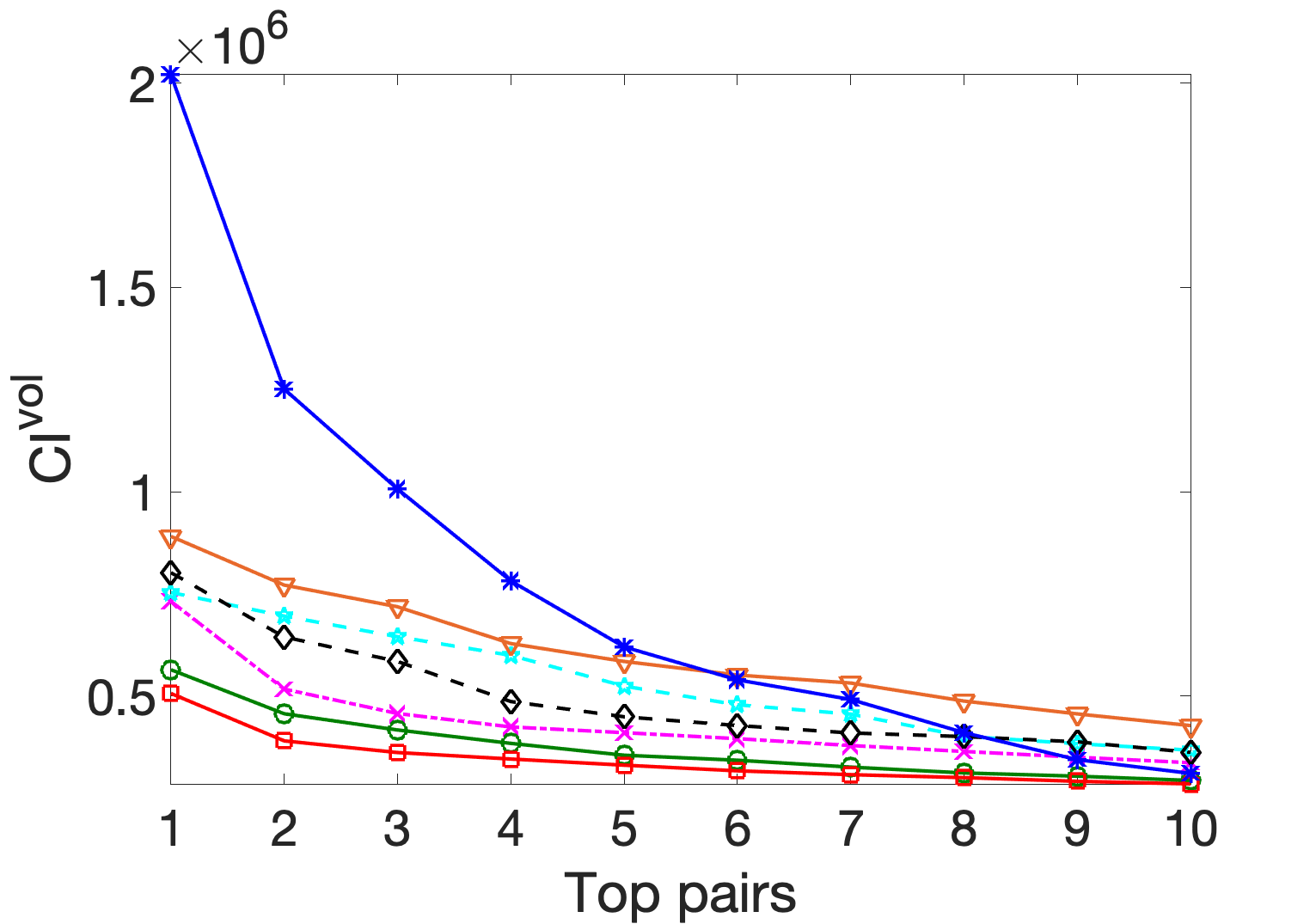}
& \includegraphics[width=0.2470\columnwidth]{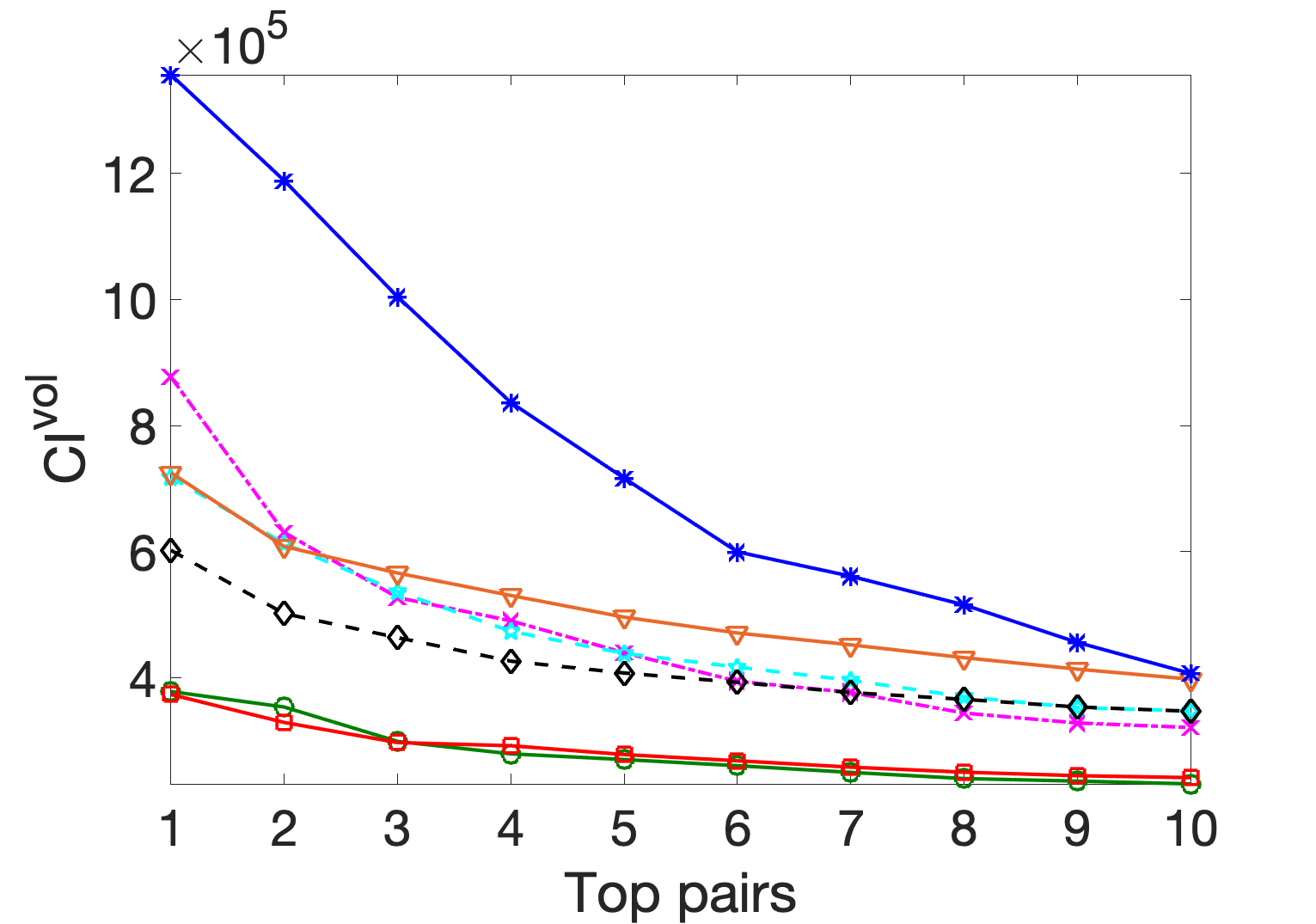} \\ 
\end{tabular}
% \captionsetup{width=0.8\linewidth}
\vspace{-2mm}
\captionof{figure}{The $ \cip $ and   $\civ$ scores attained by the top pairs, for the \textsc{US-Migration-II} data set with $N=3,107$ and $k=\{2,10,20,40\}$ clusters (averaged over 20 runs).}
\label{fig:scanID_8b_mig2_new}
\end{figure*}

 \vspace{-0.2cm}

\begin{figure*}[h]\sffamily
\hspace{-1cm}
\begin{tabular}{l*4{C}@{}}
& $k=2$ & $k=3 $ & $k=5$ & $k=8$ \\ 
& \includegraphics[width=\wid]{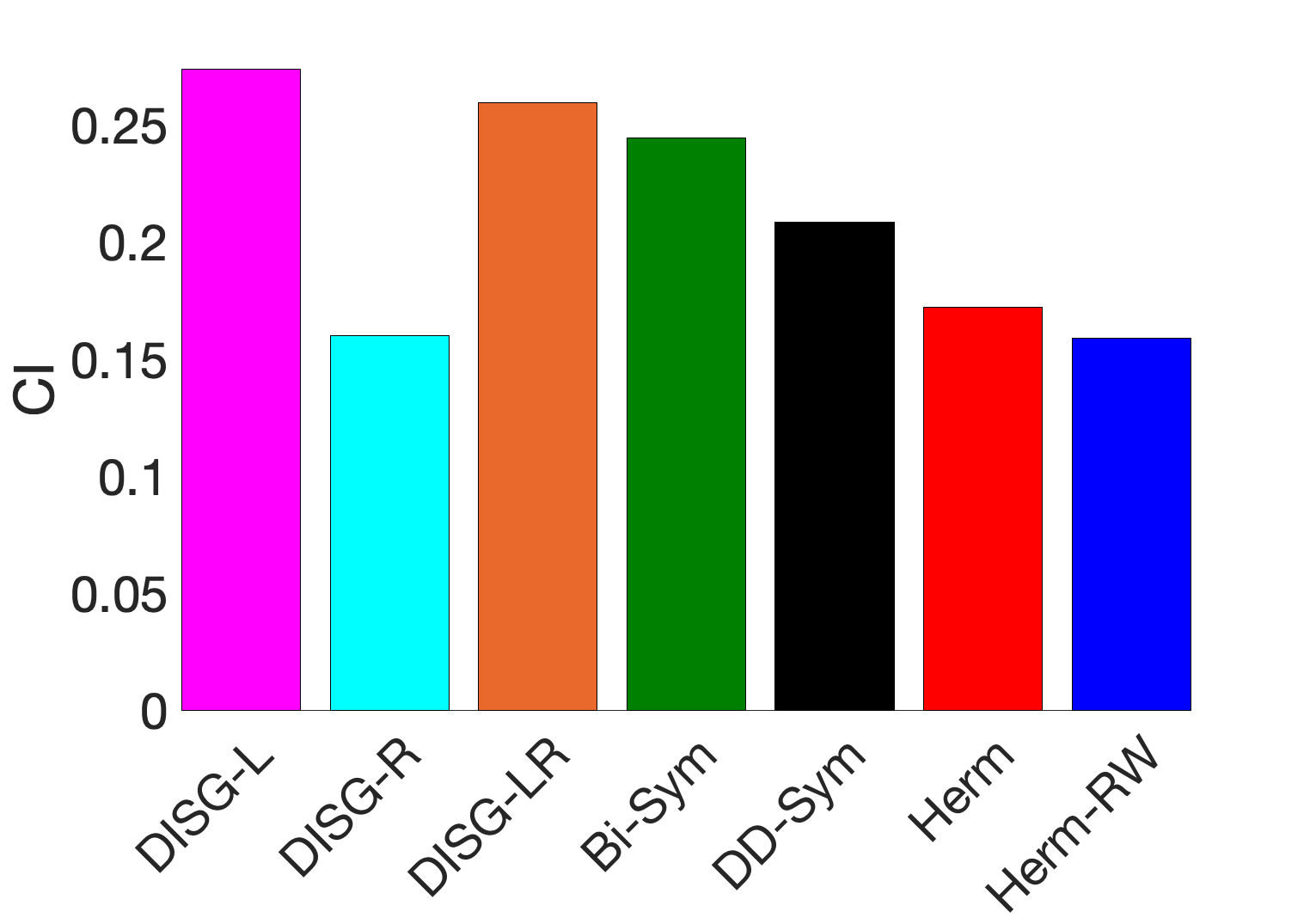}
& \includegraphics[width=\wid]{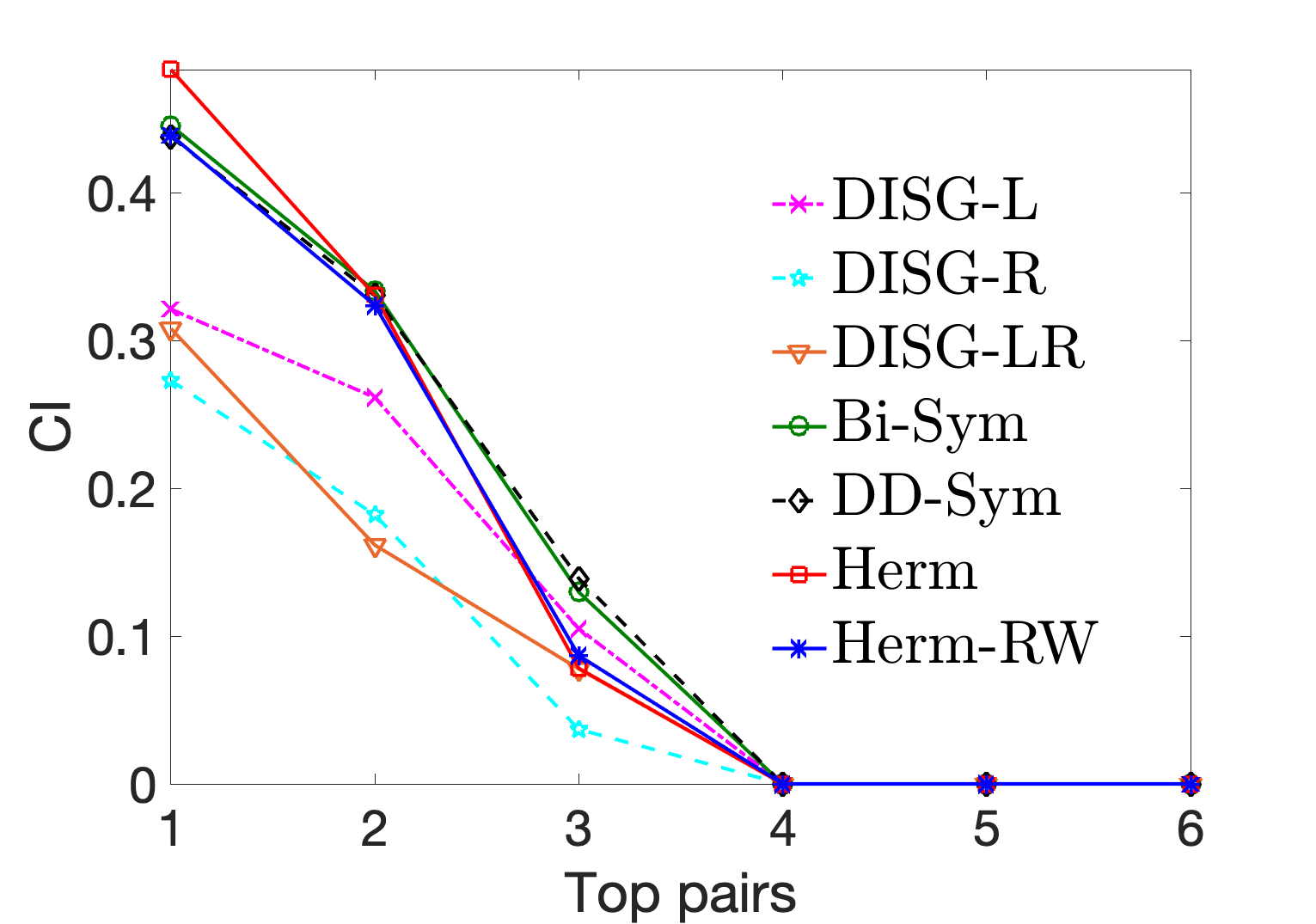}
& \includegraphics[width=\wid]{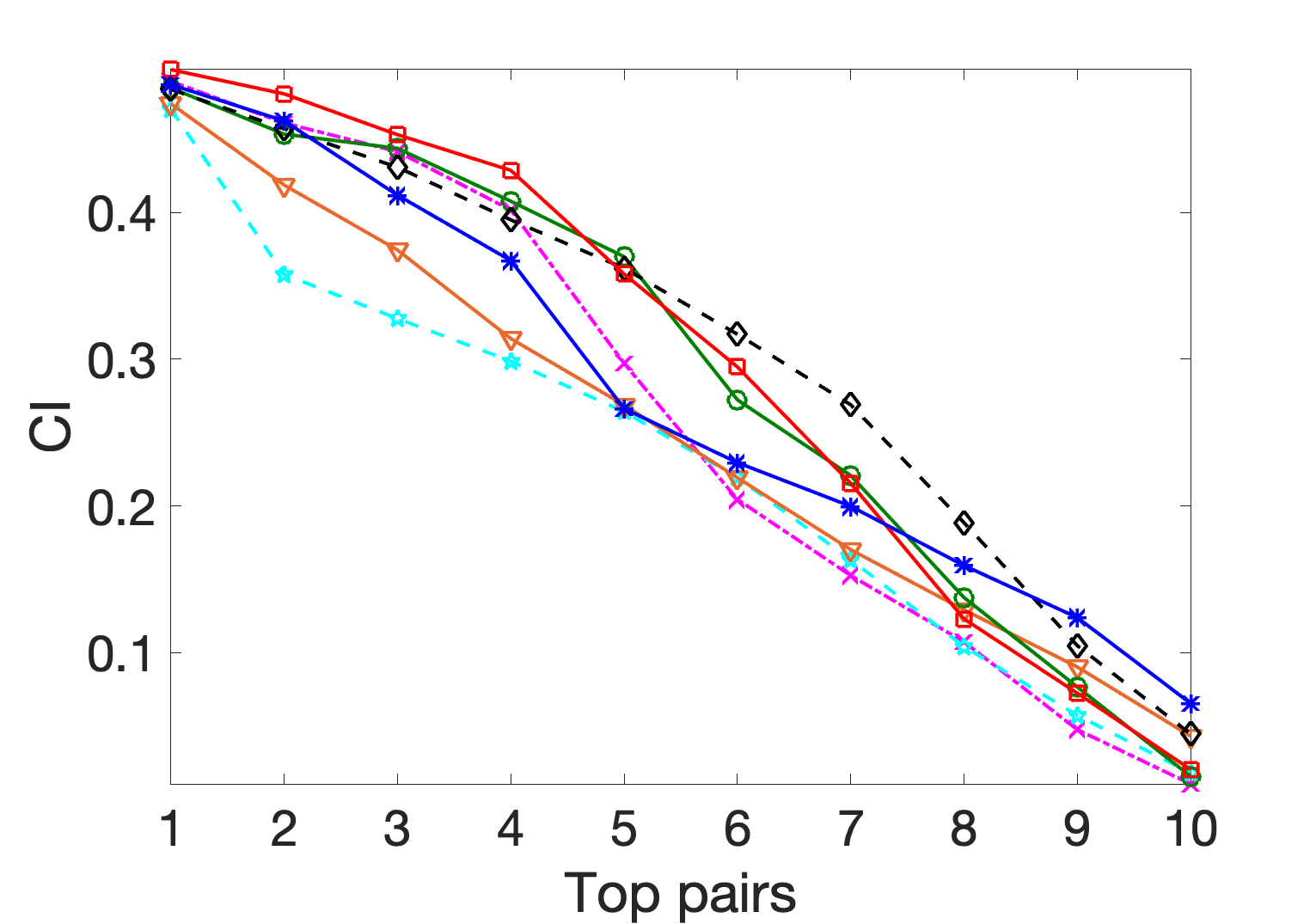}
& \includegraphics[width=\wid]{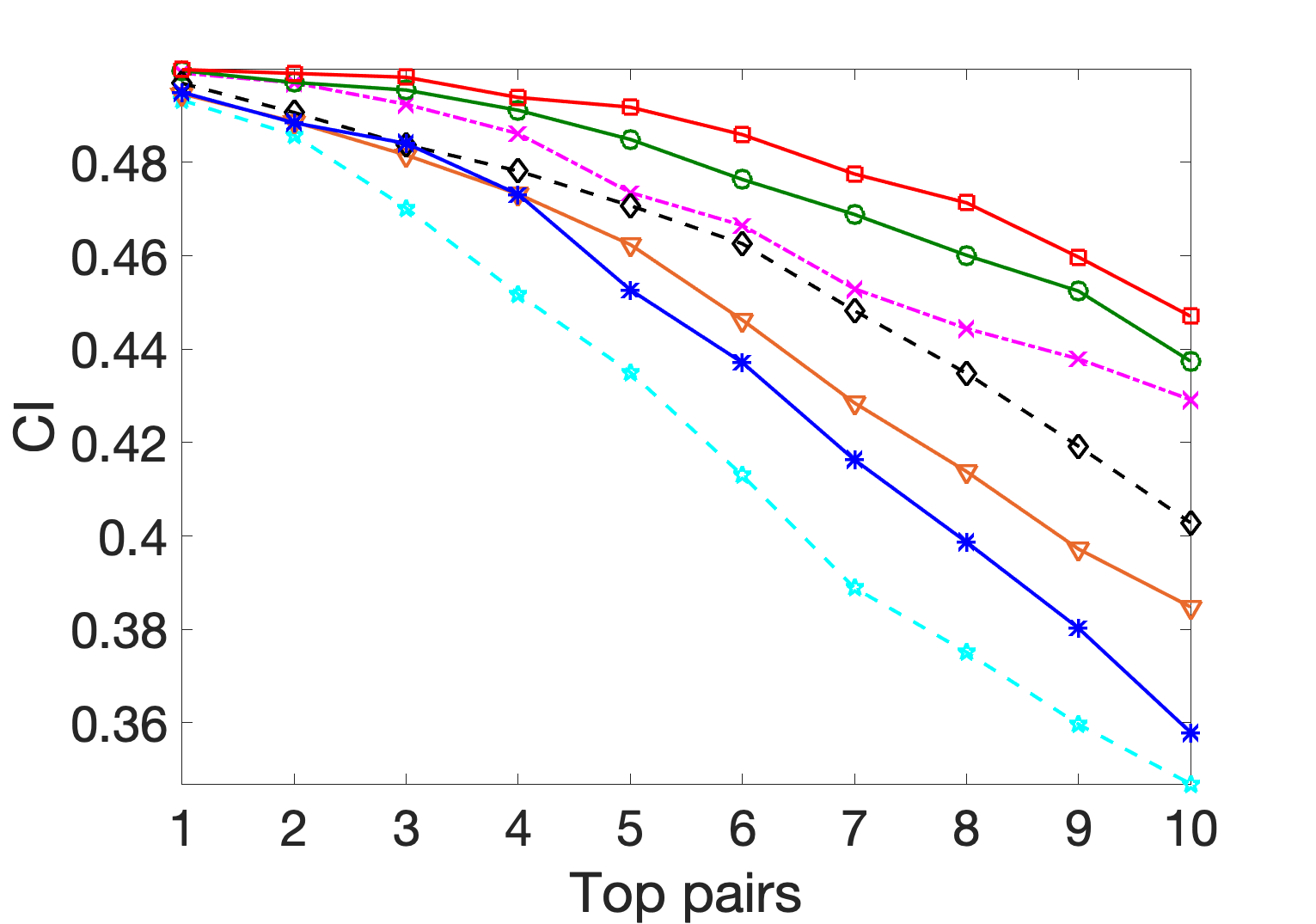} \\ 
& \includegraphics[width=\wid]{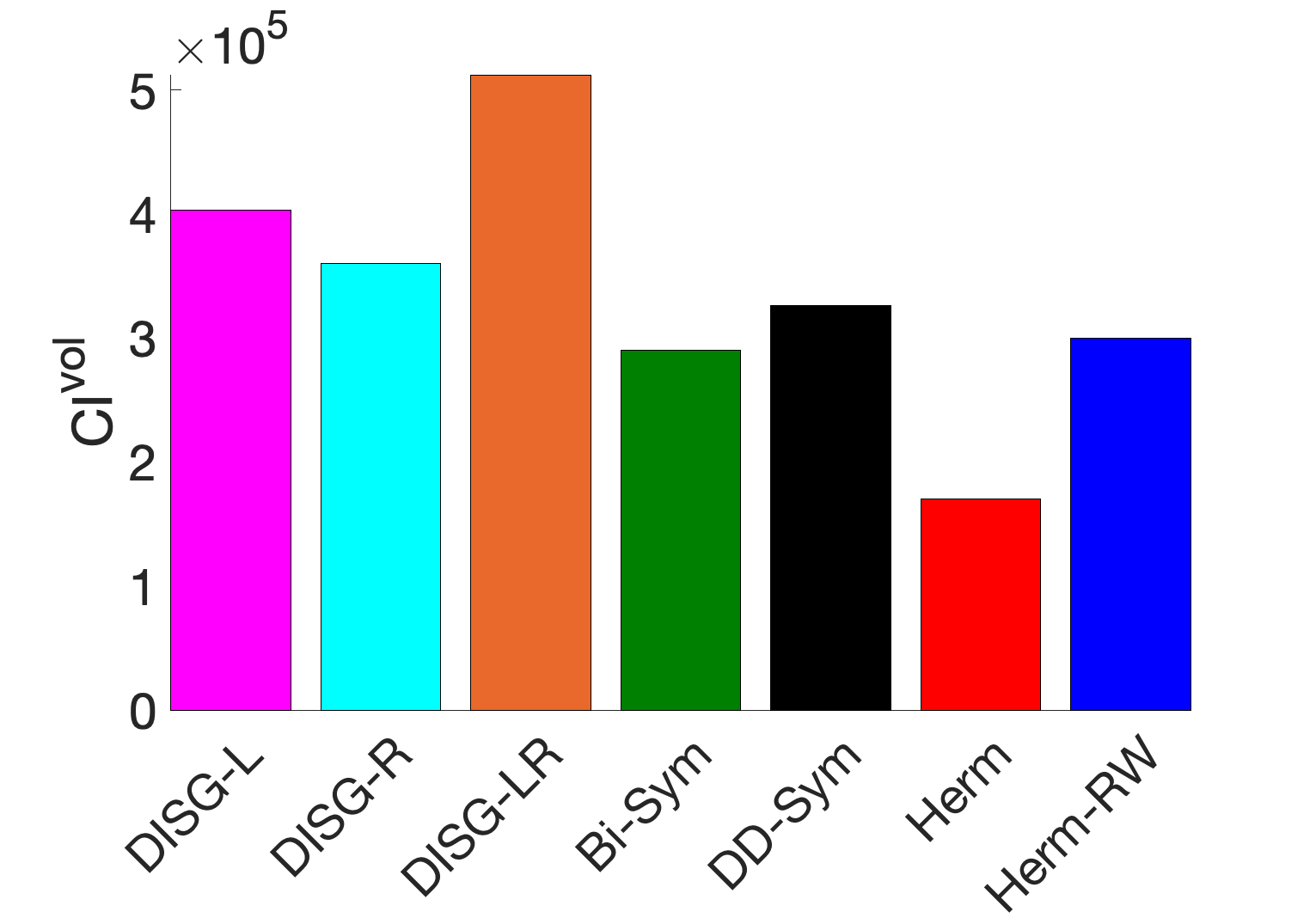}
& \includegraphics[width=\wid]{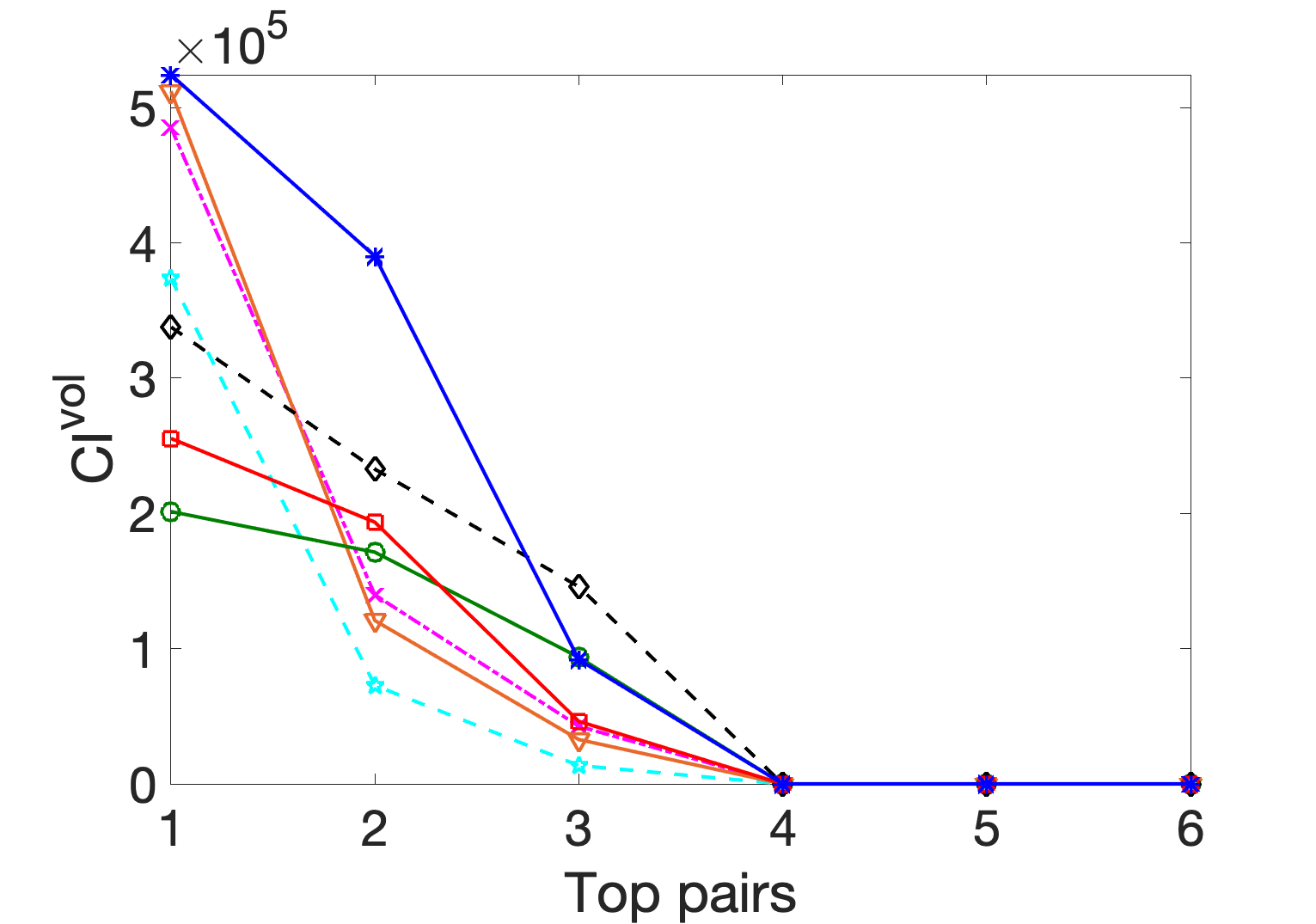}
& \includegraphics[width=\wid]{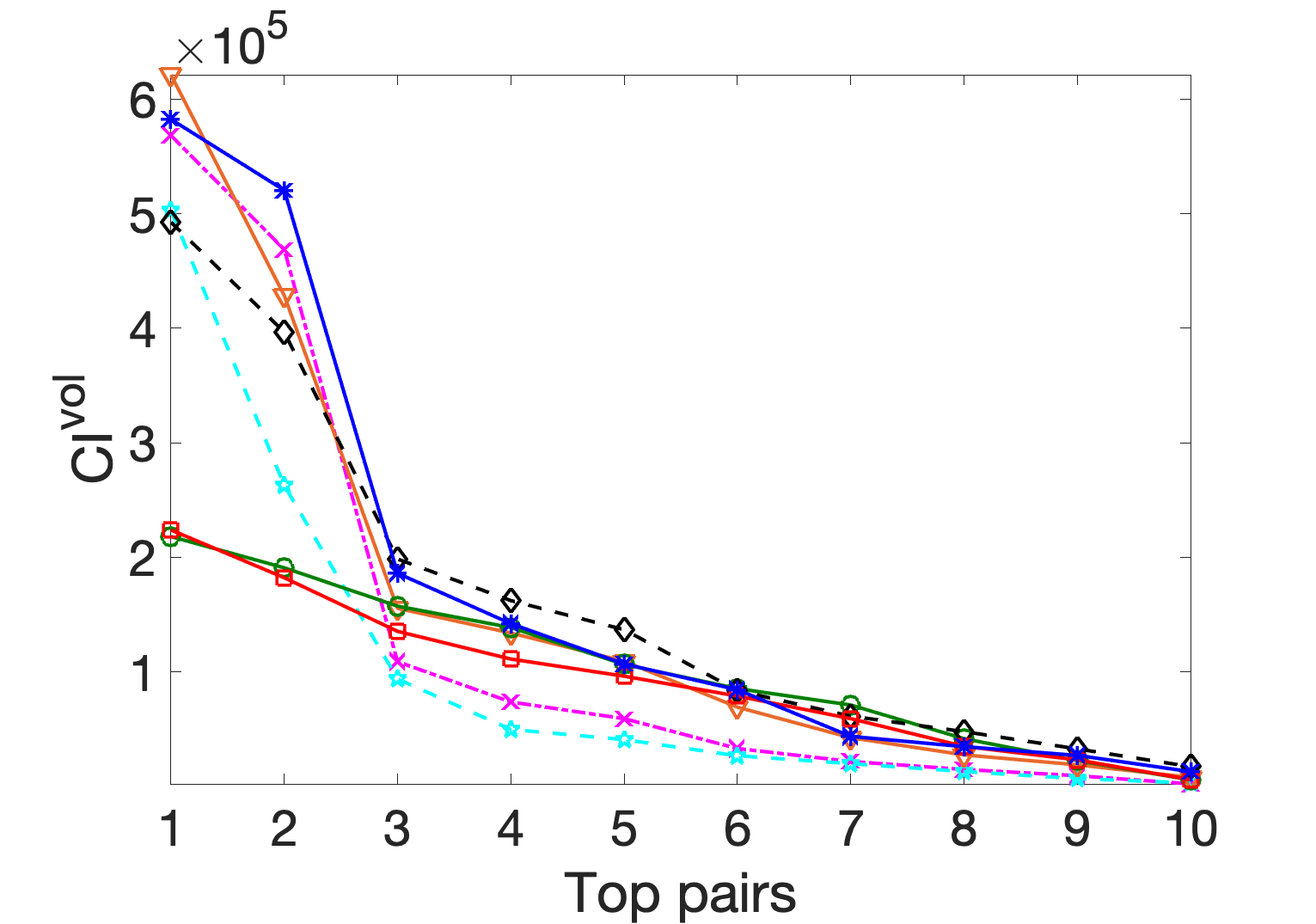}
& \includegraphics[width=\wid]{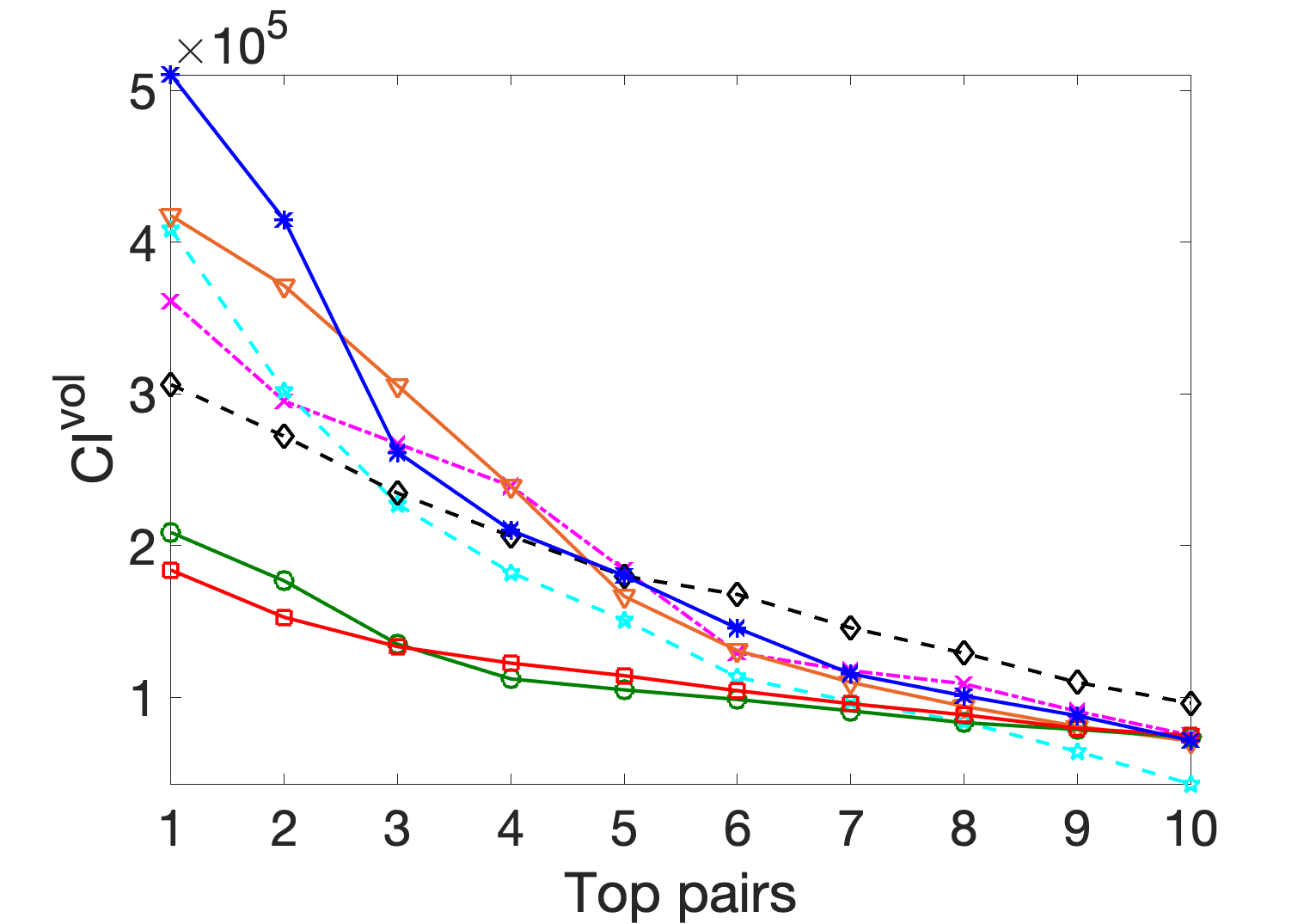} \\ 
\end{tabular}
% \captionsetup{width=0.8\linewidth} 
\vspace{-2mm} 
\captionof{figure}{The $ \cip $ and   $\civ$ scores attained by the top pairs, for the \textsc{UK-Migration}  data set with $N=354$ and $k=\{2,3,5,8\}$ clusters (averaged over 20 runs).}
\label{fig:scanID_10_uk}
\end{figure*}

\vspace{0.1cm}
\begin{wrapfigure}{r}{.25\textwidth}
%\begin{figure}
    \begin{minipage}{\linewidth}
    %\vspace{-0.3cm}
    \centering 
    \includegraphics[width=0.7\columnwidth]{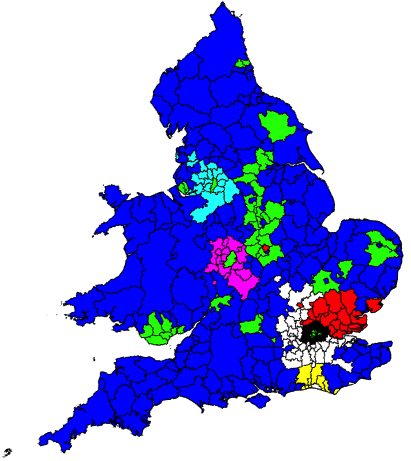}  % \subcaption{$p = 0.01$}  % \label{fig:5a}  % \par\vfill 
    %  \hspace{-10mm} 
   % \includegraphics[width=0.49\columnwidth]{Figures/scanID_1b_Kk/scanID_1b_n100_k50_p0p02_Kk_nrReps10_ARI.png}  % \subcaption{$p = 0.02$} % \label{fig:5b}
% Note: if we turn on the \subcaption{} - the two figures end up one under the other...
\end{minipage}
\caption{\small  Clustering structure recovered by \textsc{Herm-RW} ($k=8$) for \textsc{UK-Migration}.}
\vspace{-3mm}
\label{fig:uk_clist_k8}
\end{wrapfigure}

\noindent  \textbf{\textsc{UK-Migration}}: Another data set we considered is the \textsc{UK-Migration} network with $N=354$, which captures in a directed graph the number of people who migrated between local authority Districts in the UK, aggregated over the interval 2012-2017 \cite{UKmigdata}.  
 Figure \ref{fig:scanID_10_uk} shows the $\cip$ and $\civ$ scores for the top pairs, for varying values of $k$. For $k=2$, \textsc{DISG-L} and \textsc{DISG-LR} are the best performing methods.
For $k=\{3,5,8\}$, a number of methods perform comparably well, with \textsc{Herm-RW} being the best performer in terms of the $\civ$ scores.
% and \textsc{Herm-Sym} ranks second for $k=\{3,8\}$. 
% For $k=10$, the top three methods are   \textsc{DISG-L}, \textsc{DISG-LR} and \textsc{Herm-RW}.
Finally, Figure~\ref{fig:uk_clist_k8} shows the clustering recovered by \textsc{Herm-RW} with $k=8$ clusters, highlighting the Greater London metropolitan area,  as well as counties such as Essex, Surrey, West Sussex and Oxfordshire. 
 
% \clearpage
\bigskip

% (3)   kaiser2006nonoptimal
\noindent  \textbf{\textsc{c-Elegans}}:  The last data set we studied is the \textsc{c-Elegans} neural connectome network, which encodes connection between the neurons in a directed network \cite{white86a}. This popular data set \cite{snapnets}, also considered in \cite{co-clustering}, highlights significant dissimilarities between the sending and receiving patterns in the neural network. Figure \ref{fig:scanID_11_cele} compares the $\cip$ and $\civ$ scores corresponding to the top pairs, across all algorithms and for various values of $k$. For $k=2$, respectively $k=3$, \textsc{Herm-RW}  is the best performer, followed closely by \textsc{Bi-Sym}, resp. \textsc{Herm}, while the rest of the algorithms perform significantly worse. For higher $k=\{5,10\}$, results are mixed, with a number of methods performing similarly well.

\begin{figure*}[h] \sffamily
\hspace{-1cm}
\begin{tabular}{l*4{C}@{}}
 & $k=2$ & $k=3 $ & $k=5$ & $k=10$ \\ 
& \includegraphics[width=\wid]{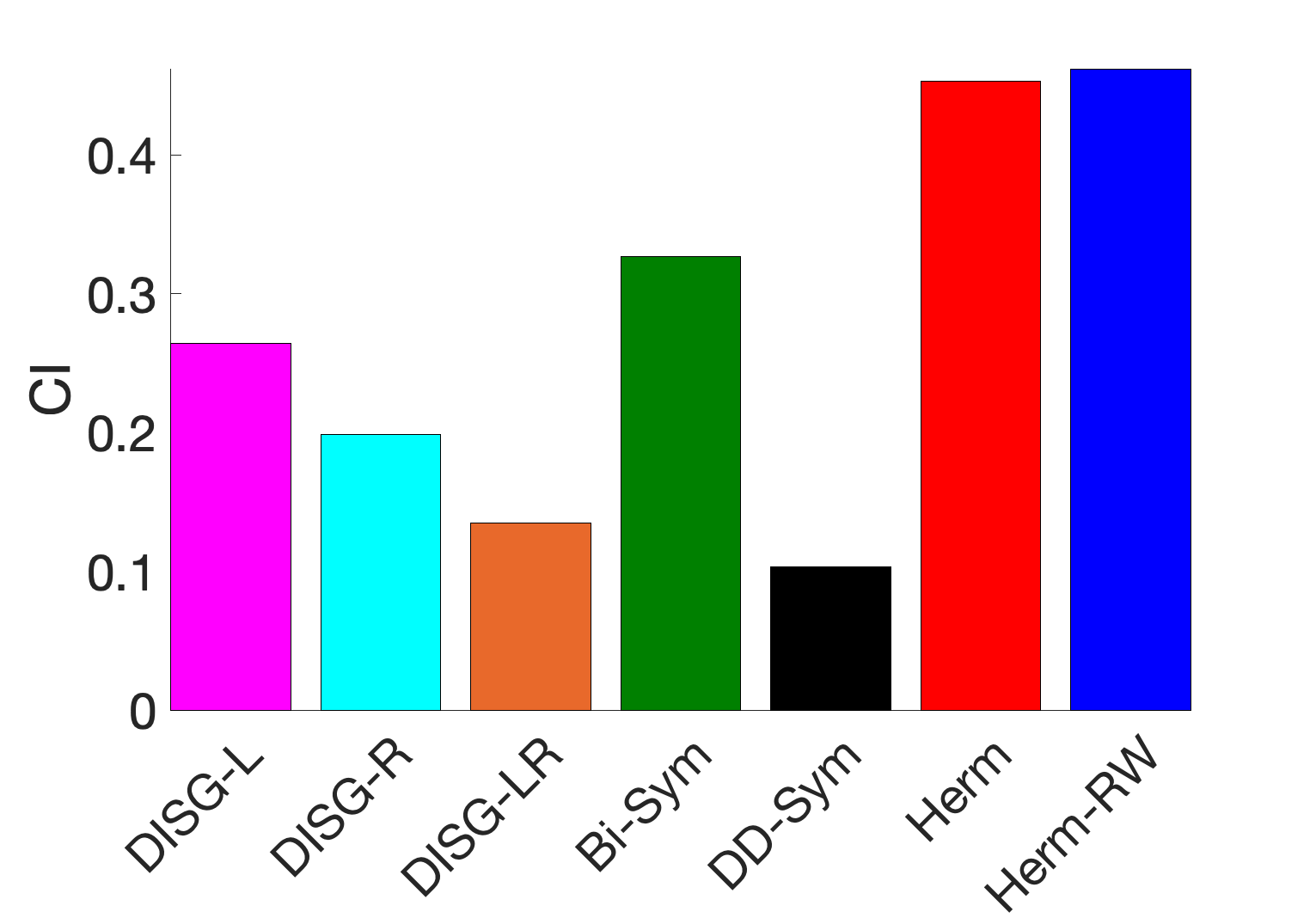} 
& \includegraphics[width=\wid]{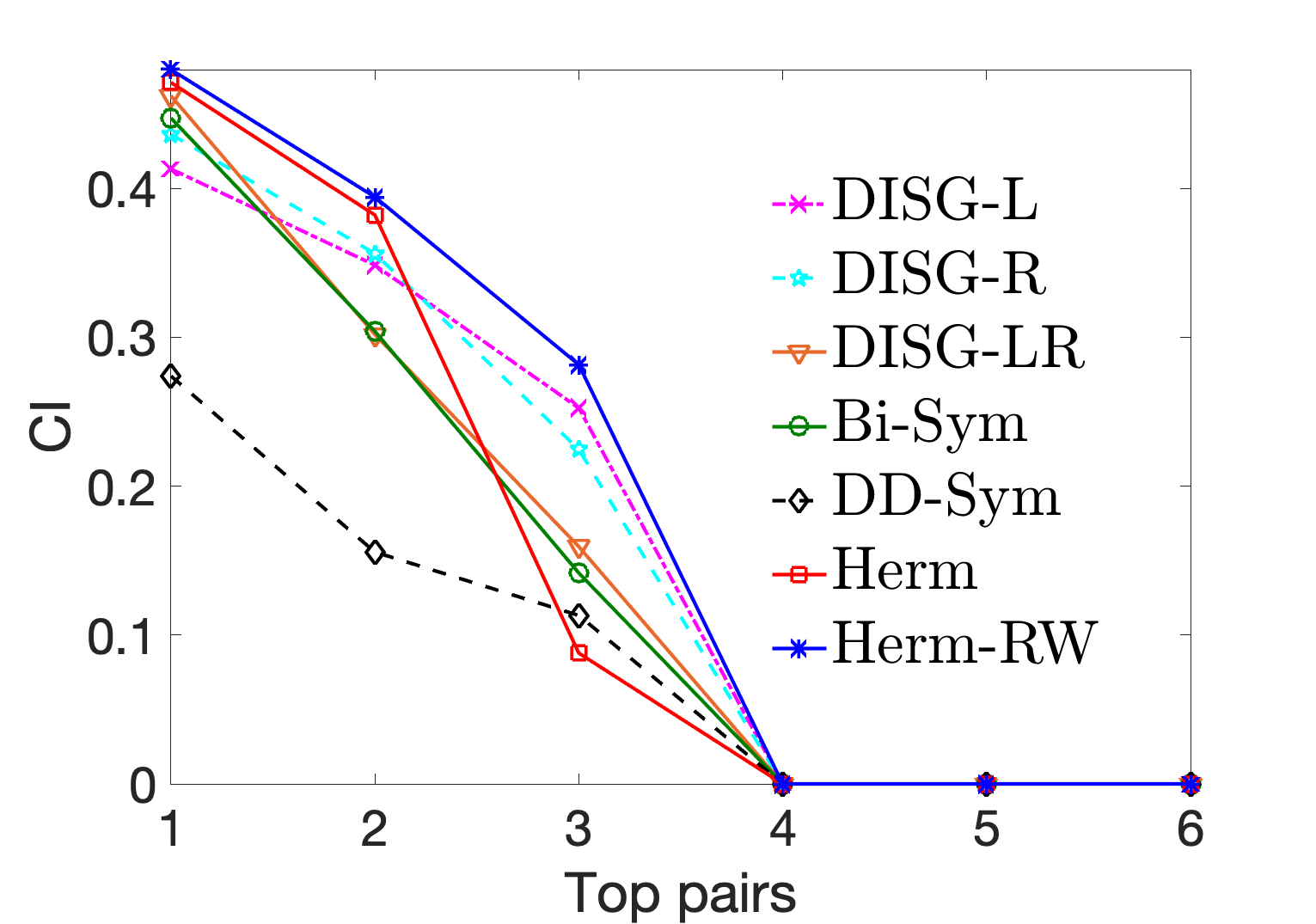} 
& \includegraphics[width=\wid]{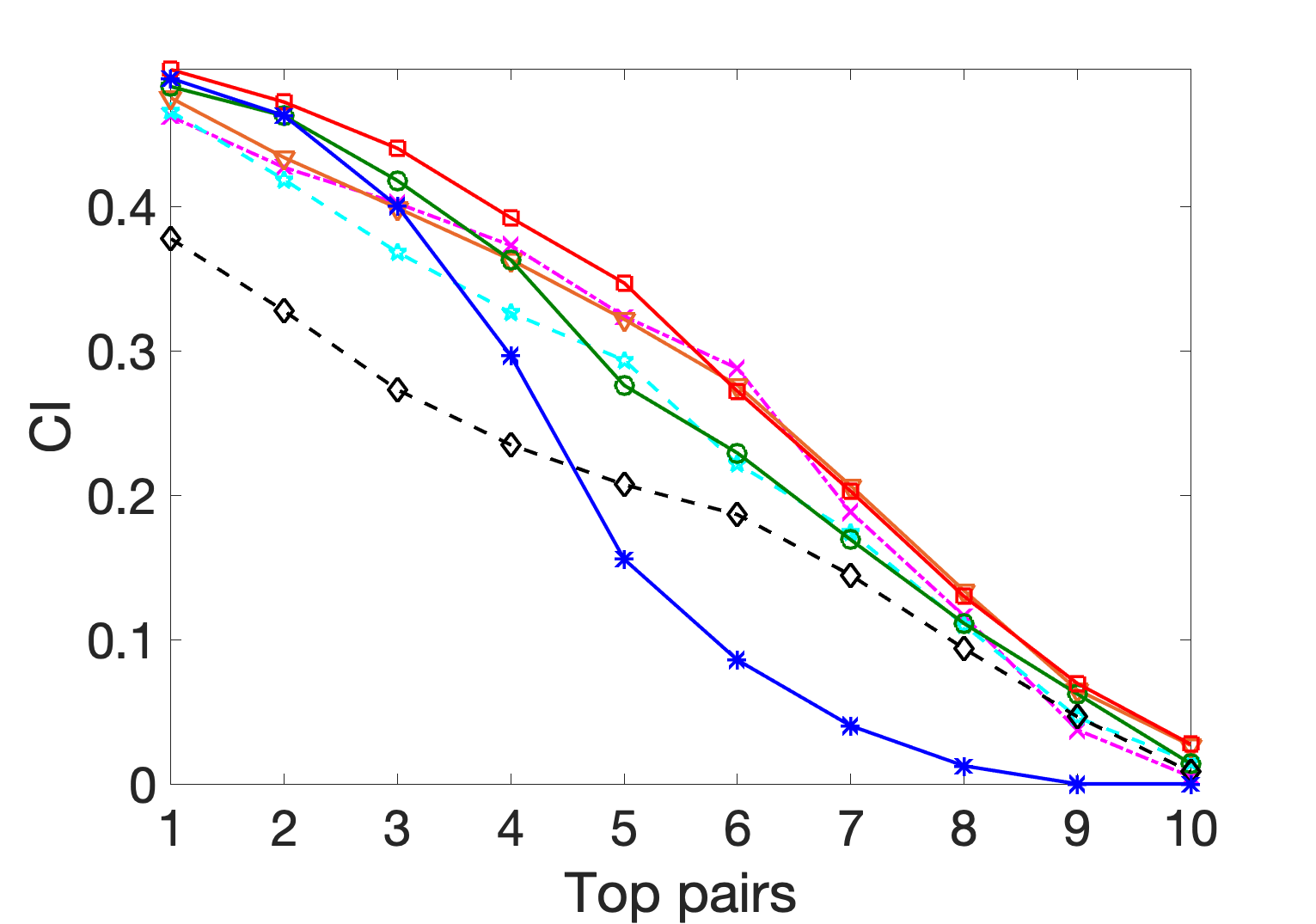}
& \includegraphics[width=\wid]{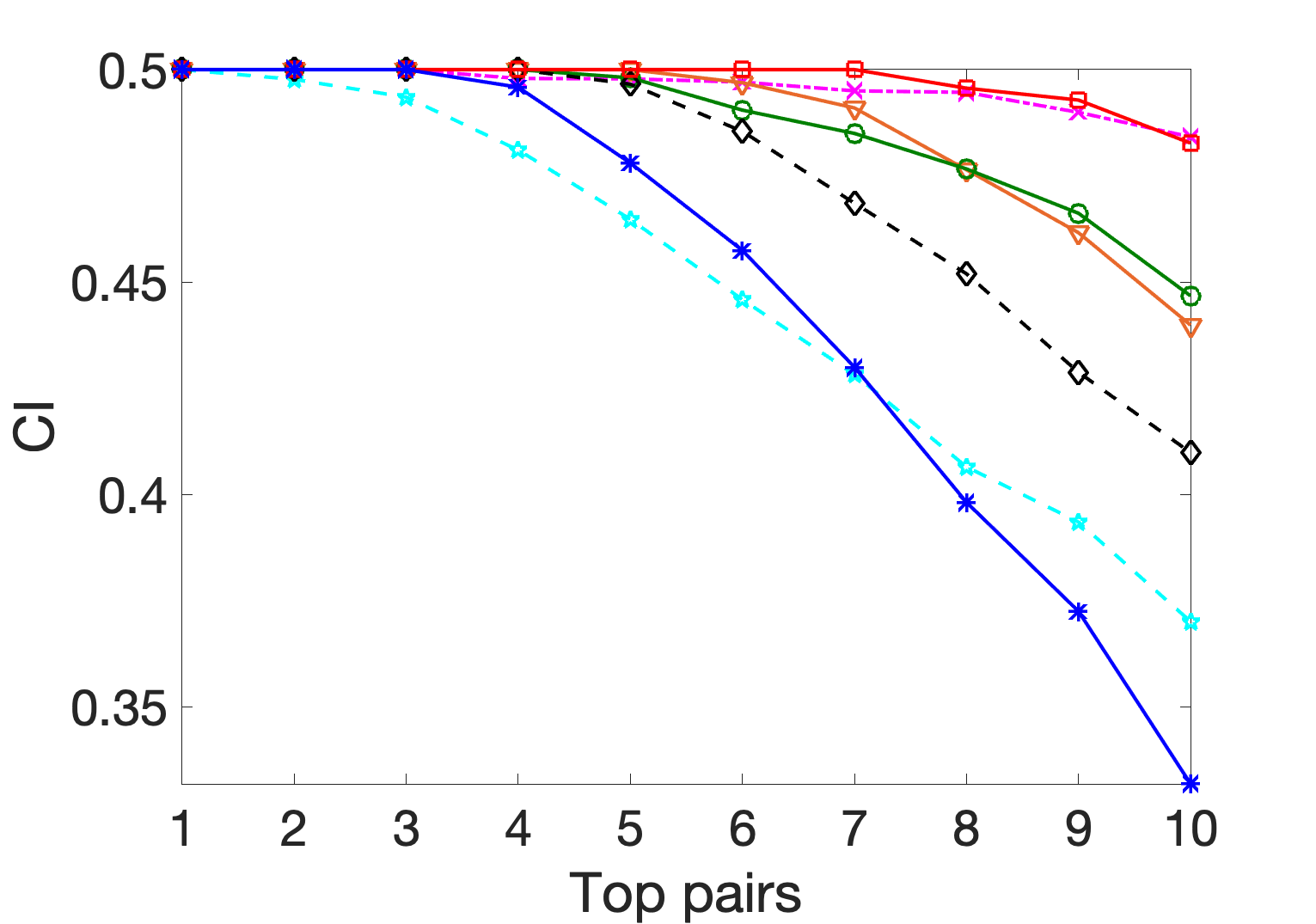} \\ 
& \includegraphics[width=\wid]{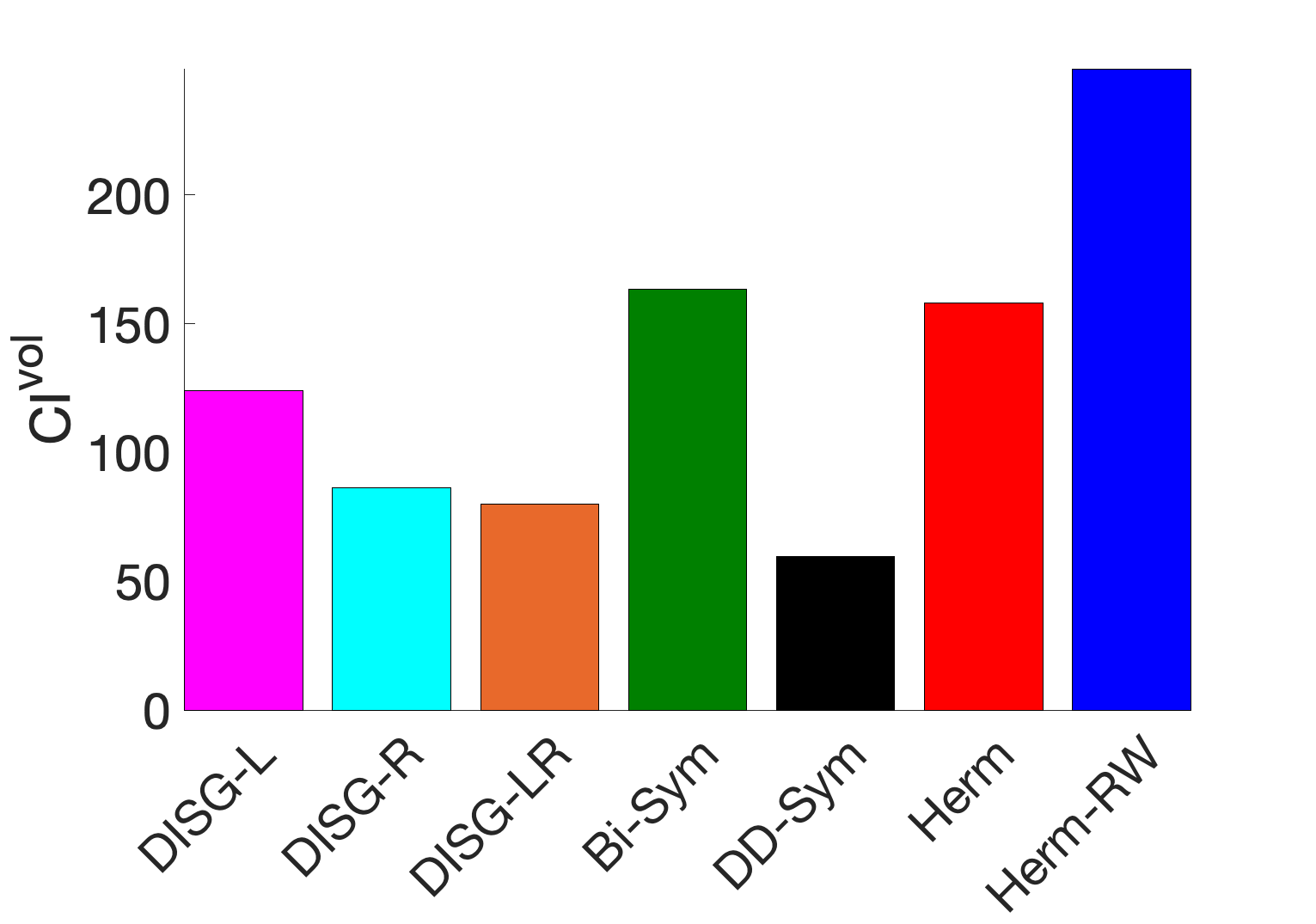}
& \includegraphics[width=\wid]{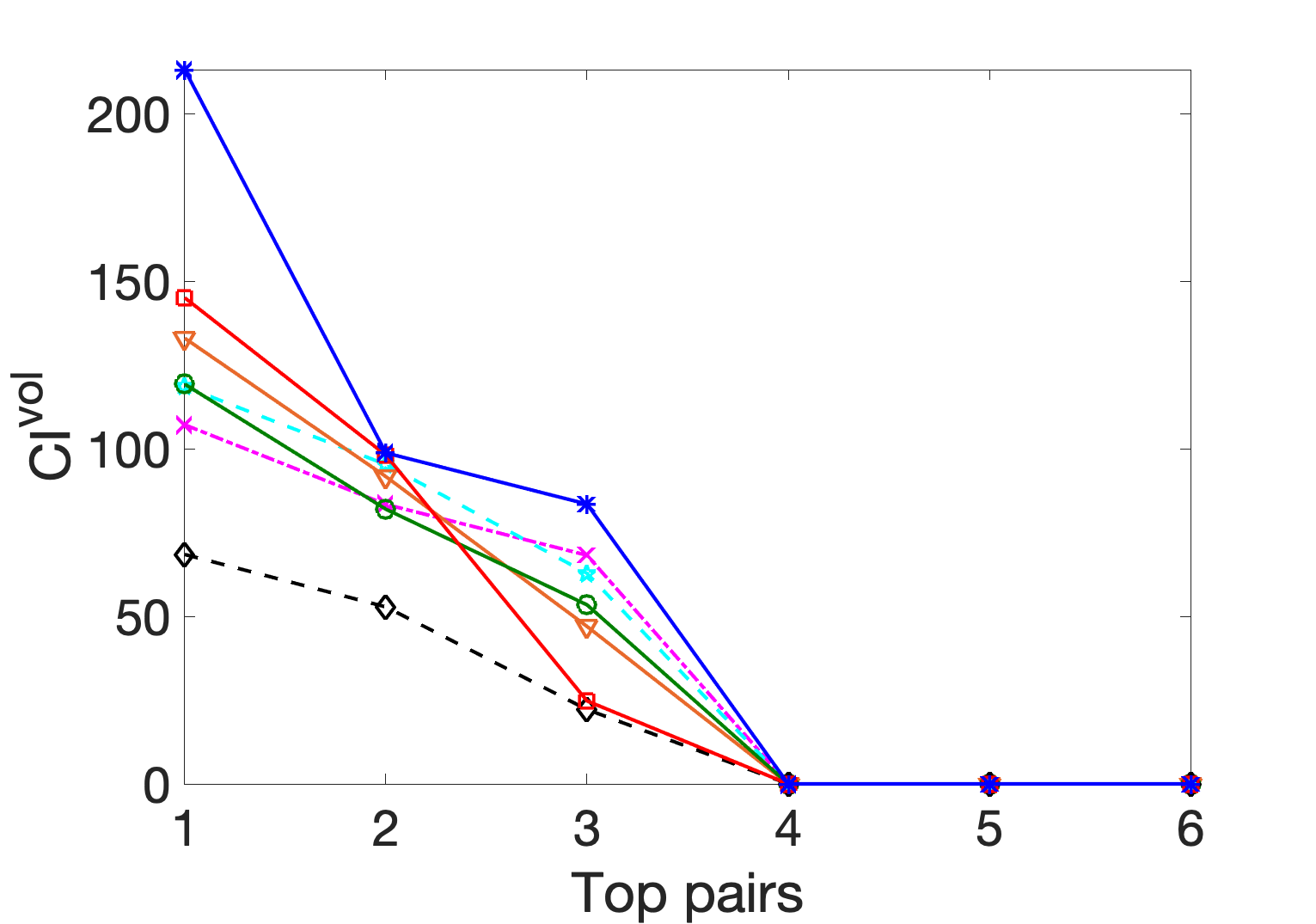}
& \includegraphics[width=\wid]{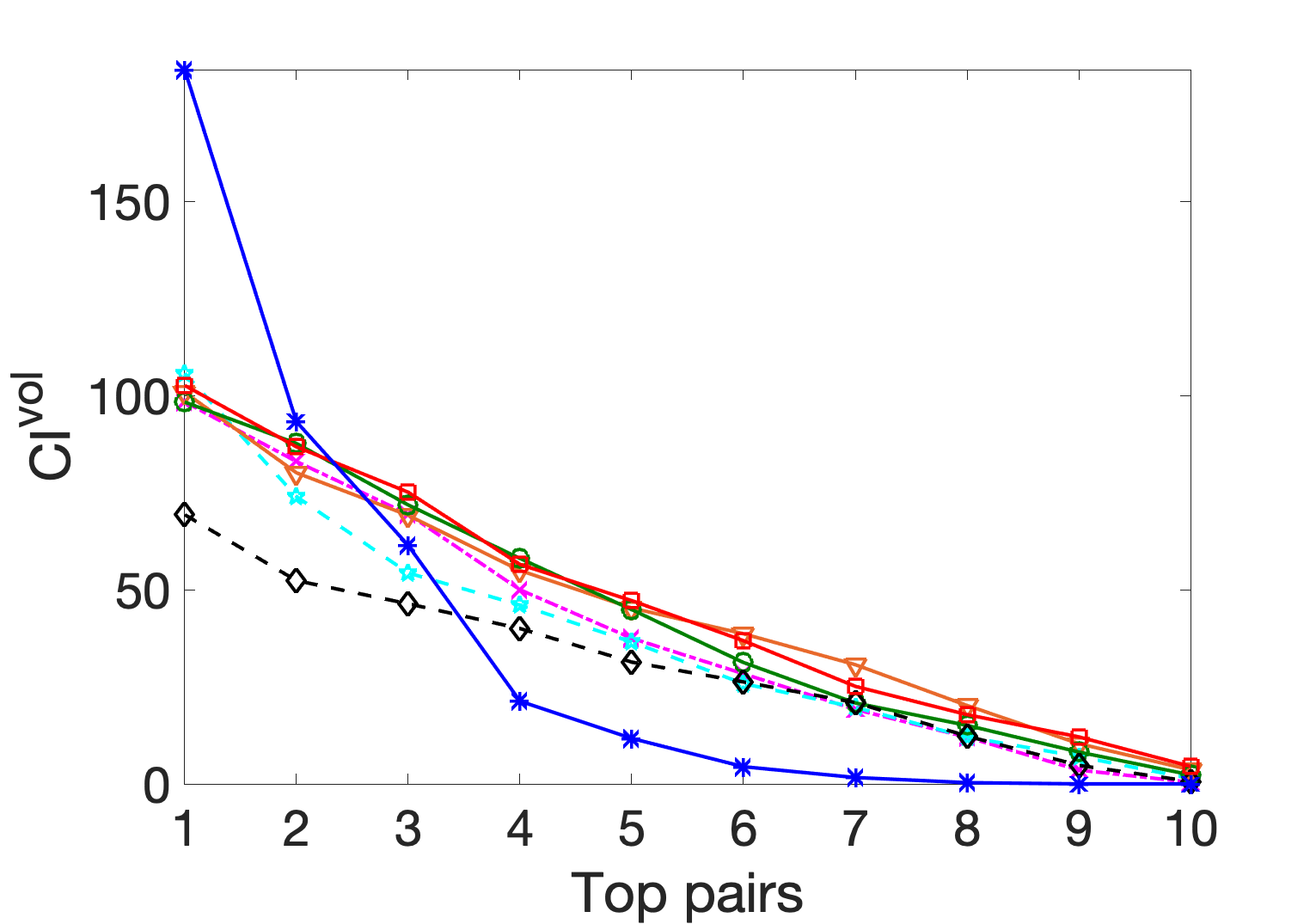}
& \includegraphics[width=\wid]{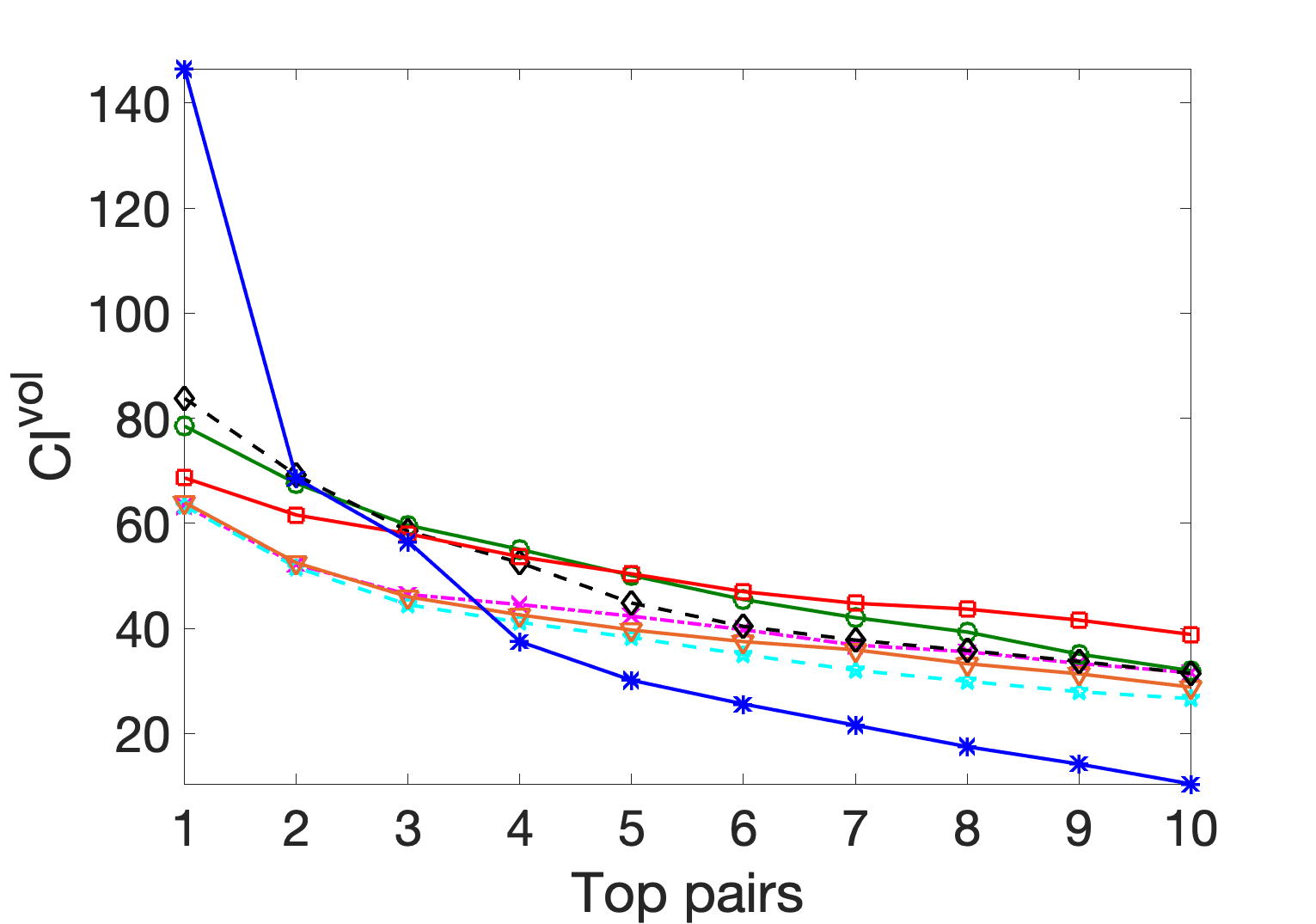} \\ 
\end{tabular}
% \captionsetup{width=0.8\linewidth}
\vspace{-2mm}
\captionof{figure}{The $ \cip $ and   $\civ$ values attained by the top pairs, for the \textsc{c-Elegans} data set with $N=354$ and $k=\{2,3,5,10\}$ clusters (averaged over 20 runs).}
\label{fig:scanID_11_cele}
\end{figure*}

\iffalse
% 11 cele: 2 3 5 7 10 15
\newcommand{\wid}{1.25in}
\newcolumntype{C}{>{\centering\arraybackslash}m{\wid}}
\begin{table*}\sffamily
\hspace{-11mm}
\begin{tabular}{l*2{C}@{}}
 & $k=10$ & $k=15$ \\ 
& \includegraphics[width=0.2470\columnwidth]{Figures/topk_pairs/cele_k10_Avg20_TopCI.png}
& \includegraphics[width=0.2470\columnwidth]{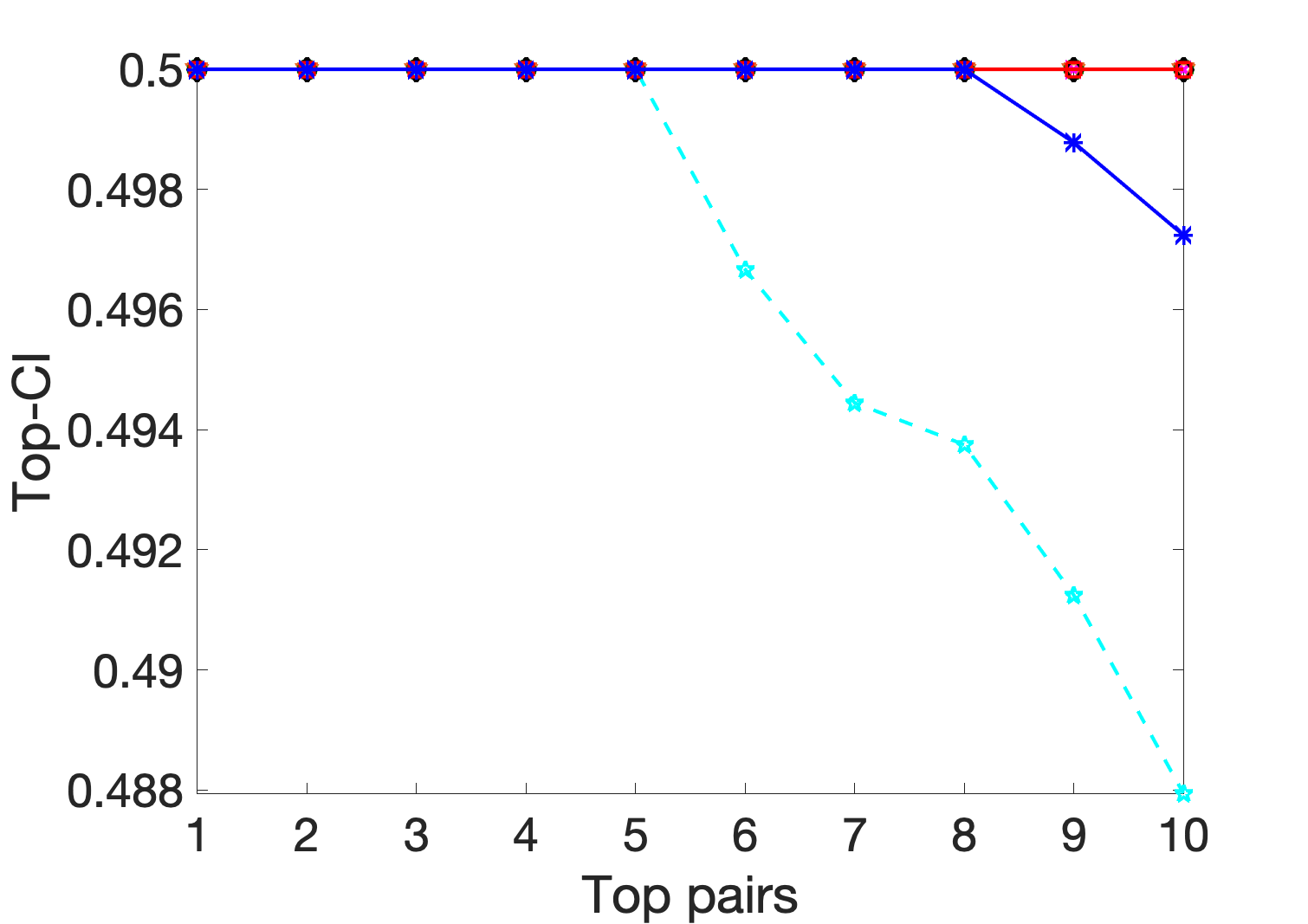} \\ 
% 
& \includegraphics[width=0.2470\columnwidth]{Figures/topk_pairs/cele_k10_Avg20_TopCIvol.png}
& \includegraphics[width=0.2470\columnwidth]{Figures/topk_pairs/cele_k15_Avg20_TopCIvol_NoLeg.png} \\ 
\end{tabular}
% \captionsetup{width=0.8\linewidth}
\vspace{-2mm}
\caption{\small The $ \cip $ and  $\civtop$ objective function values for the \textsc{CELE} data set with $N=354$  (averaged over 20 runs).}
\label{fig:scanID_11_cele}
\end{table*} 
\fi

% \iffalse 
% \clearpage 
% \bigskip

\renewcommand{\wid}{1.5in}
%\newcolumntype{C}{>{\centering\arraybackslash}m{\wid}}
\begin{figure}[t]\sffamily
\hspace{-3mm}
 % LOUT_END:     \begin{tabular}{l*5{C}@{}}
   % LOUT_END:   & I & II & III & IV & V \\ 
% \begin{tabular}{l*4{C}@{}}
\begin{tabular}{l*3{C}@{}}
   & I & II & III  \\  
% 
% \iffalse     % LOUT_END: 
\small{\textsc{NAIVE}}
& \includegraphics[width=\wid]{{Figures/mig1Top5/mig1_k10_Naive__TopIFMpair1.jpg}} 
& \includegraphics[width=\wid]{{Figures/mig1Top5/mig1_k10_Naive__TopIFMpair2.jpg}} 
& \includegraphics[width=\wid]{{Figures/mig1Top5/mig1_k10_Naive__TopIFMpair3.jpg}} \\
% & \includegraphics[width=\wid]{{Figures/mig1Top5/mig1_k10_Naive__TopIFMpair4.jpg}} 
% & \includegraphics[width=\wid]{{Figures/mig1Top5/mig1_k10_Naive__TopIFMpair5.jpg}} \\ 
% \fi 
%
% \iffalse 
\small{\textsc{DISGL}}
& \includegraphics[width=\wid]{{Figures/mig1Top5/mig1_k10_DISGL__TopIFMpair1.jpg}} 
& \includegraphics[width=\wid]{{Figures/mig1Top5/mig1_k10_DISGL__TopIFMpair2.jpg}} 
& \includegraphics[width=\wid]{{Figures/mig1Top5/mig1_k10_DISGL__TopIFMpair3.jpg}}  \\
% & \includegraphics[width=\wid]{{Figures/mig1Top5/mig1_k10_DISGL__TopIFMpair4.jpg}} 
% & \includegraphics[width=\wid]{{Figures/mig1Top5/mig1_k10_DISGL__TopIFMpair5.jpg}} \\ 
%
\small{\textsc{DISGR}}
& \includegraphics[width=\wid]{{Figures/mig1Top5/mig1_k10_DISGR__TopIFMpair1.jpg}} 
& \includegraphics[width=\wid]{{Figures/mig1Top5/mig1_k10_DISGR__TopIFMpair2.jpg}} 
& \includegraphics[width=\wid]{{Figures/mig1Top5/mig1_k10_DISGR__TopIFMpair3.jpg}} \\
% & \includegraphics[width=\wid]{{Figures/mig1Top5/mig1_k10_DISGR__TopIFMpair4.jpg}} 
% & \includegraphics[width=\wid]{{Figures/mig1Top5/mig1_k10_DISGR__TopIFMpair5.jpg}} \\ 
% 
% \fi 
% 
\small{\textsc{DISGLR}}
& \includegraphics[width=\wid]{{Figures/mig1Top5/mig1_k10_DISGLR__TopIFMpair1.jpg}} 
& \includegraphics[width=\wid]{{Figures/mig1Top5/mig1_k10_DISGLR__TopIFMpair2.jpg}} 
& \includegraphics[width=\wid]{{Figures/mig1Top5/mig1_k10_DISGLR__TopIFMpair3.jpg}} \\
% & \includegraphics[width=\wid]{{Figures/mig1Top5/mig1_k10_DISGLR__TopIFMpair4.jpg}} 
% 
% LOUT_END: & \includegraphics[width=\wid]{{Figures/mig1Top5/mig1_k10_DISGLR__TopIFMpair5.jpg}} \\ 
%
% \iffalse     % LOUT_END: 
\small{\textsc{Bi-Sym}}
& \includegraphics[width=\wid]{{Figures/mig1Top5/mig1_k10_BiSym__TopIFMpair1.jpg}} 
& \includegraphics[width=\wid]{{Figures/mig1Top5/mig1_k10_BiSym__TopIFMpair2.jpg}} 
& \includegraphics[width=\wid]{{Figures/mig1Top5/mig1_k10_BiSym__TopIFMpair3.jpg}} \\
% & \includegraphics[width=\wid]{{Figures/mig1Top5/mig1_k10_BiSym__TopIFMpair4.jpg}} 
% & \includegraphics[width=\wid]{{Figures/mig1Top5/mig1_k10_BiSym__TopIFMpair5.jpg}} \\ 
% \fi 
%
\small{\textsc{DD-Sym}}
& \includegraphics[width=\wid]{{Figures/mig1Top5/mig1_k10_DDSym__TopIFMpair1.jpg}} 
& \includegraphics[width=\wid]{{Figures/mig1Top5/mig1_k10_DDSym__TopIFMpair2.jpg}} 
& \includegraphics[width=\wid]{{Figures/mig1Top5/mig1_k10_DDSym__TopIFMpair3.jpg}} 
% & \includegraphics[width=\wid]{{Figures/mig1Top5/mig1_k10_DDSym__TopIFMpair4.jpg}}  
\\
 % LOUT_END: & \includegraphics[width=\wid]{{Figures/mig1Top5/mig1_k10_DDSym__TopIFMpair5.jpg}} \\ 
% 
\small{\textsc{Herm}}
& \includegraphics[width=\wid]{{Figures/mig1Top5/mig1_k10_Herm__TopIFMpair1.jpg}} 
& \includegraphics[width=\wid]{{Figures/mig1Top5/mig1_k10_Herm__TopIFMpair2.jpg}} 
& \includegraphics[width=\wid]{{Figures/mig1Top5/mig1_k10_Herm__TopIFMpair3.jpg}} 
% & \includegraphics[width=\wid]{{Figures/mig1Top5/mig1_k10_Herm__TopIFMpair4.jpg}}
\\
 % LOUT_END: & \includegraphics[width=\wid]{{Figures/mig1Top5/mig1_k10_Herm__TopIFMpair5.jpg}} \\ 
%
\small{\textsc{Herm-RW}}
& \includegraphics[width=\wid]{{Figures/mig1Top5/mig1_k10_HermRW__TopIFMpair1.jpg}} 
& \includegraphics[width=\wid]{{Figures/mig1Top5/mig1_k10_HermRW__TopIFMpair2.jpg}} 
& \includegraphics[width=\wid]{{Figures/mig1Top5/mig1_k10_HermRW__TopIFMpair3.jpg}} 
% & \includegraphics[width=\wid]{{Figures/mig1Top5/mig1_k10_HermRW__TopIFMpair4.jpg}}
\\
 % LOUT_END: & \includegraphics[width=\wid]{{Figures/mig1Top5/mig1_k10_HermRW__TopIFMpair5.jpg}} \\
% 
% \iffalse     % LOUT_END: 
\small{\textsc{Herm-Sym}}
& \includegraphics[width=\wid]{{Figures/mig1Top5/mig1_k10_HermSym__TopIFMpair1.jpg}} 
& \includegraphics[width=\wid]{{Figures/mig1Top5/mig1_k10_HermSym__TopIFMpair2.jpg}} 
& \includegraphics[width=\wid]{{Figures/mig1Top5/mig1_k10_HermSym__TopIFMpair3.jpg}} \\
% & \includegraphics[width=\wid]{{Figures/mig1Top5/mig1_k10_HermSym__TopIFMpair4.jpg}} 
% & \includegraphics[width=\wid]{{Figures/mig1Top5/mig1_k10_HermSym__TopIFMpair5.jpg}} \\
% \fi 
%
\end{tabular}
% \captionsetup{width=0.99\linewidth}
% \vspace{-1mm}
\captionof{figure}{The top three largest size-normalised cut imbalance pairs for the \textsc{US-Migration} data with $k=10$ clusters, for all the methods considered. Red denotes the source cluster, and blue denotes the destination cluster. For each plot, the bottom left text contains the numerical values (rounded to nearest integer) of the two normalised  $\cis$ and $\civ$ pairwise cut imbalance values, and the bottom right text contains the $\cip$ cut imbalance ratio in $ [0,0.5] $.
% \noteMC{todo: drop color bar and insert the CI values [CI,CIM,CIV] in the figure (title or large text)}.
}
\label{fig:mig1Top5_ALL_Methods}	
\end{figure}

\end{document}